\setlist[enumerate]{leftmargin=.5in}
\setlist[itemize]{leftmargin=.5in}
\newtheorem{theorem}{Theorem}
\theoremstyle{remark}
\newtheorem{remark}{Remark}
\renewenvironment{proof}{{\bf Proof.}}{\qed}
\newcommand{\modified}[1]{\textcolor{blue}{#1}}
\renewcommand{\modified}[1]{#1}
\newcommand{\revised}[1]{\textcolor{blue}{#1}}
\renewcommand{\revised}[1]{#1}
\begin{document}
\title{Inverse Evolution Layers: Physics-informed Regularizers for Image Segmentation}

% \author{Chaoyu Liu\thanks{Department of Applied Mathematics and Theoretical Physics, University of Cambridge, UK (\href{mailto:cl920@cam.ac.uk}{cl920@cam.ac.uk}, \href{mailto:cl647@cam.ac.uk}{cl647@cam.ac.uk}, \href{mailto:cbs31@cam.ac.uk}{cbs31@cam.ac.uk}).}
% \and Zhonghua Qiao\thanks{Department of Applied Mathematics \& Research Institute for Smart Energy, The Hong Kong Polytechnic University, Hong Kong (\href{mailto:zhonghua.qiao@polyu.edu.hk}{zhonghua.qiao@polyu.edu.hk}).}
% \and Chao Li \footnotemark[1]\thanks{School of Science and Engineering \& School of Medicine, University of Dundee, UK.}
% \and Carola-Bibiane Schönlieb\footnotemark[1]}

\author[*]{Chaoyu Liu}
\author[$\dag$]{Zhonghua Qiao}
\author[*,$\ddag$]{Chao Li}
\author[*]{Carola-Bibiane Schönlieb}
\affil[*]{Department of Applied Mathematics and Theoretical Physics, University of Cambridge, UK (\href{mailto:cl920@cam.ac.uk}{cl920@cam.ac.uk}, \href{mailto:cl647@cam.ac.uk}{cl647@cam.ac.uk}, \href{mailto:cbs31@cam.ac.uk}{cbs31@cam.ac.uk}).}
\affil[$\dag$]{Department of Applied Mathematics \& Research Institute for Smart Energy, The Hong Kong Polytechnic University, Hong Kong (\href{mailto:zhonghua.qiao@polyu.edu.hk}{zhonghua.qiao@polyu.edu.hk}).}
\affil[$\ddag$]{School of Science and Engineering \& School of Medicine, University of Dundee.}

\date{}
\maketitle
% \footnotetext[1]{Department of Applied Mathematics and Theoretical Physics, University of Cambridge, UK (\href{mailto:cl920@cam.ac.uk}{cl920@cam.ac.uk}, \href{mailto:cl647@cam.ac.uk}{cl647@cam.ac.uk}, \href{mailto:cbs31@cam.ac.uk}{cbs31@cam.ac.uk}).}
% \footnotetext[2]{Department of Applied Mathematics \& Research Institute for Smart Energy, The Hong Kong Polytechnic University, Hong Kong (\href{mailto:zhonghua.qiao@polyu.edu.hk}{zhonghua.qiao@polyu.edu.hk}).}
% \footnotetext[3]{School of Science and Engineering \& School of Medicine, University of Dundee, UK.}

\begin{abstract}
    \revised{Traditional image processing methods employing partial differential equations (PDEs) offer a multitude of meaningful \modified{regularizers}, along with valuable theoretical foundations for a wide range of image-related tasks. \modified{This makes their integration into neural networks a promising avenue.} In this paper, we introduce a novel regularization approach inspired by the reverse process of PDE-based evolution models. Specifically, we propose inverse evolution layers (IELs), which serve as bad property amplifiers to penalize neural networks of which outputs have undesired characteristics. Using IELs, one can achieve specific regularization objectives and endow neural networks' outputs with corresponding properties of the PDE models. Our experiments, focusing on semantic segmentation tasks using heat-diffusion IELs, demonstrate their effectiveness in mitigating noisy label effects. Additionally, we develop curve-motion IELs to enforce convex shape regularization in neural network-based segmentation models for preventing the generation of concave outputs. Theoretical analysis confirms the efficacy of IELs as an effective regularization mechanism, particularly in handling training with label issues.}
\end{abstract}

% REQUIRED
\textbf{Keywords:} image segmentation, physical-informed regularizers, inverse evolution layers, noisy label\\

% REQUIRED
\textbf{AMS Classification:} 68U10, 65K10, 65M06, 65J20

%%%%%%%%% BODY TEXT
\section{Introduction}
\label{intro}
\revised{Image segmentation is a fundamental task in computer vision, aiming to partition an image into meaningful regions or objects. It plays a crucial role in various applications such as medical image analysis, object recognition, and scene understanding. Traditional approaches to image segmentation often rely on handcrafted features and mathematical models, including variational methods based on partial differential equations (PDEs).}

\revised{Variational energy minimization and other PDE-based methods  \cite{chan2001active,liu2022two,liu2023active,morel2012variational,mumford1989optimal} have long been used in image segmentation tasks. These methods formulate segmentation problems as energy minimization problems, where the energy functional consists of fidelity and regularization components. The fidelity term measures the agreement between the model and the observed data, while the regularization term enforces desired properties on the solution. Classic examples include the Chan-Vese model \cite{chan2001active}, which minimizes the squared distance between the image and an approximated constant within each segment, and imposes length regularization on segment boundaries. The minimization process of the energy functional will drive the segments to evolve according to specific PDEs. In addition to energy-based methods, some PDE-based models directly focus on developing PDEs of segments to do the segmentation \cite{osher1988fronts}. For PDE-based methods, they have a strong theoretical foundation and are well-established in the field of mathematics and physics. This allows for theoretical analysis on properties of the solutions, as well as reduces the complexity of analysis and comprehension of the models' behavior.}

\revised{In recent years, deep learning techniques, particularly deep neural networks, have emerged as powerful tools for image segmentation \cite{badrinarayanan2017segnet,carion2020end,dosovitskiy2020image,isensee2021nnu}. Deep neural networks can automatically learn hierarchical representations from raw image data, enabling them to capture complex patterns and variations in the data. These networks have achieved state-of-the-art performance in various image segmentation tasks, often surpassing traditional PDE-based methods when provided with sufficient training data. However, it can be challenging to mathematically analyze the outputs of neural networks and regulate them to exhibit desired properties.}

\revised{Motivated by the strengths of both traditional mathematical models and data-driven approaches, there is an opportunity to integrate powerful mathematical properties and tools into neural networks for image segmentation tasks. Many efforts have been made to discover connections between PDE-based models and neural networks \cite{celledoni2021structure, weinan2017proposal, lu2018beyond, ruthotto2020deep}, as well as to integrate them into neural networks.}The most direct approach to achieve this integration is to include functions derived from various PDE models in the loss to regularize the network's outputs \cite{bertozzi2016diffuse, chen2019learning}. This loss-inserting method can be designed on a range of PDEs that can be interpreted as gradient flows of specific energy functionals, which is a common feature in energy-based models designed for image segmentation tasks. As an illustration, given an image $I$, an energy-based model can be conceptualized as follows,
\begin{equation}
  \label{energy-based model}
  \min_{\mathbf{u}, \mathbf{f}} E(\mathbf{u}, \mathbf{f}):=E_{fit}(\mathbf{u}, \mathbf{f}, I)+R(\mathbf{u}),
\end{equation}
where $\mathbf{u}$ denotes the image segments and $\mathbf{f}$ are approximating functions for the image $I$ on segments. The terms $E_{fit}(\mathbf{u}, \mathbf{f}, I)$ and $R(\mathbf{u})$ represent the fidelity and regularization components, respectively. \modified{For instance, the energy functional formulated in the Chan-Vese model \cite{chan2001active} is given as follows,
\begin{equation}
  \label{cv model}
  \min_{\mathbf{u}, \mathbf{f}} E_{cv}(\mathbf{u}, \mathbf{f}):=\sum_{i=1}^n \int_{u_i} (c_i - I)^2\mbox{d}x + \sum_{i=1}^n Length(u_i) \mbox{d}x,
\end{equation}
In Chan-Vese model, the fidelity term within each segment $u_i$ is characterized by the squared distance between given image $I$ and an approximated constant $c_i$, and the regularization term is defined as the length of the boundaries of $u_i$'s.} A straightforward way to incorporate the energy-based models into neural networks \modified{is to use (\ref{energy-based model}) as a loss function for neural networks.} Some related work can be found in \cite{bertozzi2016diffuse, chen2019learning, liu2022deep}. \modified{It is also worth mentioning} that all the loss-inserting methods are tailored to energy-based PDE models. \modified{However, numerous PDEs lack specific energy characteristics and cannot be depicted by gradient flows of an explicit energy. For instance, integrating curvature flows based non-energy PDE models into neural networks through loss-inserting methods is a notably challenging task. The integration of non-energy PDEs into the loss function may be achievable by introducing an additional time variable and PDE residuals, akin to the methodology applied in  Physics-Informed Neural Networks (PINNs) \cite{raissi2019physics}. Nevertheless, it can significantly increase the training complexity and computational costs.}

In addition to loss-inserting methods, researchers have attempted to modify the architectures of neural networks to enhance their interpretability. For example, Chen $et$ $al.$ \cite{chen2015learning} developed a neural network for image restoration by parameterizing a reaction-diffusion process derived from conventional PDE-based image restoration models. Similar techniques that employ convolutional neural networks (CNNs) to learn parameters in active contour models can be found in \cite{cheng2019darnet,hatamizadeh2019end,marcos2018learning}. In \cite{lunz2018adversarial}, Lunz $et$ $al.$ combine the neural networks with variational models for inverse problems by using a neural network to learn regularizers in the models. Ruthotto $et$ $al.$ \cite{ruthotto2020deep} imparted mathematical properties of different types of PDEs by imposing certain constraints on layers. Recently, Tai $et$ $al.$ \cite{tai2023pottsmgnet} proposed the PottsMGNet based on a control problem and a first-order operator splitting algorithm. While networks with specialized architectures or constraints derived from PDE models have seen significant improvements in interpretability, their performance and learning capabilities can be restricted by these specific architectural constraints. Furthermore, the fixed architecture and constraints pose challenges in striking an optimal balance between interpretability and learning ability. Consequently, their specialized structures and constraints limit their applicability to specific problem types and render them unsuitable to be employed to other widely used and advanced neural network architectures. Furthermore, for the majority of neural networks, it is still quite difficult to provide theoretical analysis of their outputs, even with architectural modifications.

\revised{In this paper, we propose a novel method to integrate PDE models into neural networks using inverse evolution layers (IELs). These layers, derived from the inverse processes of PDE models, can be used as bad property amplifiers to regularize the neural networks. We insert the IELs after the neural networks to amplify some undesirable properties in the neural networks outputs. Consequently, the IELs will penalize the outputs with undesirable properties and force neural networks to produce more desirable results. Our proposed method differs from traditional loss-inserting techniques in that it is amenable to analysis, and theoretical results can be provided to substantiate the effectiveness of the IELs. And it is applicable to both energy-based and non-energy PDE models. Furthermore, experiments show that our method can achieve better performance than traditional loss-inserting techniques on noisy label problems.}

The main contributions of this work are:
\begin{quote}
  \noindent
  \revised{1. We develop the inverse evolution layers (IELs) for incorporating PDE models into neural networks. These layers offer good interpretability and serve as effective regularizers by amplifying undesirable properties of neural networks, thus compelling the neural networks' outputs to possess the desired properties during training. The IELs can be designed based on both energy-based models and non-energy models. Additionally, our framework preserves the learning capacity of the given neural networks as it does not alter their original structure or impose any constraints on them.}

  \noindent
  2. We introduce heat-diffusion IELs and curve-motion IELs to incorporate smoothness and convexity regularization to the neural network outputs, respectively. Subsequently, we evaluate their performance on several semantic segmentation neural networks and datasets. Experimental results substantiate the effectiveness of our proposed heat-diffusion and curve-motion IELs.

  \noindent
  3. We offer theoretical backing for the heat-diffusion IELs, demonstrating their roles as regularizers. We also provide theoretical evidence showcasing the IELs' capability to address noisy label issues under reasonable assumptions.
\end{quote}

The rest of this paper is organized as follows. In section \ref{iels}, we provide a comprehensive description of the inverse evolution layers and the integrated architecture of neural networks and these layers, along with an explanation of how these layers can be utilized to regularize neural networks. In section \ref{diffusion_iels}, we present the heat-diffusion IELs to tackle the challenges of semantic segmentation with noisy labels. The experimental results validate the effectiveness of our heat-diffusion IELs in significantly reducing noise overfitting in neural networks. In Section \ref{curve-motion IELs}, we introduce the curve-motion IELs, which are designed to impose convex shape regularization on the outputs of neural networks. Our experiments demonstrate that the curve-motion IELs are effective in enhancing the convexity of network outputs. The theoretical results for our IELs will be showed in section \ref{theoretical results}. In section \ref{conclusion}, we summarize the main contributions and findings of our work, and discuss potential avenues for future researches.

%-------------------------------------------------------------------------
\section{Neural Networks with Inverse Evolution Layers}
\label{iels}
\subsection{Inverse Evolution Layers}
In this section, we introduce the inverse evolution layers (IELs) which play a pivotal role in our framework for integrating specific characteristics of mathematical models into neural networks.

In physics, evolution processes can be modelled by partial differential equations that vary over time. For instance, diffusion processes can be modelled by the diffusion equations \cite{perona1990scale,weickert1998anisotropic}. Additionally, advection–diffusion–reaction systems can be depicted through advection–diffusion–reaction equations \cite{nordstrom1990biased}. The solutions of these processes inherently possess favorable properties. For example, the solution to a diffusion process will have good smoothness. Rather than enhancing these desirable properties, we aim to develop layers that amplify opposing unfavorable properties. Define a rectangular domain $\Omega$ of the form $[a, b]\times[c,d]$ with $a,b,c,d \in \mathbb{R}, a\le b, c\le d$ and let $u: (\Omega,[0,T]) \to \mathbb{R}$ be the solution of the partial differential equations of the following form
\begin{equation}
  \label{general_pde}
  \left\{
  \begin{array}{ll}
    u_t = \mathcal{F}(u), &  t\in[0,T]\\
    u(x, 0) = u_0, & \forall x\in\Omega\\
    \frac{\partial u}{\partial x}(x,t) = 0, & \forall x \in \partial\Omega,\quad t\in[0,T]
\end{array}
\right.
\end{equation}
where $\mathcal{F}$ refers to a combination of linear and nonlinear differential operators which may include a variety of gradient operators and $u_0$ is a given initial value. Additionally, the boundary condition is set as the Neumann condition.

\begin{remark}
  One can also consider more complicated PDE systems with different boundary conditions. Nevertheless, as traditional mathematical models for image processing typically revolve around a single partial differential equation with the Neumann condition, our focus in this paper is solely on PDEs conforming to the form of \eqref{general_pde}.
\end{remark}

Using powerful tools in numerical mathematics, one can efficiently solve these PDEs and determine the numerical solution at any time in the evolution. For example, if we let $h$ represent the spacing distance between two adjacent pixels and employ the central finite difference scheme for spatial discretization, combined with solving the general partial differential equation using a straightforward forward Euler's formula for temporal discretization, we can derive the following discrete numerical scheme
\begin{equation}
  \label{forward_pde}
  \frac{U^{n+1}-U^{n}}{\Delta t} = F_h(U^{n}).
\end{equation}
Here $U^{n}$ and $U^{n+1}$ denote the numerical solutions of $u(t)$ at time $T_{n}$ and $T_{n+1}$, respectively. And $T_{n+1}-T_n = \Delta t$, with $T_0 = 0$ and time step being $\Delta t$. The symbol $F_h$ represents finite difference approximations of $\mathcal{F}$. Typically, $F_h(U^{n})$ can be conceptualized as a combination of convolutions of $U^{n}$ with designated filters.

We rewrite equation~(\ref{forward_pde}) as follows
\begin{equation}
  \label{forward_pde2}
  U^{n+1} = U^{n} + F_h(U^{n})\Delta t.
\end{equation}
The equation~(\ref{forward_pde2}) demonstrates that once we have the value of $U^{n}$, we can determine the value of $U^{n+1}$. Furthermore, starting with an initial condition of $U_0$, one can obtain a numerical solution at any time by iteratively applying this equation.

Based on the equation~(\ref{forward_pde2}), we propose inverse evolution layers. Instead of solving evolutions in a forward manner, these layers are utilized to numerically simulate the inverse process of evolutions. From the equation~(\ref{forward_pde2}), we can see that $U^{n+1}$ is calculated by adding the term $F_h(U^{n})\Delta t$ to $U^{n}$. Therefore, a natural way to obtain the simulation of the inverse evolution is to replace the ``+" with ``-".  On this basis, we introduce the concept of inverse evolution layers. The construction of an inverse evolution layer is quite straightforward: we design a layer such that, given an input $U$, its output is
\begin{equation}
  \label{backward_pde}
  L(U) = U - F_h(U)\Delta t.
\end{equation}
In contrast to forward evolution, the inverse evolution layers are expected to amplify certain undesired properties, making them suitable for use in neural networks as a type of regularizers. \modified{It is also worth mentioning that $h$ and $\Delta t$ are fixed hyperparameters of the network, therefore IELs do not involve any trainable parameters.}

\begin{remark}
  The deduction process for IELs described above is based on the explicit scheme, which is known to suffer accuracy and stability issues. However, in section \ref{theoretical results}, we will explain why we choose explicit schemes and prove the feasibility of the deduction process.
\end{remark}

\subsection{The Architecture of Neural Networks plus Inverse Evolution Layers}
In this section, we will demonstrate how to apply IELs to neural networks to achieve regularization goals. First, we insert inverse evolution layers into neural networks. Given a neural network, the way we incorporate IELs is to add them to the output of the network before computing the loss function, as depicted in Figure~\ref{architecture}. In our framework, we activate the inverse evolution layers during training, while during the evaluation and prediction we deactivate the inverse evolution layers.
\begin{figure*}
\begin{center}
% \fbox{\rule{0pt}{2in} \rule{.9\linewidth}{0pt}}
\includegraphics[width=0.6\linewidth]{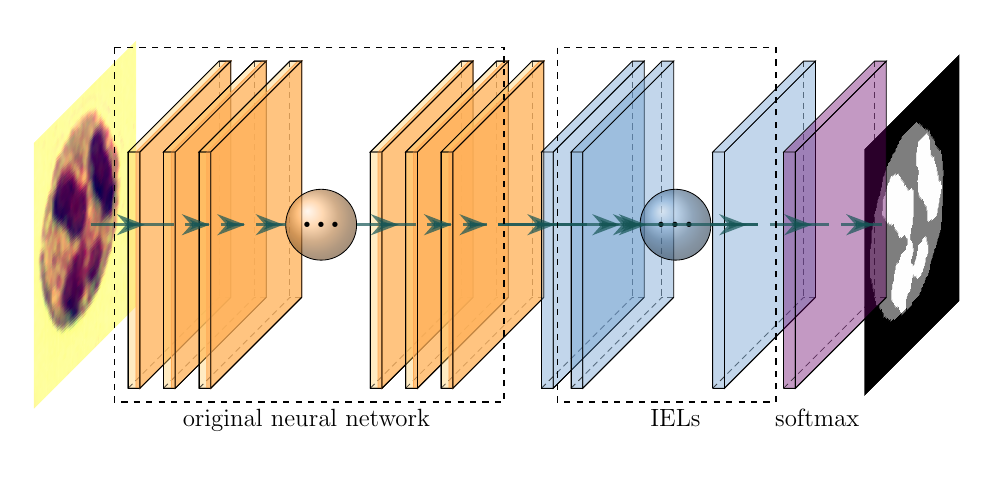}
\end{center}
   \caption{The architecture of neural networks with IELs. From left to right: input, original neural networks, inverse evolution layers (IELs), softmax layer and final output. The original neural network can be any kind of neural networks. The IELs behave like a bad property amplifier. During training, the IELs are activated while in prediction, they are deactivated.}
\label{architecture}
\end{figure*}

As previously mentioned, inverse evolution layers are designed to amplify undesirable properties that run counter to those of the corresponding forward physical process. For example, as the solution of the heat diffusion process is typically smooth, the corresponding inverse evolution layers will accentuate the roughness of their inputs. Consequently, if the input of the inverse evolution layers, $i.e.$ the output of the neural network, contains noise, the noise will be amplified after passing through the IELs. A visual demonstration illustrating IELs as noise amplifiers is presented in Figure~\ref{demonstration}.
\begin{figure*}
\begin{center}
% \fbox{\rule{0pt}{2in} \rule{.9\linewidth}{0pt}}
\includegraphics[width=0.6\linewidth]{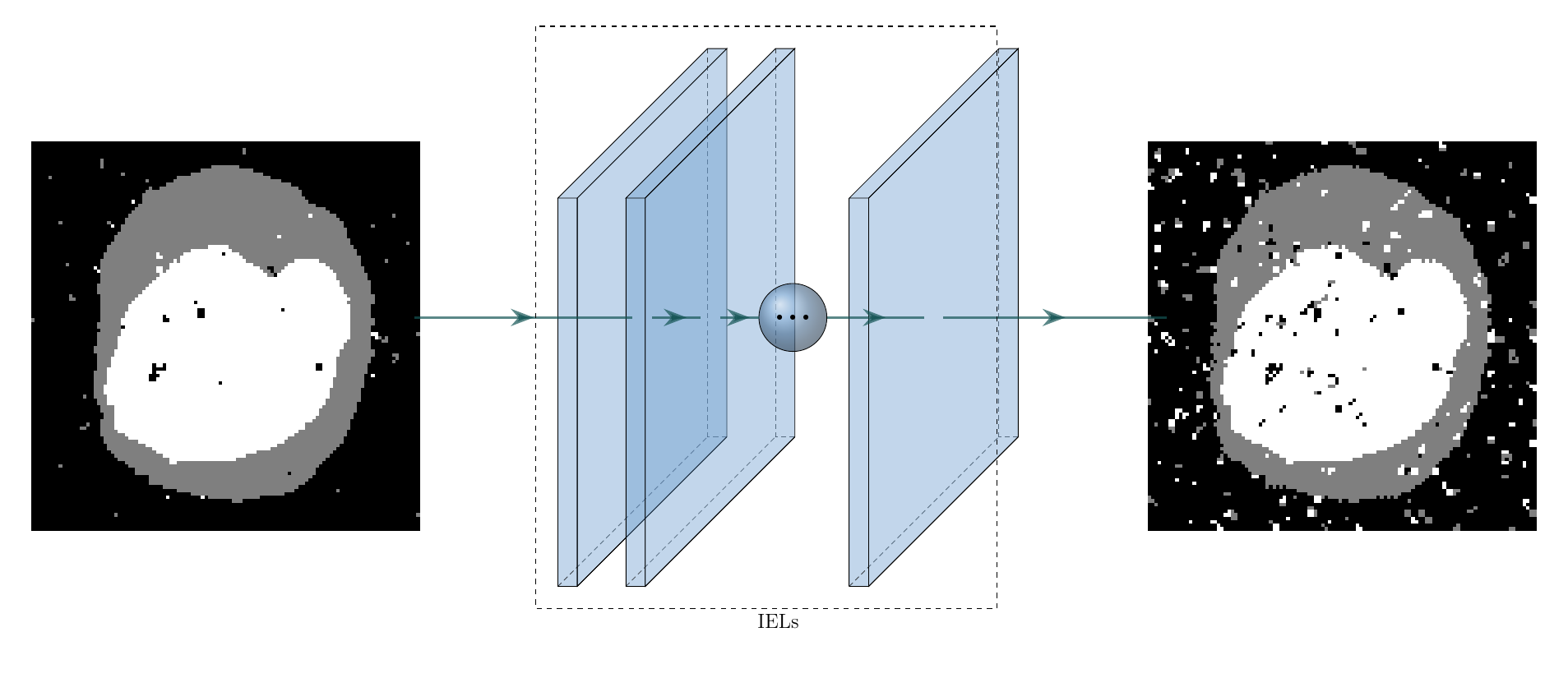}
\end{center}
   \caption{A visual demonstration illustrating IELs as noise amplifiers. From left to right: input, IELs and output.}
\label{demonstration}
\end{figure*}

Due to the specialized construction of the IELs, it can be anticipated that during the training phase, the undesired characteristics of the neural network's outputs will be amplified after passing through the IELs. Thus, the IELs can function as a form of regularization by forcing the neural network not to generate outputs with undesirable properties.

\section{Heat-diffusion IELs}
\label{diffusion_iels}
\subsection{Derivation of Heat-diffusion IELs}
In this section, we present an example by developing inverse evolution layers based on the heat diffusion equation.

The heat diffusion equation is formulated as
\begin{equation}
    \left\{
  \label{heat diffusion equation}
  \begin{array}{ll}
    u_t = \Delta u, &  t\in[0,T]\\
    u(x, 0) = u_0, & \forall x\in\Omega\\
    \frac{\partial u}{\partial x}(x_0,t) = 0, & \forall x_0 \in \partial\Omega, \quad t\in[0,T]
\end{array}
\right.
\end{equation}
For the heat-diffusion process, one can prove
\begin{equation}
  \frac{\mbox{d}(\int_\Omega |\nabla u|^2 \mbox{d}x)}{\mbox{d}t}\le 0, \quad \forall t.
\end{equation}
This formula implies that the $L^2_\Omega$-norm of $\nabla u$ will become smaller and smaller. \modified{Consequently, $u$ will become smoother and smoother over time. Therefore, we employ the heat-diffusion IELs to prevent neural networks from overfitting noise in images. We will demonstrate this for the noise issue in labels for semantic segmentation.}

Define a discrete rectangular domain
 $$\Omega_h: (h,2h,\cdots,M_1h)\times(h,2h,\cdots,M_2h).$$
Let $U$ be a function defined on $\Omega_h$. According to equation~(\ref{backward_pde}), we can construct heat-diffusion IELs through the following formula
\begin{equation}
  \label{heat_diffusion_iels}
  L(U) = U - F_h^{\Delta}(U)\Delta t,
\end{equation}
Here $F_h^{\Delta}$ is the finite difference approximation of \modified{Laplacian operator, more precisely} the value $F_h^{\Delta}(U)$ at the pixel $(ih,jh)\in \Omega_h$ is given as
\begin{equation}
  \label{F_laplacian}
  \begin{split}
      (F_h^{\Delta}(U))_{i,j} &= \frac{U_{i+1,j}+U_{i-1,j}+U_{i,j-1}+U_{i,j+1}-4U_{i,j}}{h^2},\\
      & i=1,2,\cdots,M_1; j=1,2,\cdots,M_2,
  \end{split}
\end{equation}
where $U_{i,j}$ is short for $U(ih,jh)$.

Considering the Neumann boundary condition, we define
\begin{equation}
  \label{Neumann bc}
\begin{split}
  &U_{0, \cdot}=U_{1, \cdot}, U_{M_1+1, \cdot}=U_{M_1, \cdot}\\
  &U_{\cdot ,0}=U_{\cdot ,1}, U_{,M_2+1}=U_{\cdot ,M_2}
\end{split}
\end{equation}

Substituting \eqref{Neumann bc} into \eqref{F_laplacian}, one can get the following matrix form for $F_h^{\Delta}(U)$
$$
F_h^{\Delta}(U) = \frac{1}{h^2}(UD_{M_2} +D_{M_1}U),
$$
where
\begin{equation}
  \label{d_i}
D_{ M_k } = \left[ \begin{array} { c c c c c c } - 1 & 1 & & & & \\ 1 & - 2 & 1 & & & \\ & & \ddots & \ddots & \ddots & \\ & & & 1 & - 2 & 1 \\  & & & & 1 & - 1 \end{array} \right]_{ M_k \times M_k }, \quad k=1, 2.
\end{equation}
Under the matrix form, it becomes evident that $F_h^{\Delta}(U)$ is equivalent to convolution operators with replicate padding and the corresponding convolution kernel is the following $3\times 3$ filter,
\begin{equation}
  \left[ \begin{matrix} 0 &1 &0 \\ 1 &-4 &1 \\ 0 &1 &0 \\ \end{matrix} \right].
\end{equation}

As previously discussed, the designed heat-diffusion IELs are expected to amplify noise. During training, these heat-diffusion IELs are activated, resulting in the amplification of noise in the neural networks's outputs. Instead of using the original outputs of the neural network, the outputs with amplified noise are utilized to fit the noisy labels under the loss. Therefore, the impact of noisy labels will be mitigated by the heat-diffusion IELs. \modified{During evaluation and prediction, the heat-diffusion IELs are deactivated.}
\subsection{Efficiency of the Heat-diffusion IELs}
In this part, we will assess the efficiency of the heat-diffusion IELs. In fact, minimal computational cost and storage are required for the heat-diffusion IELs. The numerical scheme used to design the IELs is explicit, making the computation \modified{very} efficient. Furthermore, the number of IELs does not significantly impact the overall efficiency. This is because, in real implementation, one can merge all the IELs into one layer. Let $L_1$ and $L_2$ are two IELs such that \eqref{heat_diffusion_iels} is satisfied, i.e.,
\begin{equation}
  L_i(U) =U - \Delta t\times F_h^{\Delta}(U), i=1,2
\end{equation}
Then we have
\begin{equation}
\begin{split}
    L_2(L_1(U)) &= L_1(U) - \Delta t\times F_h^{\Delta}(L_1(U))\\
    &= U - 2\Delta t\times F_h^{\Delta}(U)+ (\Delta t)^2\times (F_h^{\Delta})^2(U),
\end{split}
\end{equation}
where $(F_h^{\Delta})^2$ can also be depicted by a convolution with filters. Therefore, we can use only one layer $L$ such that
\begin{equation}
  L(U) = U - 2\Delta t F_h^{\Delta}(U)+ (\Delta t)^2(F_h^{\Delta})^2(U)
\end{equation}
to replace $L_1, L_2$. By repeatly doing so, all IELs can be merged to one layer. More generally, for $n$ IELs, the final layer can be computed through the following formula
\begin{equation*}
  \begin{split}
    L(U) &= (1-\Delta t F_h^{\Delta})^n(U)\\
         &= \sum\limits_{k=0}^{n}C_k^n(-\Delta t)^k(F_h^{\Delta})^{k}(U),
  \end{split}
\end{equation*}
where $C_k^n$ is the number of $k$-combinations. This implies that $L(U)$ can be computed in parallel.
Hence the heat-diffusion IELs have minimal impact on the \modified{computational} efficiency of the original model. In experiments, we observed that models with heat-diffusion IELs require more epochs to reach their peak performance. However, the additional training time is acceptable, especially when considering the improvement on datasets with noisy labels. Furthermore, it is worth noting that we maintained consistent hyperparameters with the original model throughout all experiments. Fine-tuning hyperparameters for models with IELs could potentially accelerate convergence.

\subsection{Experiments for Heat-diffusion IELs}
In our experiments\footnote{Codes are available at \href{https://github.com/cyliu111/Inverse-Evolution-Layers-Segmentation}{https://github.com/cyliu111/Inverse-Evolution-Layers-Segmentation}}, we will evaluate the effectiveness of heat-diffusion IELs on semantic segmentation tasks using different datasets with both normal and noisy labels. The noisy labels are pixel-wise, which means the classes of some pixels will be replaced with random ones. Our experiments are conducted on three well-known neural network architectures: Unet \cite{ronneberger2015u}, DeepLab \cite{chen2018encoder} and HRNetV2-W48 \cite{sun2019high}. Table~\ref{num_iels} provides the details on the number of IELs and $\Delta t$ we adopt for each network and for all the experiments, the spacing distance $h$ used for spatial discretization is chosen to be 1. \revised{In addition, we recorded the training time costs across several experiments to demonstrate the efficiency of IELs, as presented in Table \ref{table:time_comparison}.} Our experiments demonstrate that the heat-diffusion IELs are particularly effective for datasets with noisy labels.
% Unet and DeepLabV3+, two representative neural network architectures, as the research objects to demonstrate the effectiveness of our IELs.

\begin{table}
\label{num_iels}
\begin{center}
  \begin{tabular}{cccc}
  		\hline
      neural networks &datasets & $\Delta t$ &the number of IELs\cr\hline
  		\multirow{2}{*}{Unet} & WBC & 0.1 & 20\cr\cmidrule(lr){2-4}
      & DSB & 0.1 & 30\cr\hline
      DeepLabV3+& Cityscapes & 0.1 & 35\cr\hline
      HRNetV2-W48& Cityscapes & 0.1 & 30\cr\hline
  	\end{tabular}
\end{center}
\caption{IELs configurations for different neural networks on different datasets.}
\end{table}

\begin{table}
    \label{table:time_comparison}
    \begin{center}
        \begin{tabular}{ccccc}
            \hline
        Type of IELs &Datasets & GPU &No IELs &IELs\cr\hline
            \multirow{2}{*}{Heat-diffusion} & WBC & V100 & 8.414&8.558\cr\cmidrule(lr){2-5}
        & DSB & V100 & 32.57&33.21\cr\hline
        % & DRIVE & V100 & 32.57&32.57\cr\hline
        Curve-motion& REFUGE & A100 & 28.93&37.50\cr\hline
        \end{tabular}
    \end{center}
    \caption{Comparison of time costs (seconds per epoch) between the Unet with and without IELs.}
\end{table}

\subsubsection{Unet with Heat-diffusion IELs on WBC}
We conducted a performance comparison between Unet and Unet with heat-diffusion IELs on a small dataset called the White Blood Cell (WBC) \cite{zheng2018fast} dataset which consists of three hundred images. In the experiment, 270 samples were used for training and the remaining 30 samples were used for validation. The comparison is conducted on both normal and noisy labels. The way we add noise to training labels is to choose some small windows on each image and replace their label values with a random class. All noise windows are $2\times 2$ and fixed, and the total area of the noise is set to 10 percent of the image size. \revised{Noise is introduced exclusively in the training set, while all images and labels in the validation and test sets remain clean.} The Unet structure used in the experiment has depth of 5 and the specific configuration in each level consists of $3\times 3$ convolution layers, instance norm and leaky ReLu. Downsampling and upsampling are achieved by pooling operation and up-convolution, respectively. The total number of epochs is set to 20, with a learning rate of 0.0001. The loss function used is the cross-entropy loss.

The experimental results for the WBC dataset are presented in Figure~\ref{wbc} where the evaluation metric is the mean of dice score (DS) on the validation dataset. Segmentation results on noisy labels are displayed in Figure~\ref{wbc_fig}. Figure~\ref{wbc} shows that the original Unet and Unet with IELs have comparable performance on normal labels but Unet with IELs is much more robust to noisy labels. These results suggest that heat-diffusion IELs do not reduce the performance of the original Unet but rather significantly improve it on noisy labels. In Figure~\ref{wbc_fig}, the results of the Unet are full of noise, which can be expected since neural networks are prone to the noise \cite{zhang2021understanding}, while the segmentation maps of Unet with IELs have almost no noise. These results also indicate that the designed heat-diffusion IELs can act as regularizers and effectively prevent the Unet from overfitting noise.
\begin{figure}
  \centering
  \includegraphics[width=8cm]{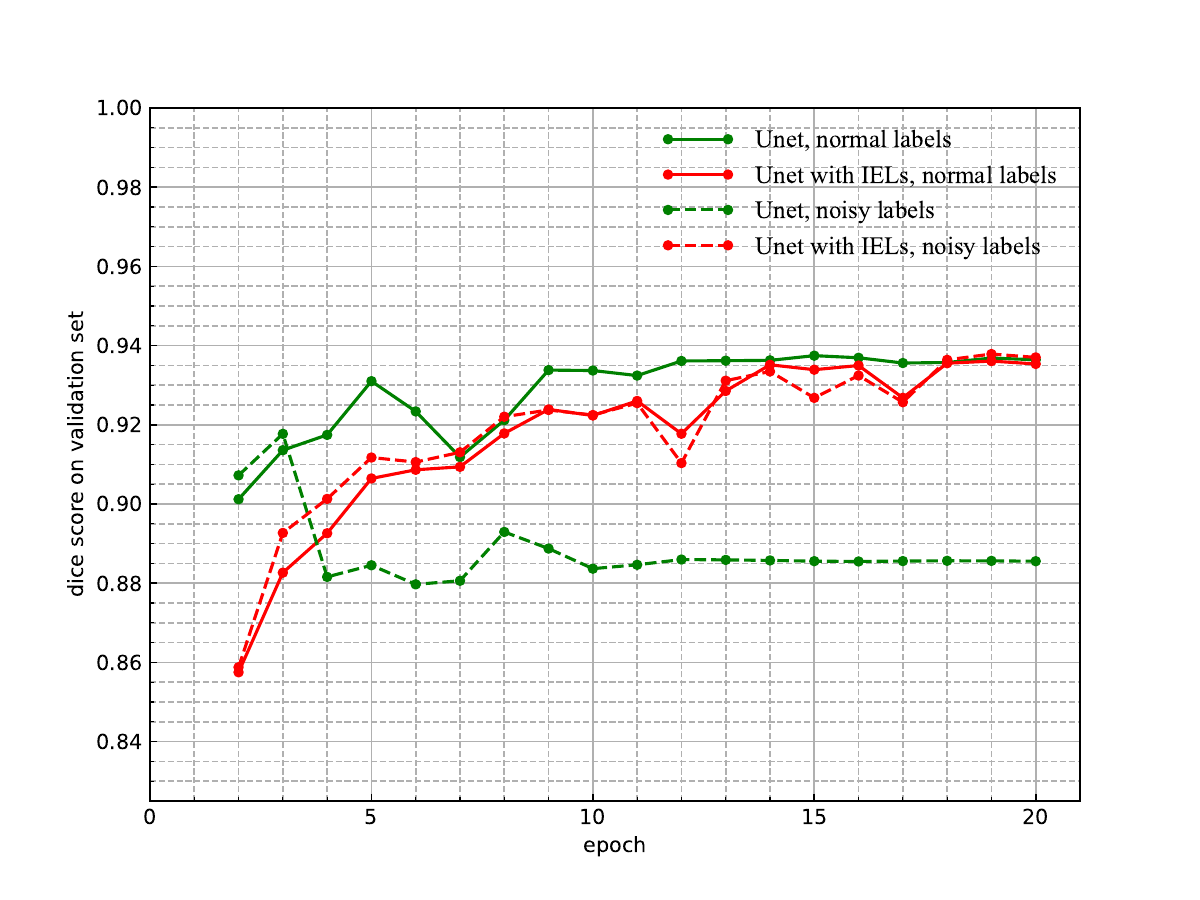}
 \caption{Dice score on the validation set of White Blood Cell (WBC) dataset based on Unet \cite{ronneberger2015u} and Unet with heat-diffusion IELs. The results of Unet and Unet with heat-diffusion IELs are highlighted by green lines and red lines, respectively. The results on normal labels and noisy labels are marked by solid lines and dotted lines, respectively.} \label{wbc}
\end{figure}

\begin{figure*}
	\centering
	\subfigure{
	\begin{minipage}[b]{0.12\linewidth}
	\includegraphics[width=1\linewidth]{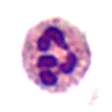}
	\includegraphics[width=1\linewidth]{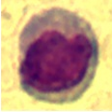}
	\includegraphics[width=1\linewidth]{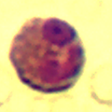}
	\includegraphics[width=1\linewidth]{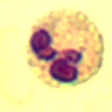}
	\includegraphics[width=1\linewidth]{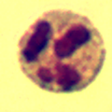}
	\centerline{\tiny (a) Images}
	\end{minipage}
	}
	\subfigure{
	\begin{minipage}[b]{0.12\linewidth}
    \includegraphics[width=1\linewidth]{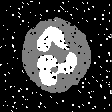}
  	\includegraphics[width=1\linewidth]{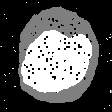}
  	\includegraphics[width=1\linewidth]{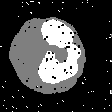}
  	\includegraphics[width=1\linewidth]{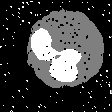}
  	\includegraphics[width=1\linewidth]{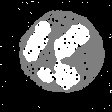}
	\centerline{\tiny (b) Unet}
	\end{minipage}
	}
	\subfigure{
	\begin{minipage}[b]{0.12\linewidth}
    \includegraphics[width=1\linewidth]{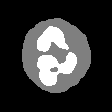}
  	\includegraphics[width=1\linewidth]{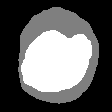}
  	\includegraphics[width=1\linewidth]{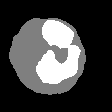}
  	\includegraphics[width=1\linewidth]{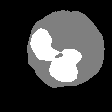}
  	\includegraphics[width=1\linewidth]{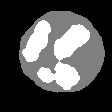}
	\centerline{\tiny (c) Unet with IELs}
	\end{minipage}
	}
	\subfigure{
	\begin{minipage}[b]{0.12\linewidth}
    \includegraphics[width=1\linewidth]{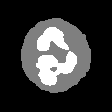}
  	\includegraphics[width=1\linewidth]{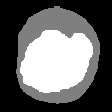}
  	\includegraphics[width=1\linewidth]{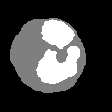}
  	\includegraphics[width=1\linewidth]{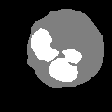}
  	\includegraphics[width=1\linewidth]{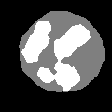}
	\centerline{\tiny (d) ground truth}
	\end{minipage}
	}
	 \caption{Comparison between Unet \cite{ronneberger2015u} and Unet with heat-diffusion IELs on White Blood Cell (WBC) dataset with noisy labels. (a) and (d) are the original images and the corresponding ground truth, respectively. (b) and (c) exhibit the segmentation results of the original Unet and Unet with heat-diffusion IELs, respectively.} \label{wbc_fig}
\end{figure*}

\subsubsection{Unet with Heat-diffusion IELs on 2018 Data Science Bowl}
After conducting experiments on the WBC dataset, we extended the comparison to include the 2018 Data Science Bowl dataset \cite{caicedo2019nucleus}, which contains 607 training and 67 test images. In our experiment, the test images are used for validation during training and the comparison is also conducted on both normal labels and noisy labels. The way we add noise is identical to that in the WBC dataset except that the size of noise windows is tuned to $3\times 3$. The comparison results are displayed in Figure~\ref{dsb2018} and Figure~\ref{dsb2018_fig}. Figure~\ref{dsb2018} shows that on the 2018 Data Science Bowl dataset, the heat-diffusion IELs can still maintain the performance of the Unet on normal labels and prevent the Unet from overfitting to noisy labels. As shown in the Figure~\ref{dsb2018_fig}, the original Unet still suffers a lot from noisy labels while Unet with IELs achieves much more satisfactory results on noisy labels.

\begin{figure}
  \centering
  \includegraphics[width=8cm]{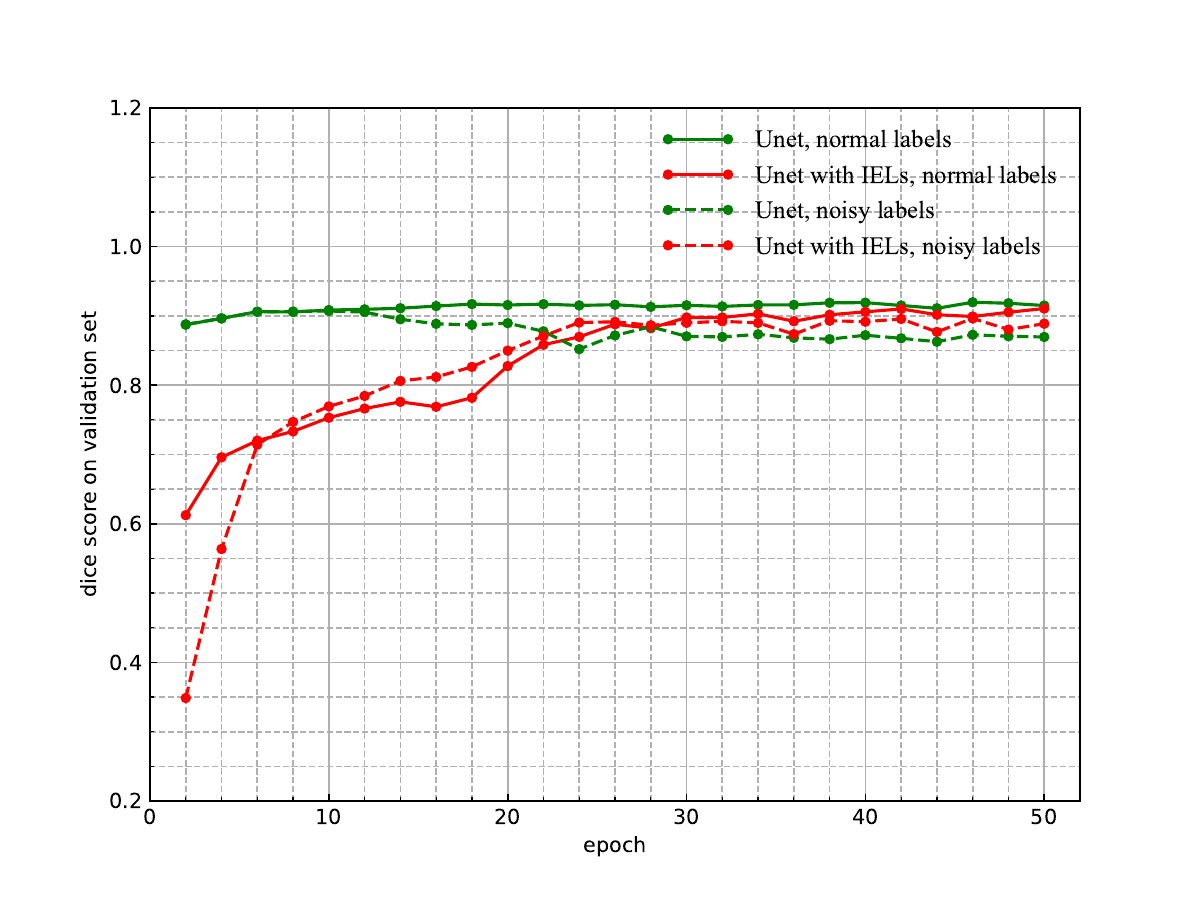}
 \caption{Dice score on the validation set of 2018 Data Science Bowl (DSB) dataset based on Unet \cite{ronneberger2015u} and Unet with heat-diffusion IELs. The results of Unet and Unet with heat-diffusion IELs are highlighted by green lines and red lines, respectively. The results on normal labels and noisy labels are marked by solid lines and dotted lines, respectively.} \label{dsb2018}
\end{figure}

\begin{figure*}
	\centering
	\subfigure{
	\begin{minipage}[b]{0.12\linewidth}
	\includegraphics[width=1\linewidth]{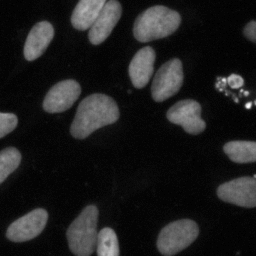}
	\includegraphics[width=1\linewidth]{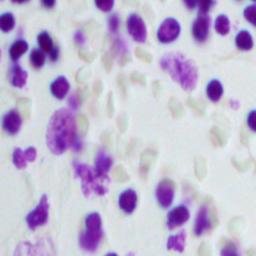}
	\includegraphics[width=1\linewidth]{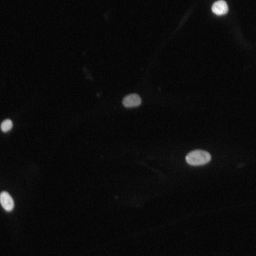}
	\includegraphics[width=1\linewidth]{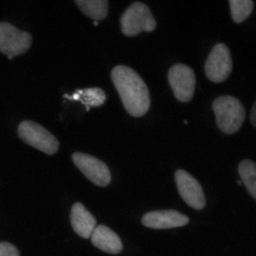}
	\includegraphics[width=1\linewidth]{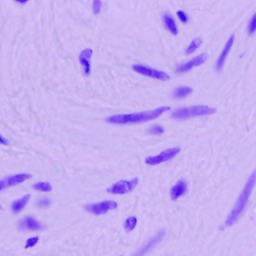}
	\centerline{\tiny (a) Images}
	\end{minipage}
	}
	\subfigure{
	\begin{minipage}[b]{0.12\linewidth}
    \includegraphics[width=1\linewidth]{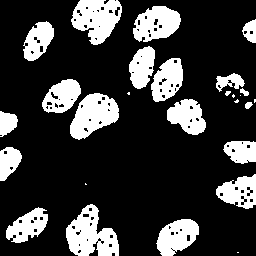}
  	\includegraphics[width=1\linewidth]{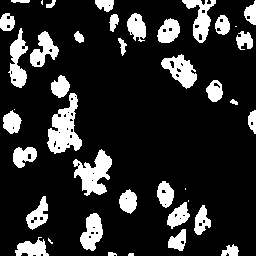}
  	\includegraphics[width=1\linewidth]{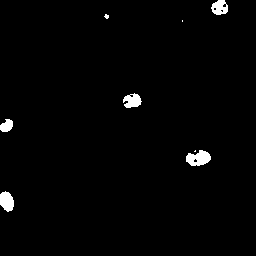}
  	\includegraphics[width=1\linewidth]{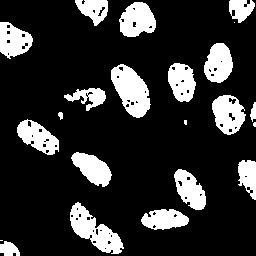}
  	\includegraphics[width=1\linewidth]{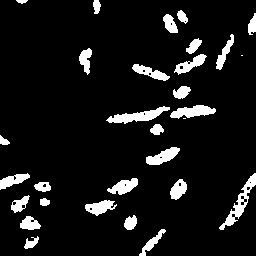}
	\centerline{\tiny (b) Unet}
	\end{minipage}
	}
	\subfigure{
	\begin{minipage}[b]{0.12\linewidth}
    \includegraphics[width=1\linewidth]{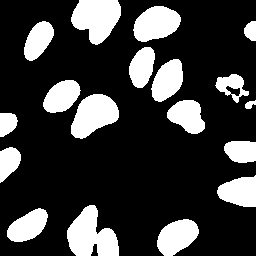}
  	\includegraphics[width=1\linewidth]{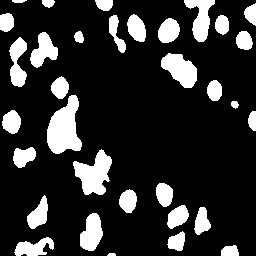}
  	\includegraphics[width=1\linewidth]{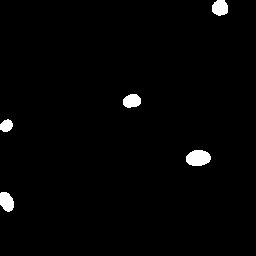}
  	\includegraphics[width=1\linewidth]{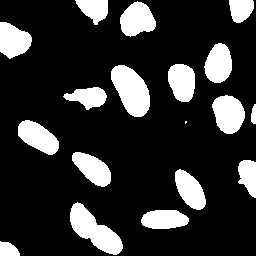}
  	\includegraphics[width=1\linewidth]{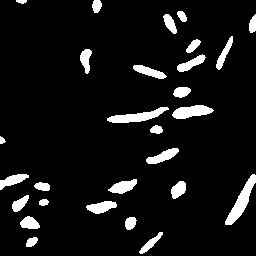}
	\centerline{\tiny (c) Unet with IELs}
	\end{minipage}
	}
	\subfigure{
	\begin{minipage}[b]{0.12\linewidth}
    \includegraphics[width=1\linewidth]{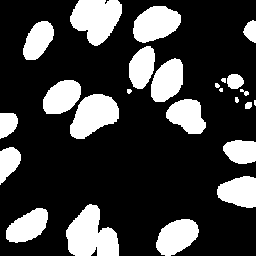}
  	\includegraphics[width=1\linewidth]{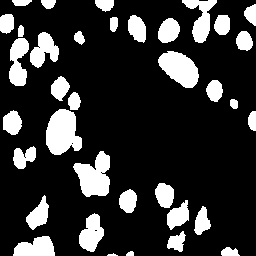}
  	\includegraphics[width=1\linewidth]{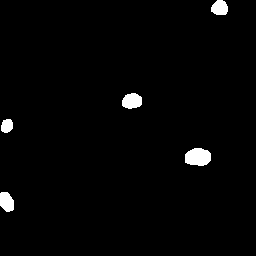}
  	\includegraphics[width=1\linewidth]{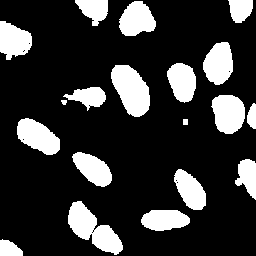}
  	\includegraphics[width=1\linewidth]{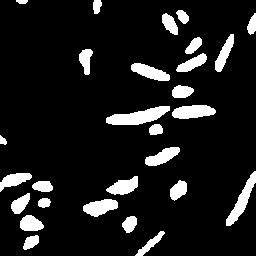}
	\centerline{\tiny (d) Ground truth}
	\end{minipage}
	}
	 \caption{Comparison between Unet \cite{ronneberger2015u} and Unet with heat-diffusion IELs on 2018 Data Science Bowl dataset with noisy labels. (a) and (d) are the original images and the corresponding ground truth, respectively. (b) and (c) exhibit the segmentation results of the original Unet and Unet with heat-diffusion IELs, respectively.} \label{dsb2018_fig}
\end{figure*}

\subsubsection{DeepLabV3+ with Heat-diffusion IELs on Cityscapes}
In addition to the datasets for medical images, we also evaluate the performance of our heat-diffusion IELs on images obtained from real-world scenarios. The dataset we employ for the evaluation is the Cityscapes dataset \cite{cordts2016cityscapes} which contains 2795 train and 500 validation images and the corresponding neural networks we test on this dataset are the DeepLabV3+ \cite{chen2018encoder} and HRNetV2-W48 \cite{sun2019high}. The way we add noise is identical to the previous experiments except that the size of noise windows is $5\times 5$, which is also quite small compared to the image size $(1024\times 2048)$.

For DeepLabV3+, we use ResNet-101 as the backbone and the results are presented in Figure~\ref{deeplab}, where the mean of class-wise intersection over union (mIoU) is adopted as the evaluation metric. These results indicate that for normal labels, DeepLabV3+ and its IELs counterpart show similar performance while for noisy labels DeepLabV3+ with IELs outperforms the original DeepLabV3+. This demonstrate that IELs are effective in mitigating the impact of noise in labels. In addition, the detailed segmentation maps are provided in Figure~\ref{deeplab_fig} which indicates that DeepLabV3+ tends to overfit the noise in labels whereas DeepLabV3+ with heat-diffusion IELs can generate accurate segmentation map with minimal noise.

\begin{figure}
  \centering
  \includegraphics[width=8cm]{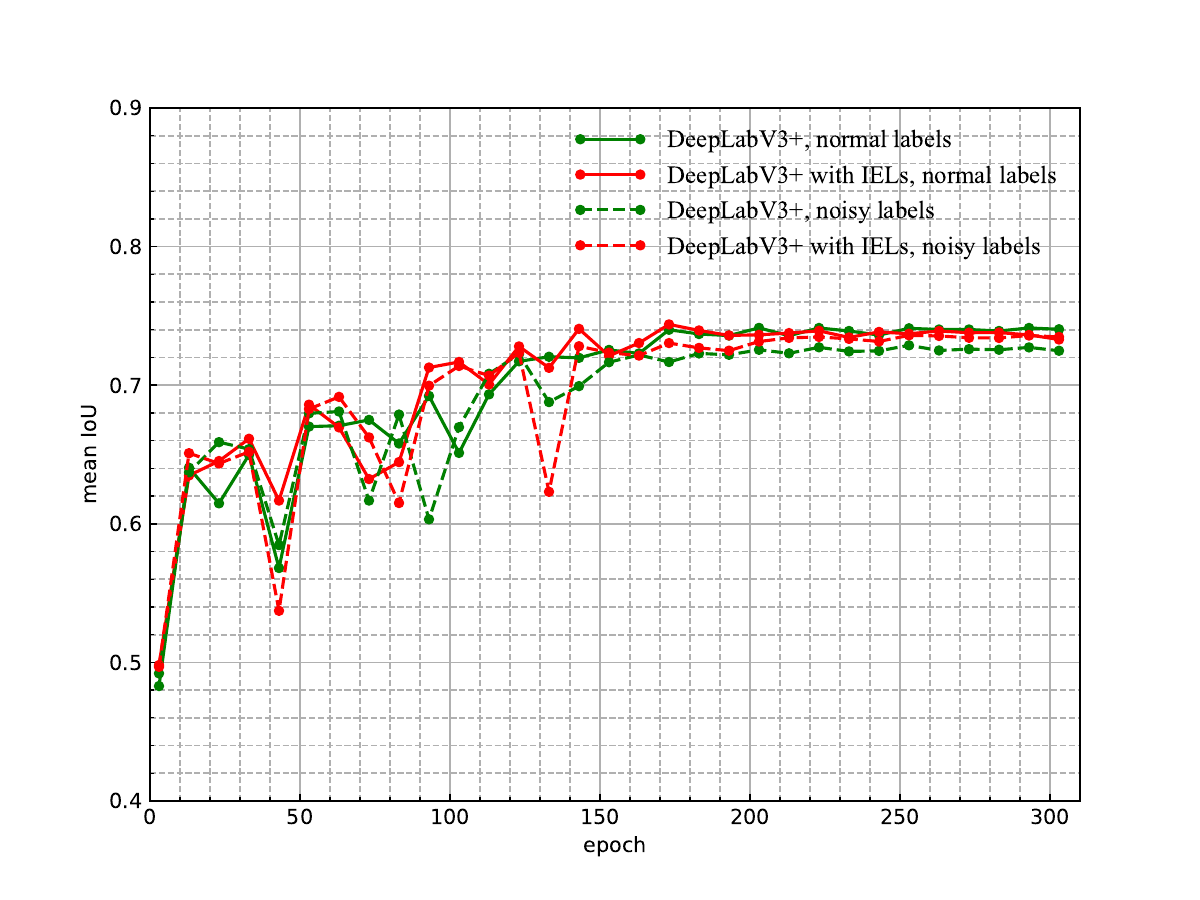}
 \caption{Mean IoU on the validation set of Cityscapes dataset based on DeepLabV3+ \cite{chen2018encoder} and DeepLabV3+ with heat-diffusion IELs. The results of DeepLabV3+ and DeepLabV3+ with heat-diffusion IELs are highlighted by green lines and red lines, respectively. The results on normal labels and noisy labels are marked by solid lines and dotted lines, respectively.} \label{deeplab}
\end{figure}

\begin{figure*}
	\centering
	\subfigure{
	\begin{minipage}[b]{0.2\linewidth}
	\includegraphics[width=1\linewidth]{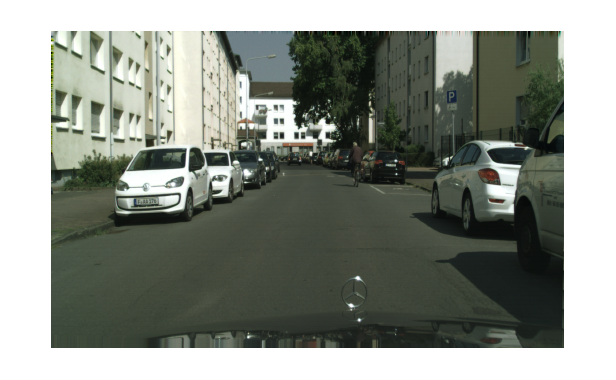}\\[-8pt]
	\includegraphics[width=1\linewidth]{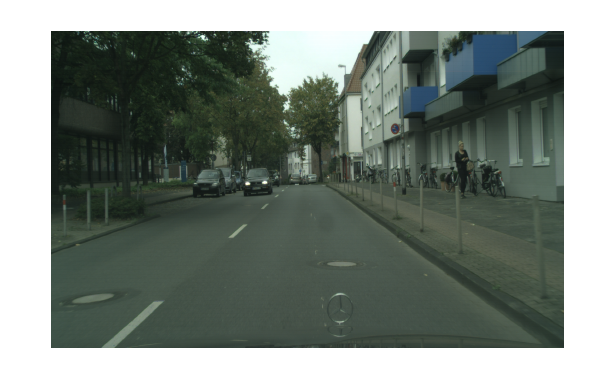}\\[-8pt]
	\includegraphics[width=1\linewidth]{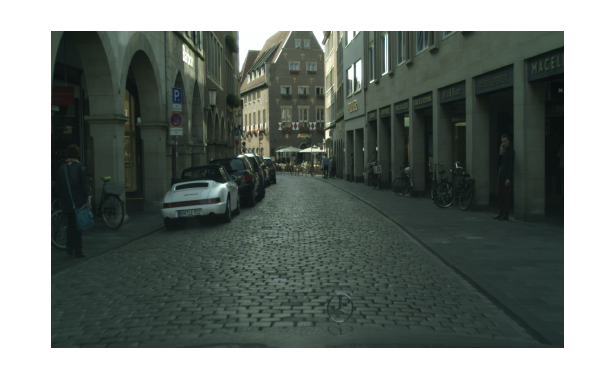}\\[-8pt]
	\includegraphics[width=1\linewidth]{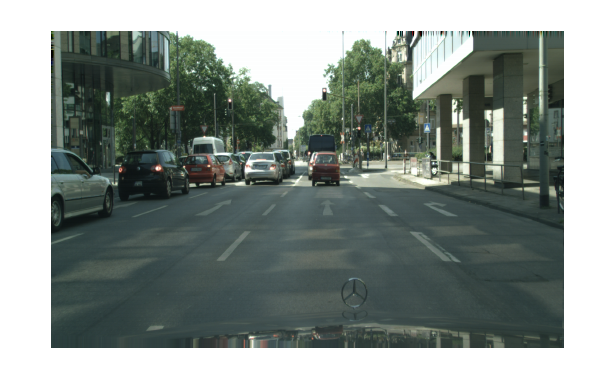}\\[-8pt]
  \includegraphics[width=1\linewidth]{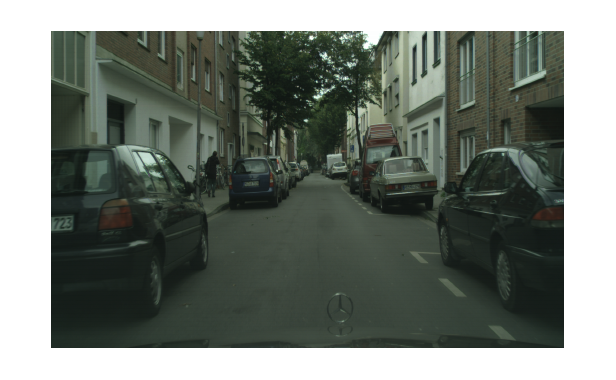}\\[-8pt]
	\centerline{\tiny (a) Images}
	\end{minipage}
	}
  \hspace{-5mm}
	\subfigure{
	\begin{minipage}[b]{0.2\linewidth}
    \includegraphics[width=1\linewidth]{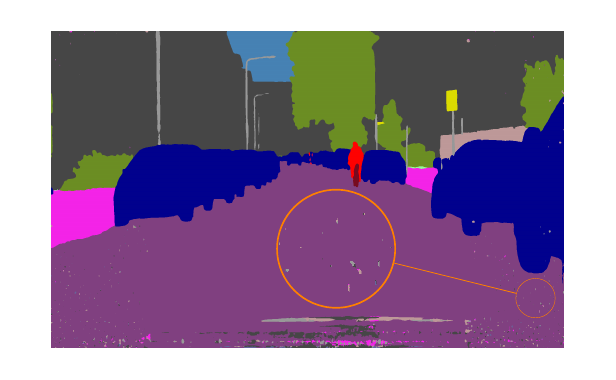}\\[-8pt]
  	\includegraphics[width=1\linewidth]{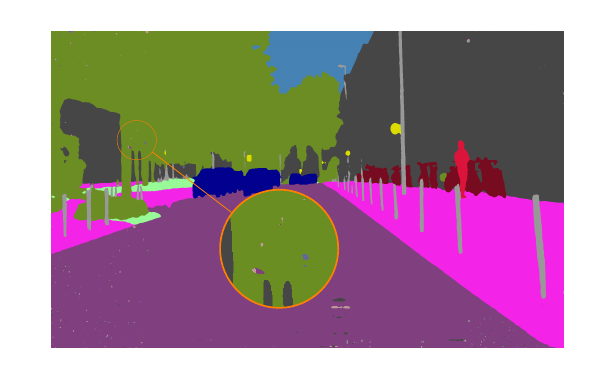}\\[-8pt]
  	\includegraphics[width=1\linewidth]{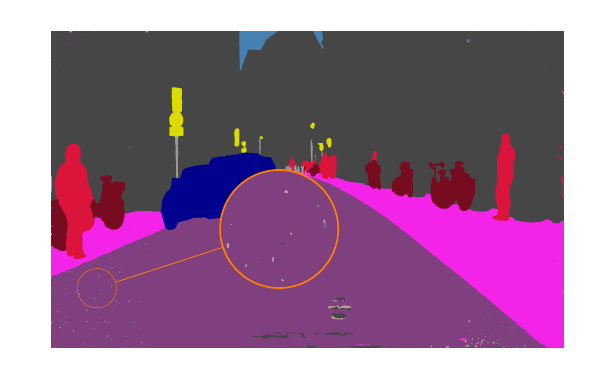}\\[-8pt]
  	\includegraphics[width=1\linewidth]{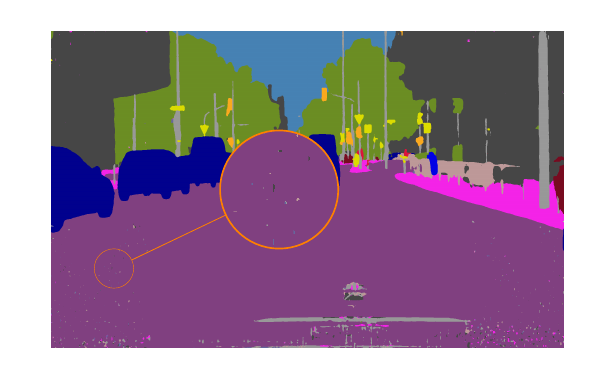}\\[-8pt]
    \includegraphics[width=1\linewidth]{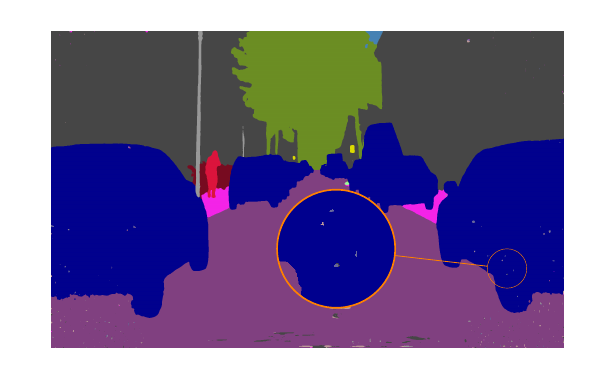}\\[-8pt]
	\centerline{\tiny (b) DeepLabV3+}
	\end{minipage}
	}
  \hspace{-5mm}
	\subfigure{
	\begin{minipage}[b]{0.2\linewidth}
    \includegraphics[width=1\linewidth]{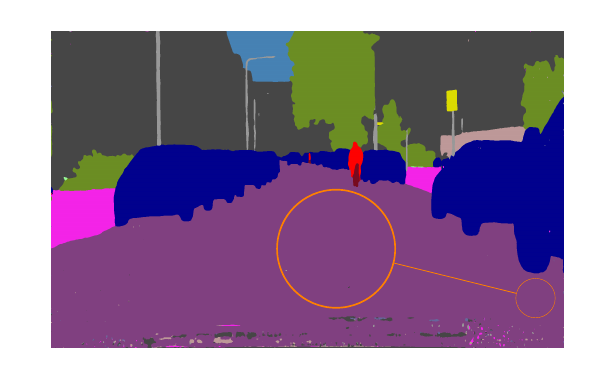}\\[-8pt]
  	\includegraphics[width=1\linewidth]{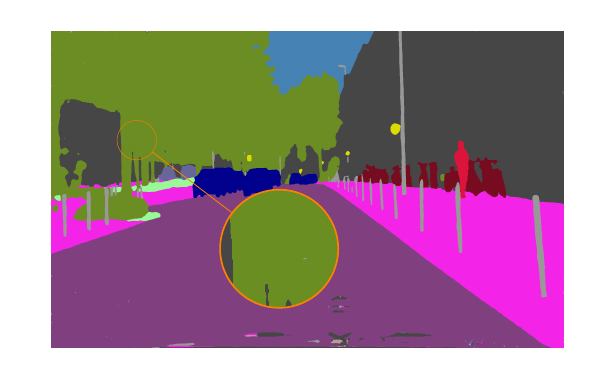}\\[-8pt]
  	\includegraphics[width=1\linewidth]{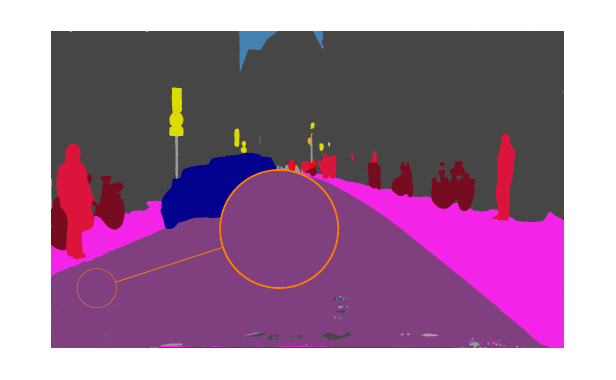}\\[-8pt]
  	\includegraphics[width=1\linewidth]{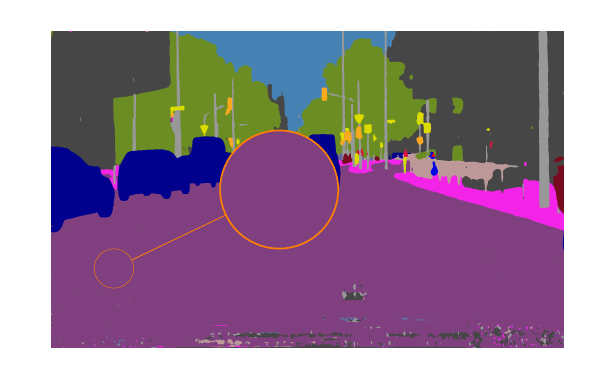}\\[-8pt]
    \includegraphics[width=1\linewidth]{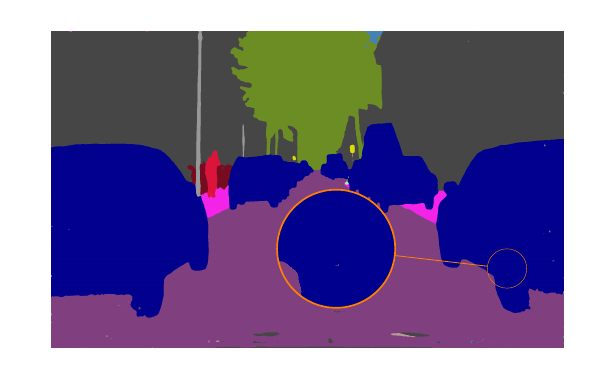}\\[-8pt]
	\centerline{\tiny (c) DeepLabV3+ with diffusion IELs}
	\end{minipage}
	}
  \hspace{-5mm}
	\subfigure{
	\begin{minipage}[b]{0.2\linewidth}
    \includegraphics[width=1\linewidth]{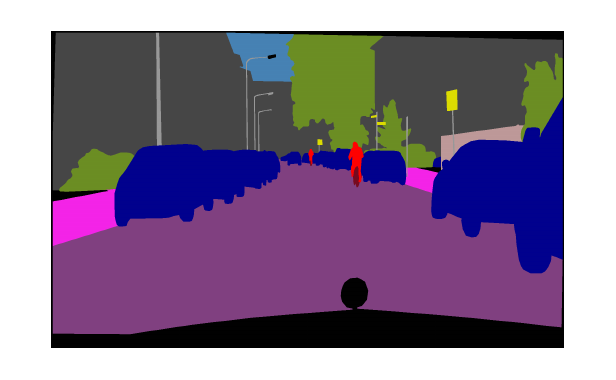}\\[-8pt]
  	\includegraphics[width=1\linewidth]{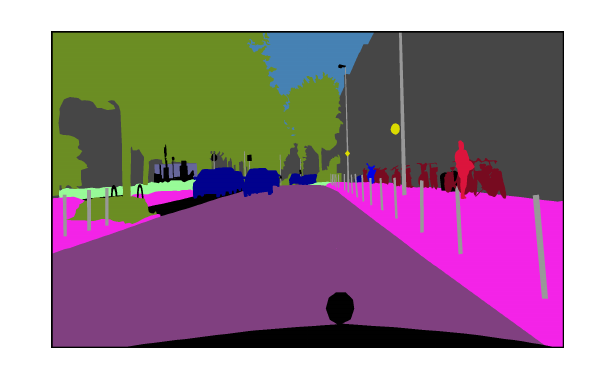}\\[-8pt]
  	\includegraphics[width=1\linewidth]{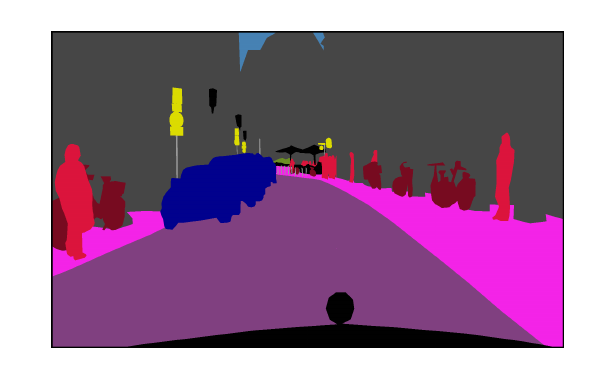}\\[-8pt]
  	\includegraphics[width=1\linewidth]{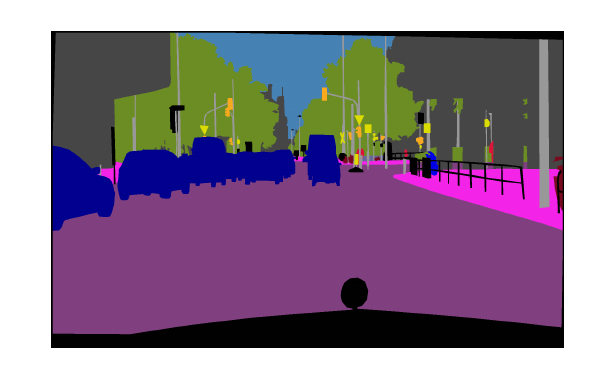}\\[-8pt]
    \includegraphics[width=1\linewidth]{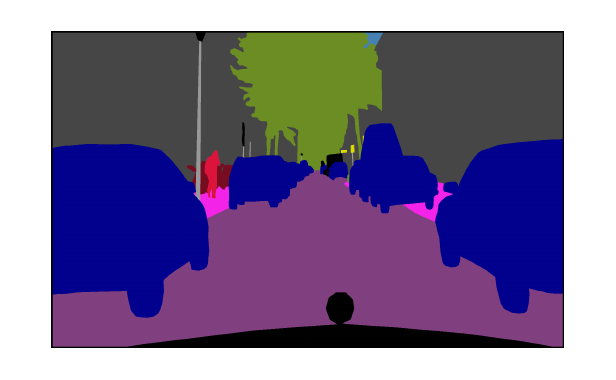}\\[-8pt]
	\centerline{\tiny (d) Ground truth}
	\end{minipage}
	}
	 \caption{Comparison between DeepLabV3+ \cite{chen2018encoder} and DeepLabV3+ with heat-diffusion IELs on Cityscapes dataset with noisy labels. (a) and (d) are the original images and the corresponding ground truth, respectively. (b) and (c) exhibit the segmentation results of the original DeepLabV3+ and DeepLabV3+ with heat-diffusion IELs, respectively. (The noise in (b) is not obvious since the noise size is quite small. Noise can be observed by zooming in on the four corners of these images.)} \label{deeplab_fig}
\end{figure*}

\subsubsection{HRNetV2-W48 with Heat-diffusion IELs on Cityscapes}
Furthermore, we test the HRNetV2-W48 and its IELs counterpart on the Cityscapes dataset. To expedite the training process, a pretrained model is utilized. The results are displayed in Figure~\ref{hrnet}. We use the same evaluation metric, mIoU, for this experiment as well. While HRNetV2-W48 achieves high mIoU compared to DeepLabV3+, its tendency to overfit on noisy labels is more prominent than that of DeepLabV3+. This is due to the fact that HRNetV2-W48 relies heavily on the high-resolution features which are more susceptible to noise. Nevertheless, our heat-diffusion IELs significantly mitigate this overfitting, as evident from the results. Additionally, Figure~\ref{hrnet_fig} provides detailed segmentation maps, which further demonstrate the effectiveness of our heat-diffusion IELs in handling noise in labels.

\begin{figure}[t!]
  \centering
  \includegraphics[width=8cm]{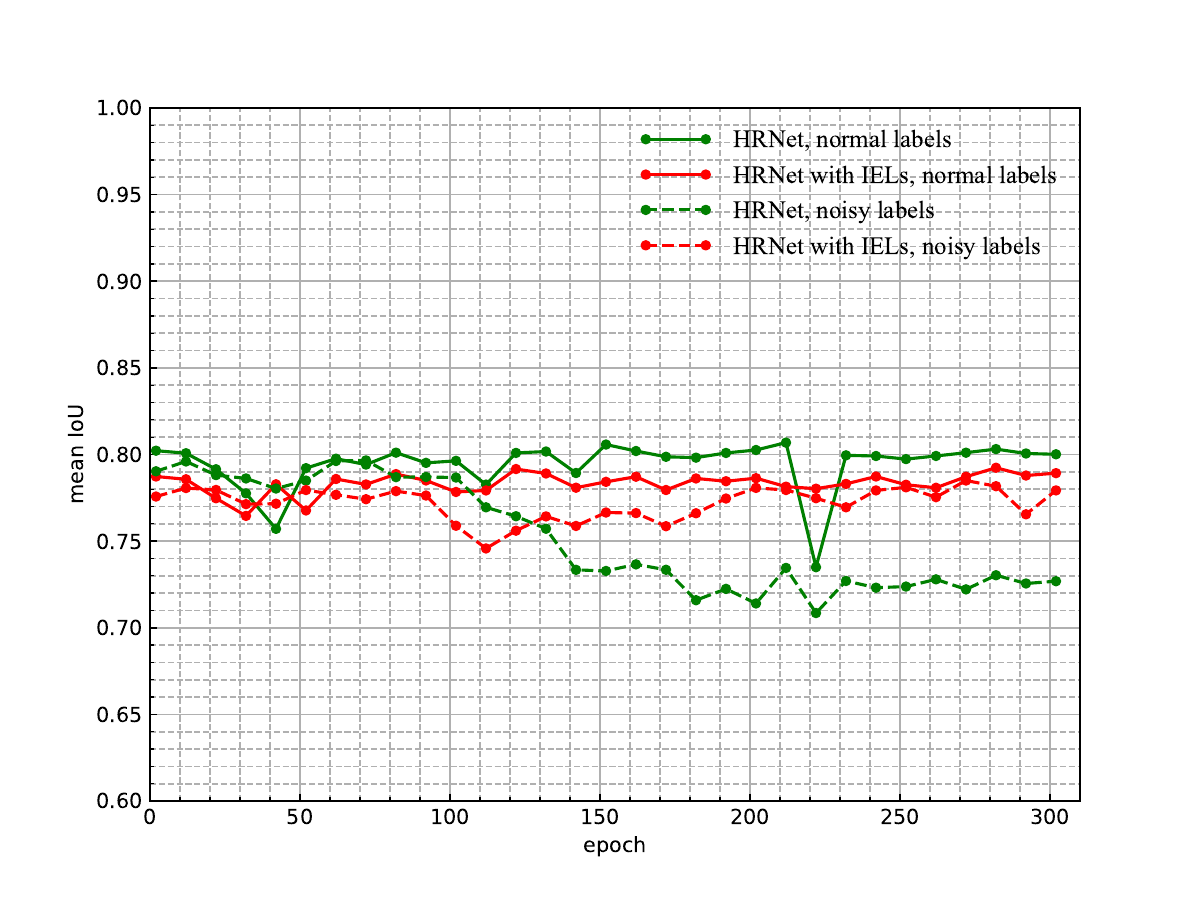}
 \caption{Mean IoU on the validation set of Cityscapes dataset based on HRNetV2-W48 \cite{sun2019high} and HRNetV2-W48 with heat-diffusion IELs. The results of HRNetV2-W48 and HRNetV2-W48 with heat-diffusion IELs are highlighted by green lines and red lines, respectively. The results on normal labels and noisy labels are marked by solid lines and dotted lines, respectively.} \label{hrnet}
\end{figure}

\begin{figure*}
	\centering
	\subfigure{
	\begin{minipage}[b]{0.2\linewidth}
	\includegraphics[width=1\linewidth]{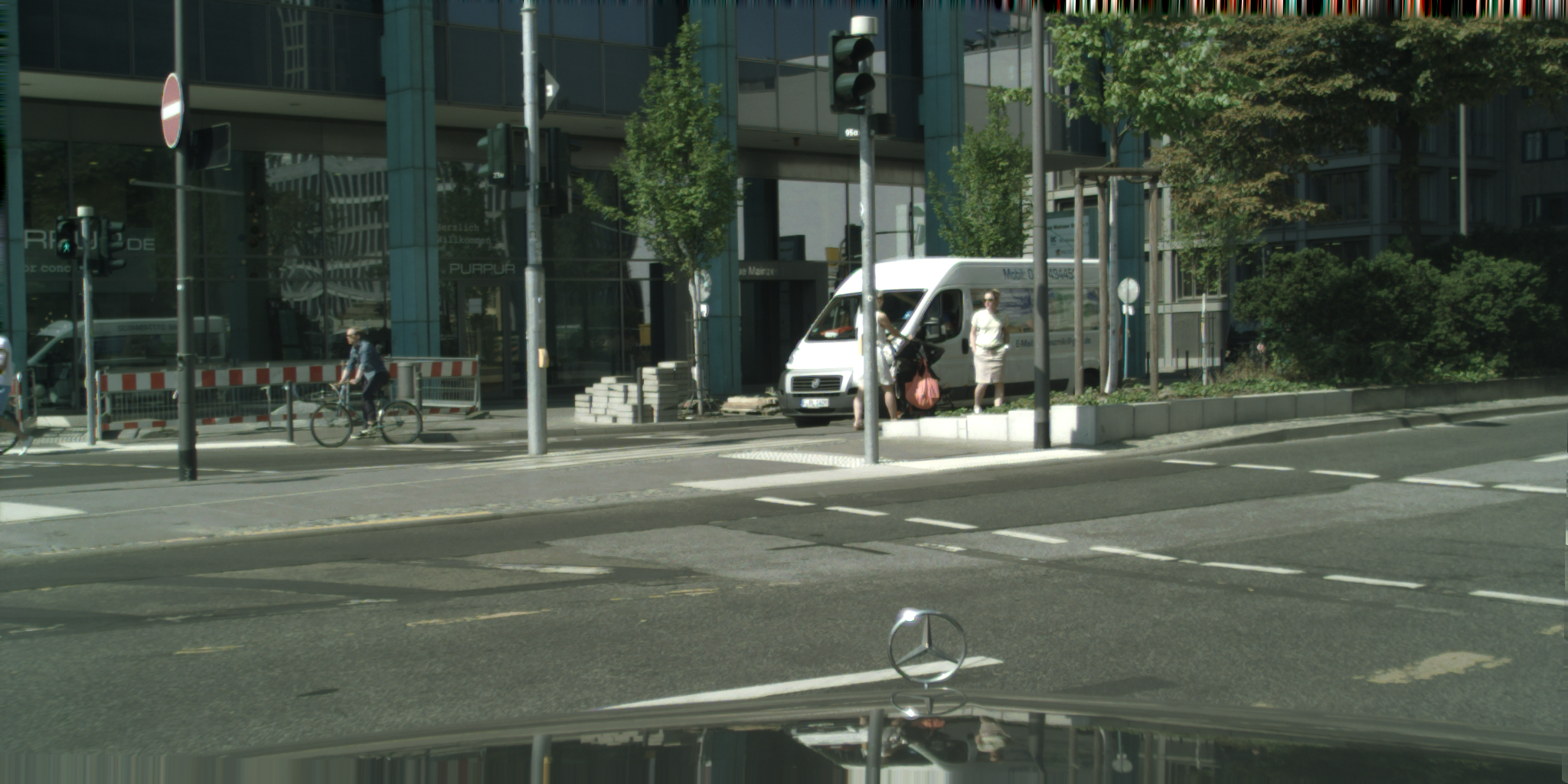}
	\includegraphics[width=1\linewidth]{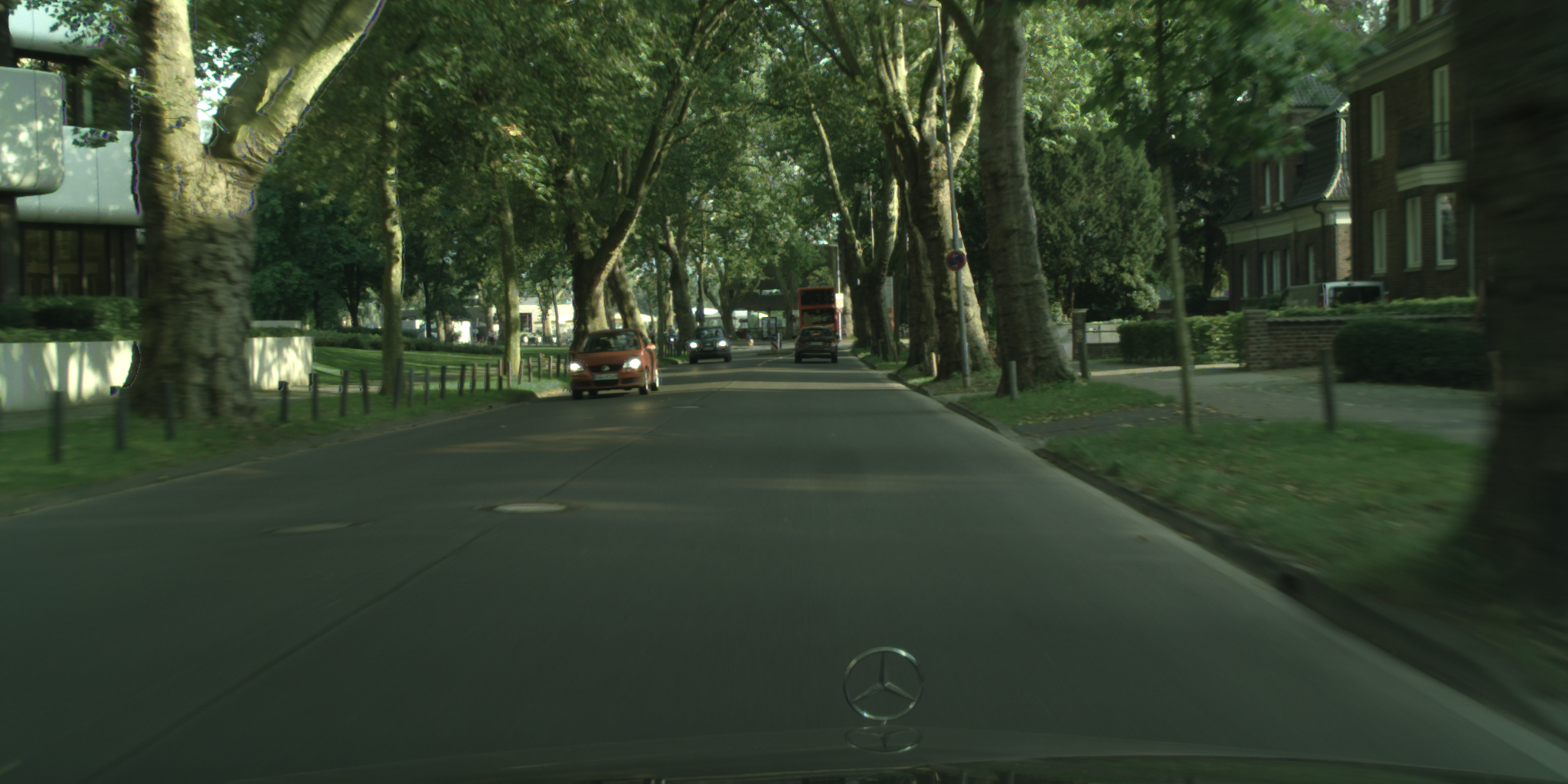}
	\includegraphics[width=1\linewidth]{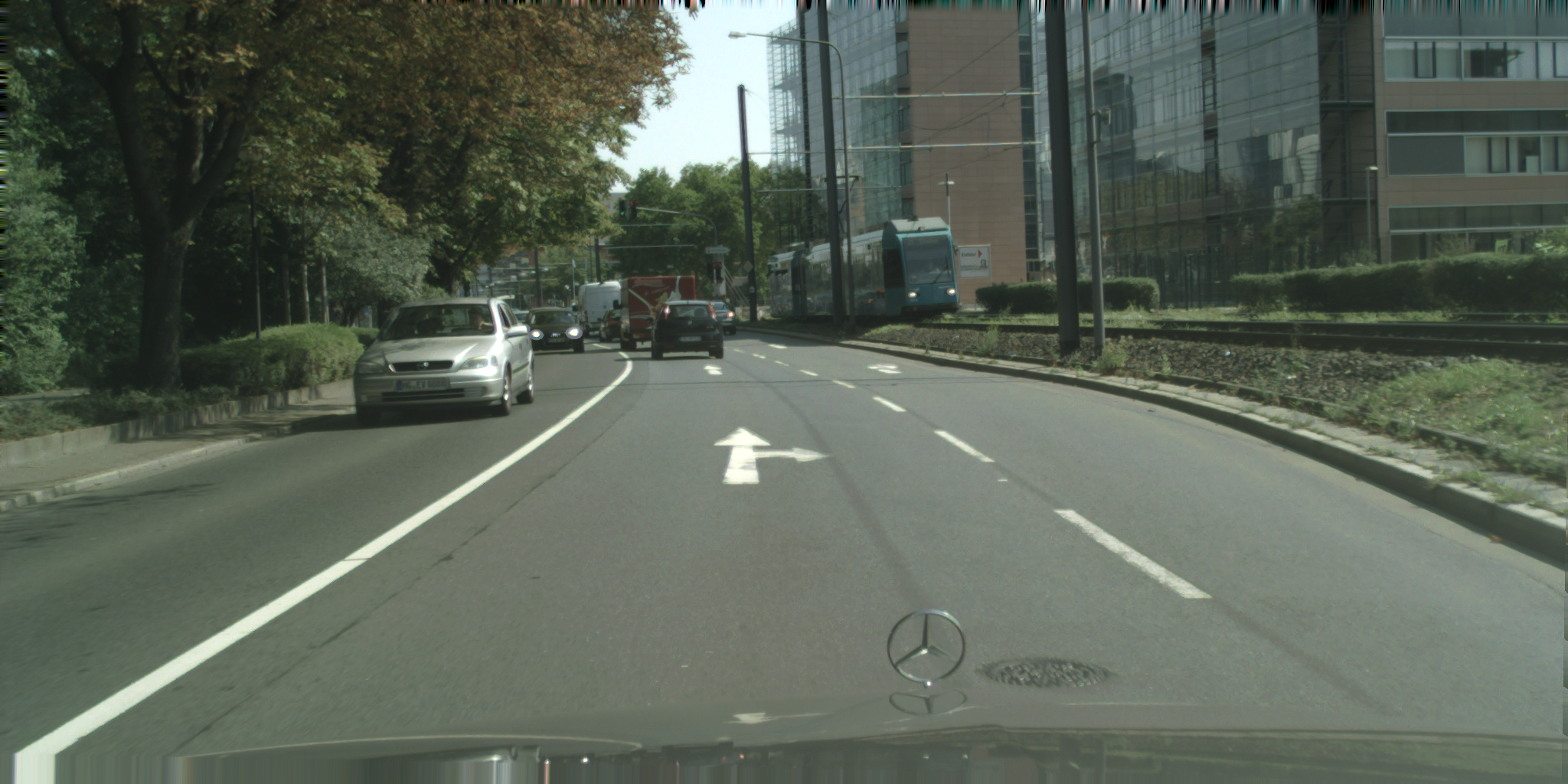}
	\includegraphics[width=1\linewidth]{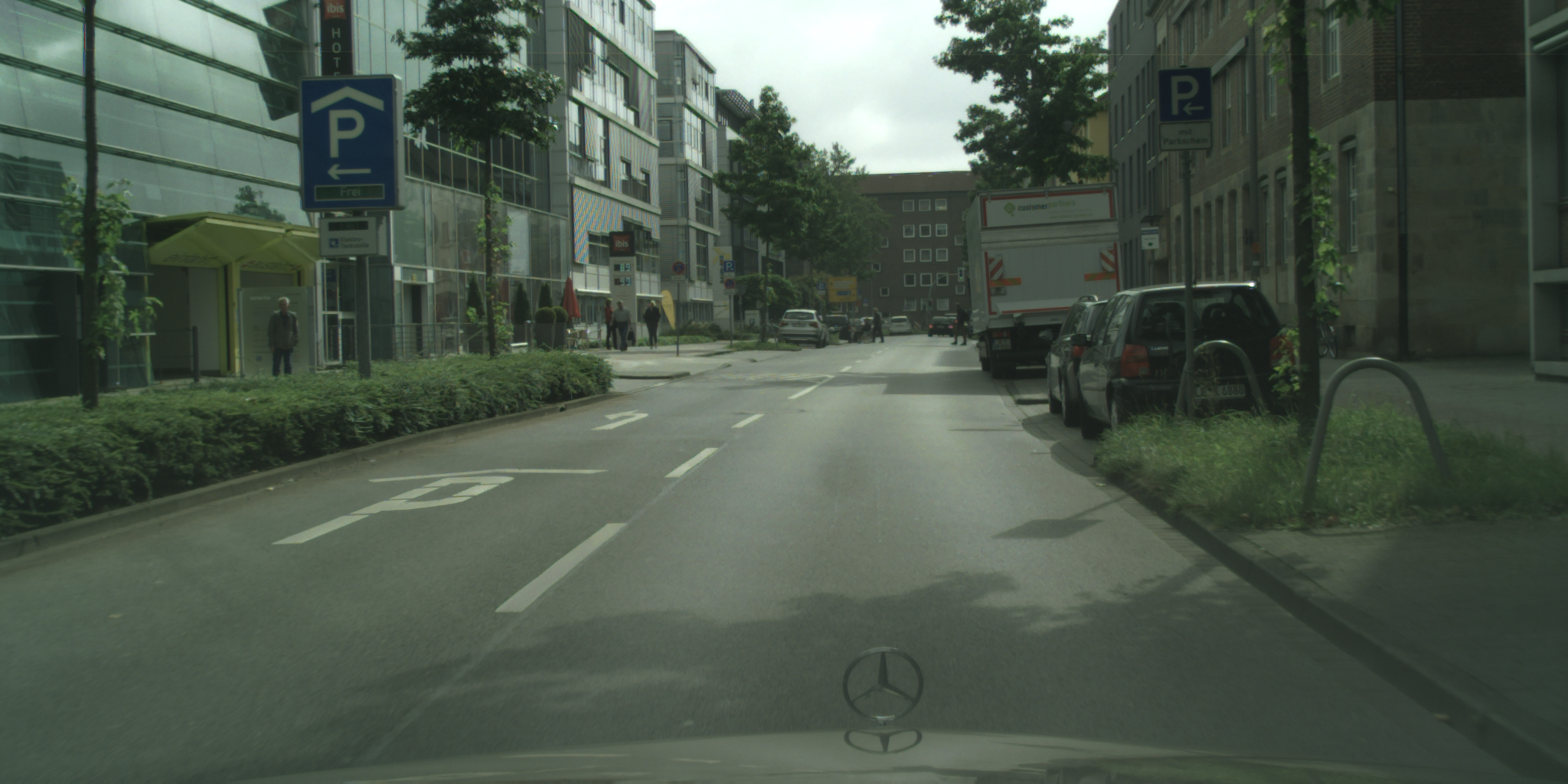}
  \includegraphics[width=1\linewidth]{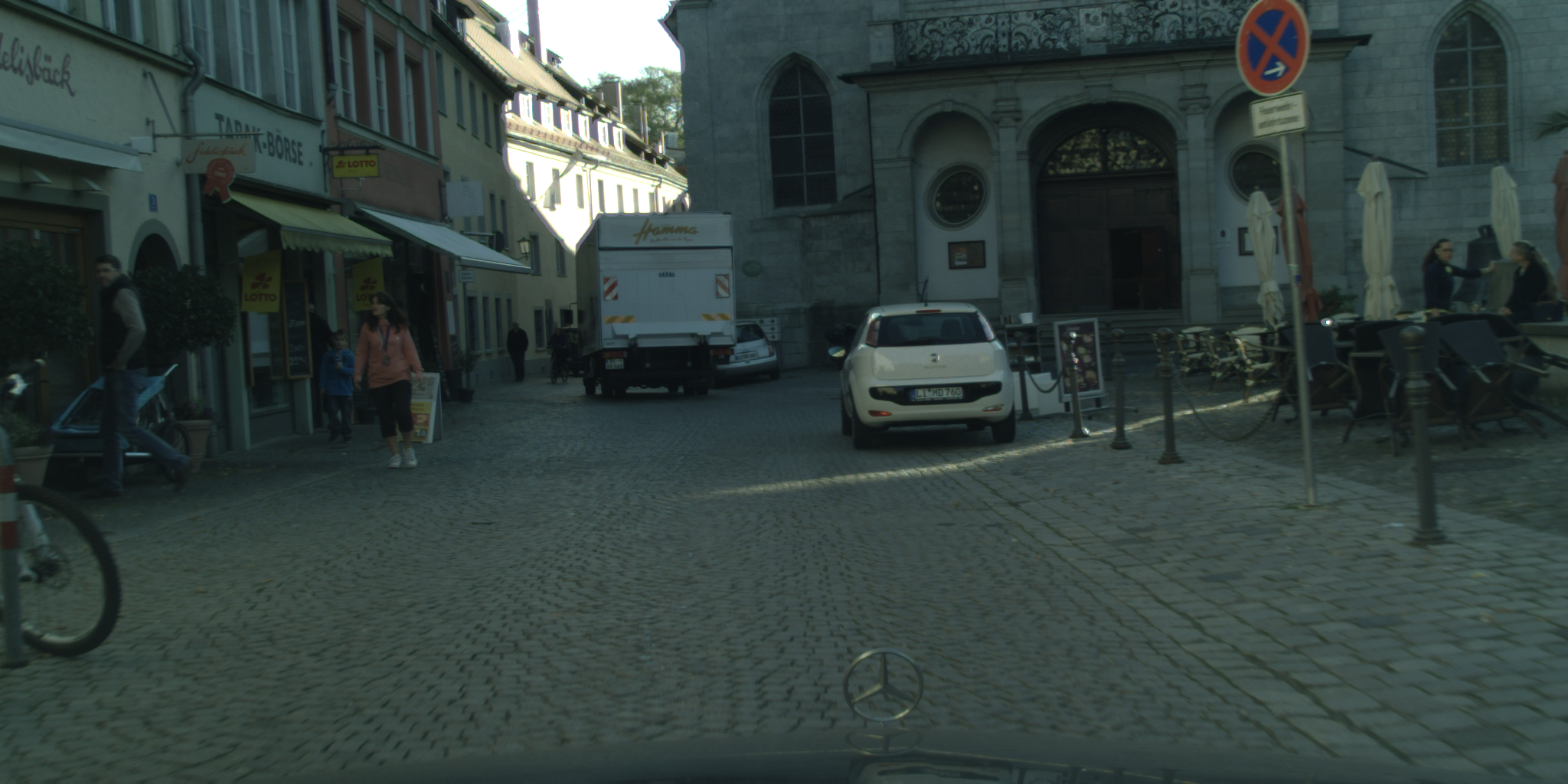}
	\centerline{\tiny (a) Images}
	\end{minipage}
	}
	\subfigure{
	\begin{minipage}[b]{0.2\linewidth}
    \includegraphics[width=1\linewidth]{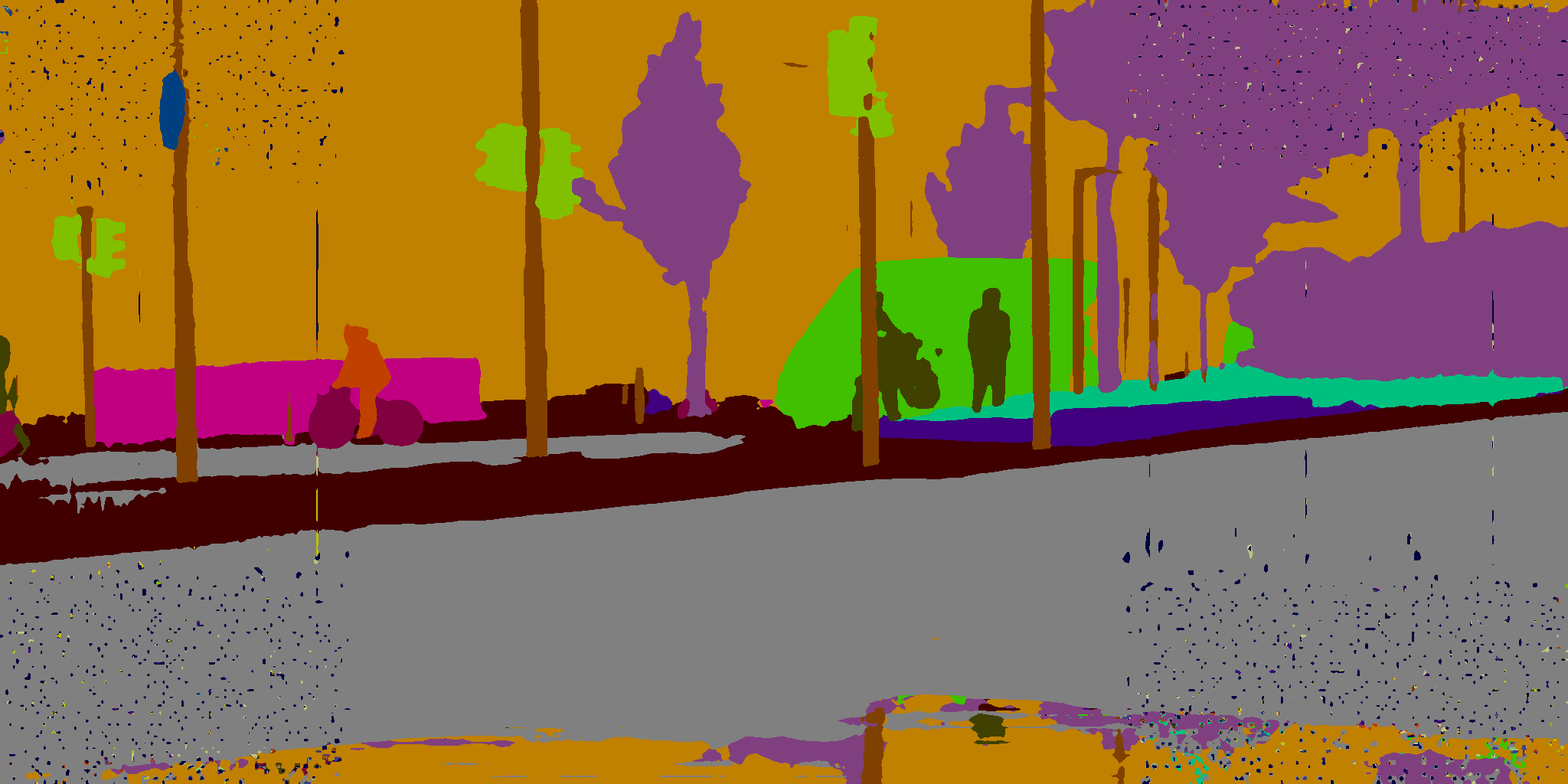}
  	\includegraphics[width=1\linewidth]{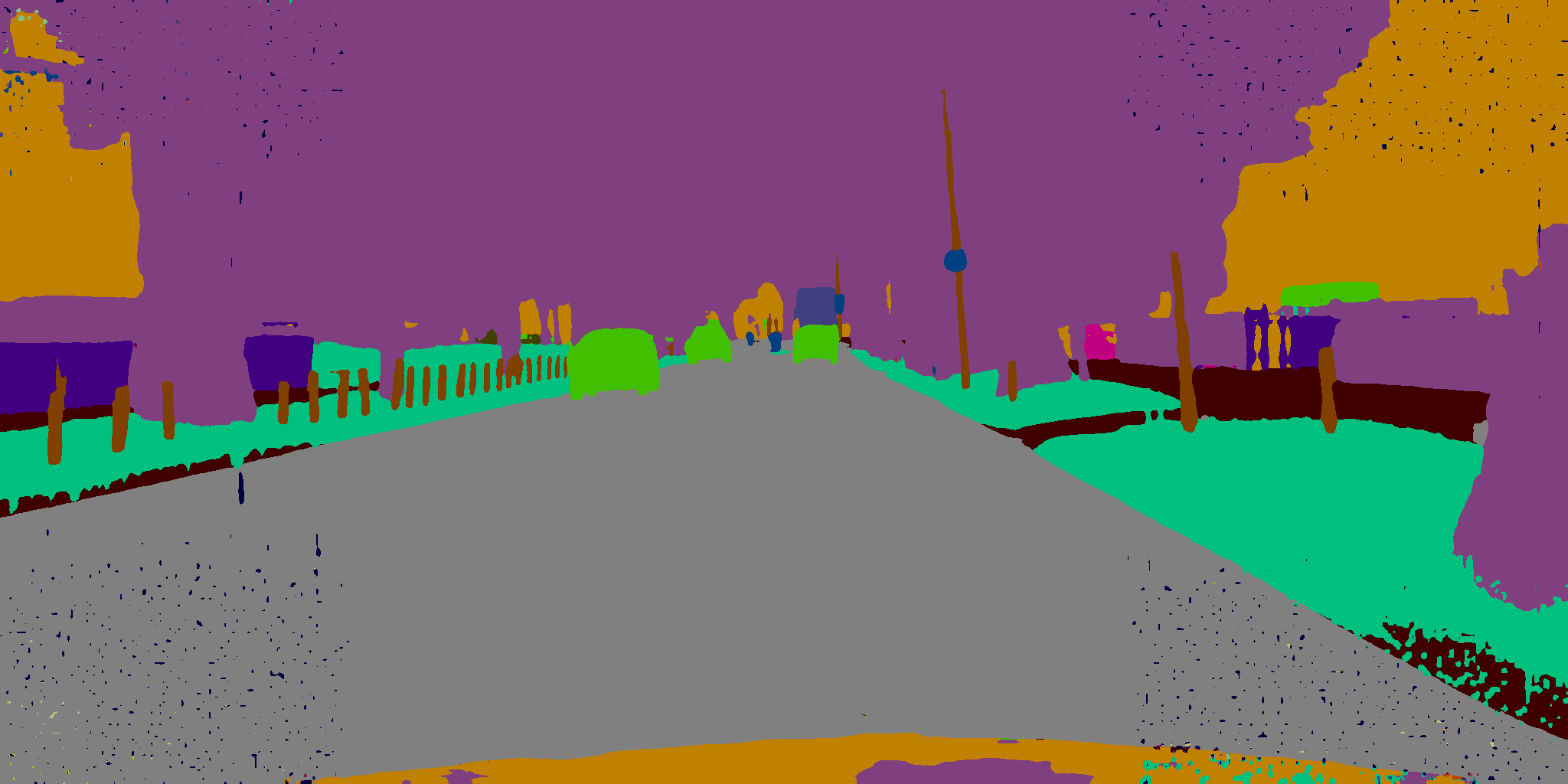}
  	\includegraphics[width=1\linewidth]{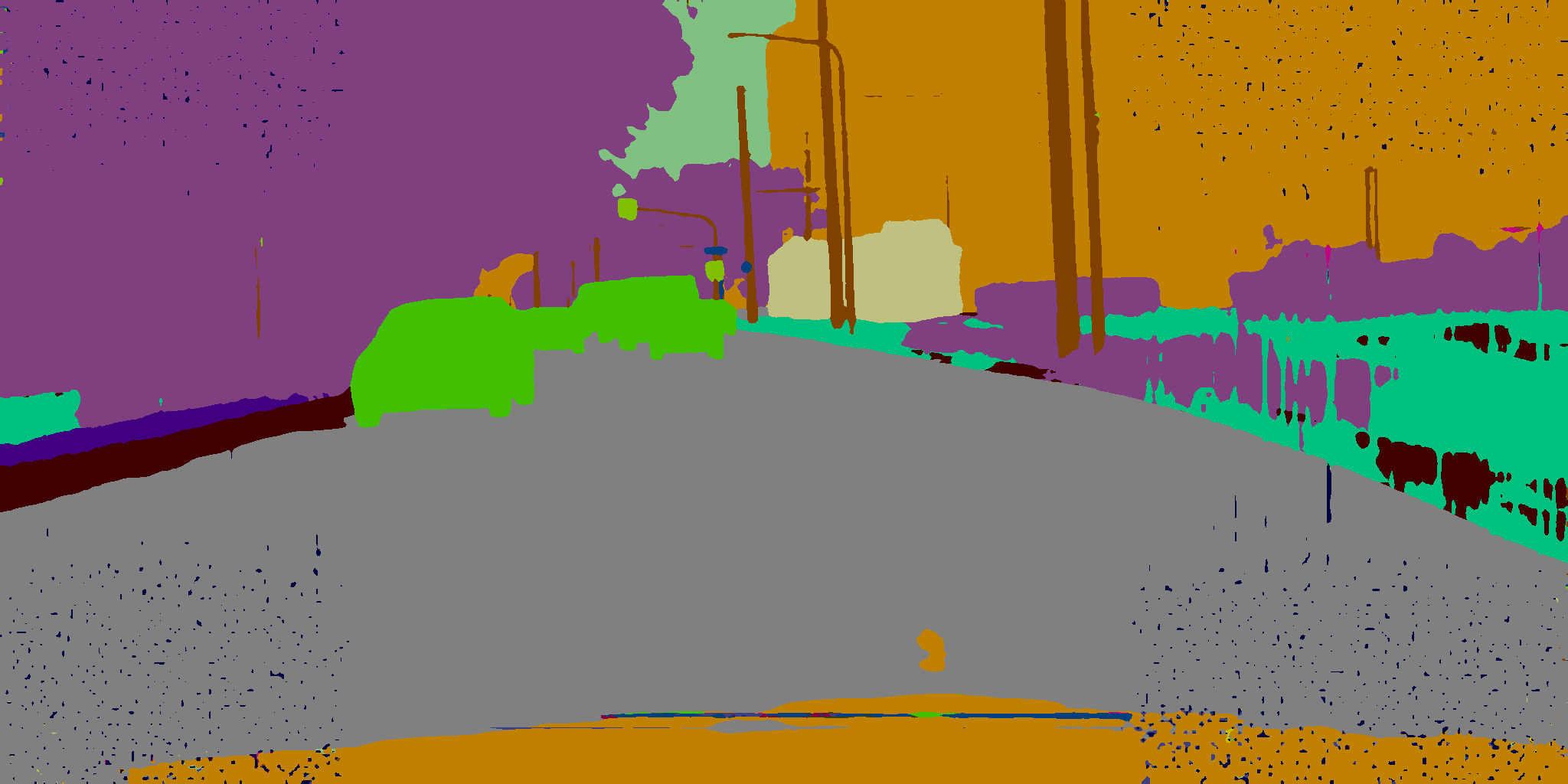}
  	\includegraphics[width=1\linewidth]{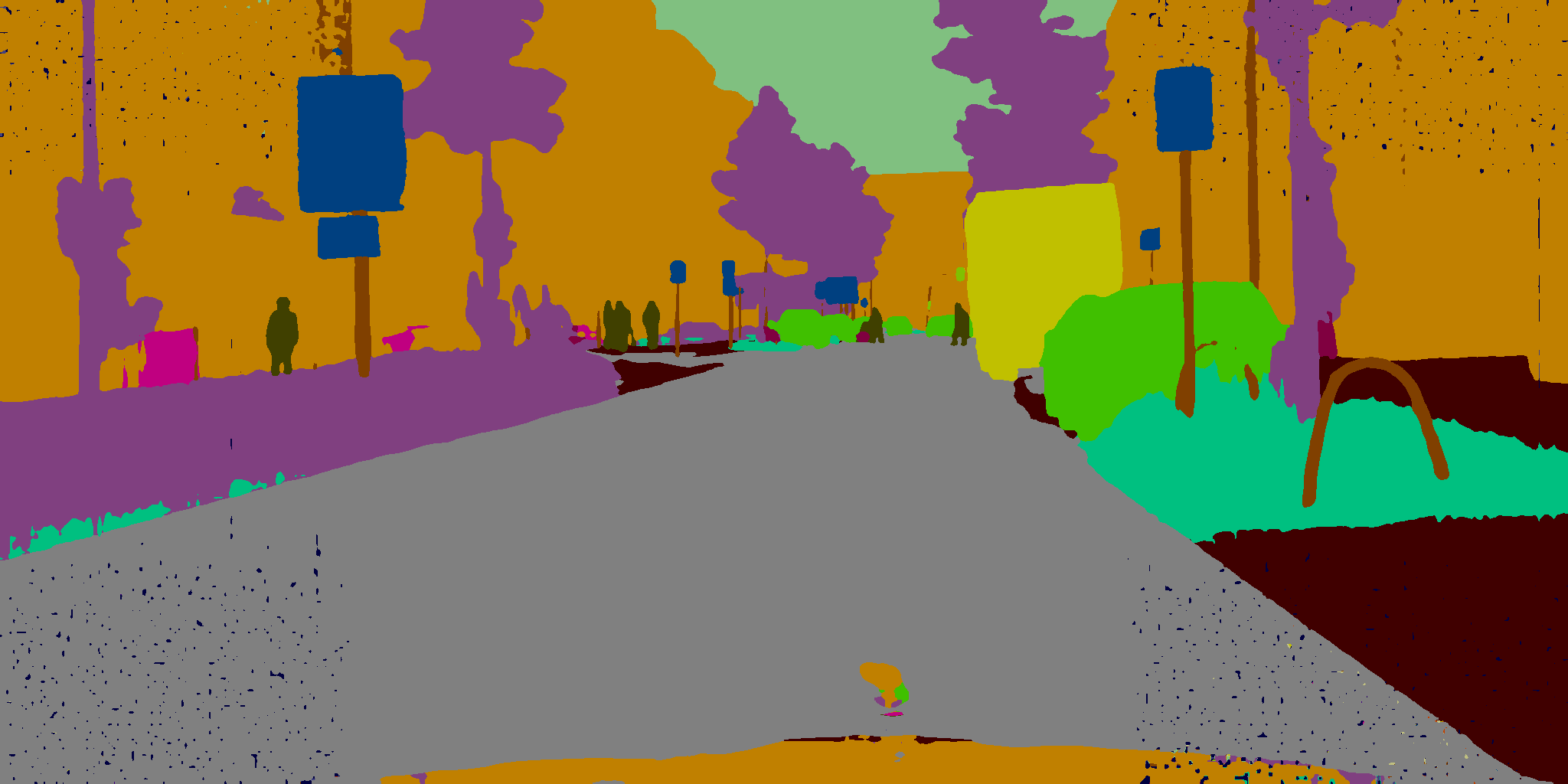}
    \includegraphics[width=1\linewidth]{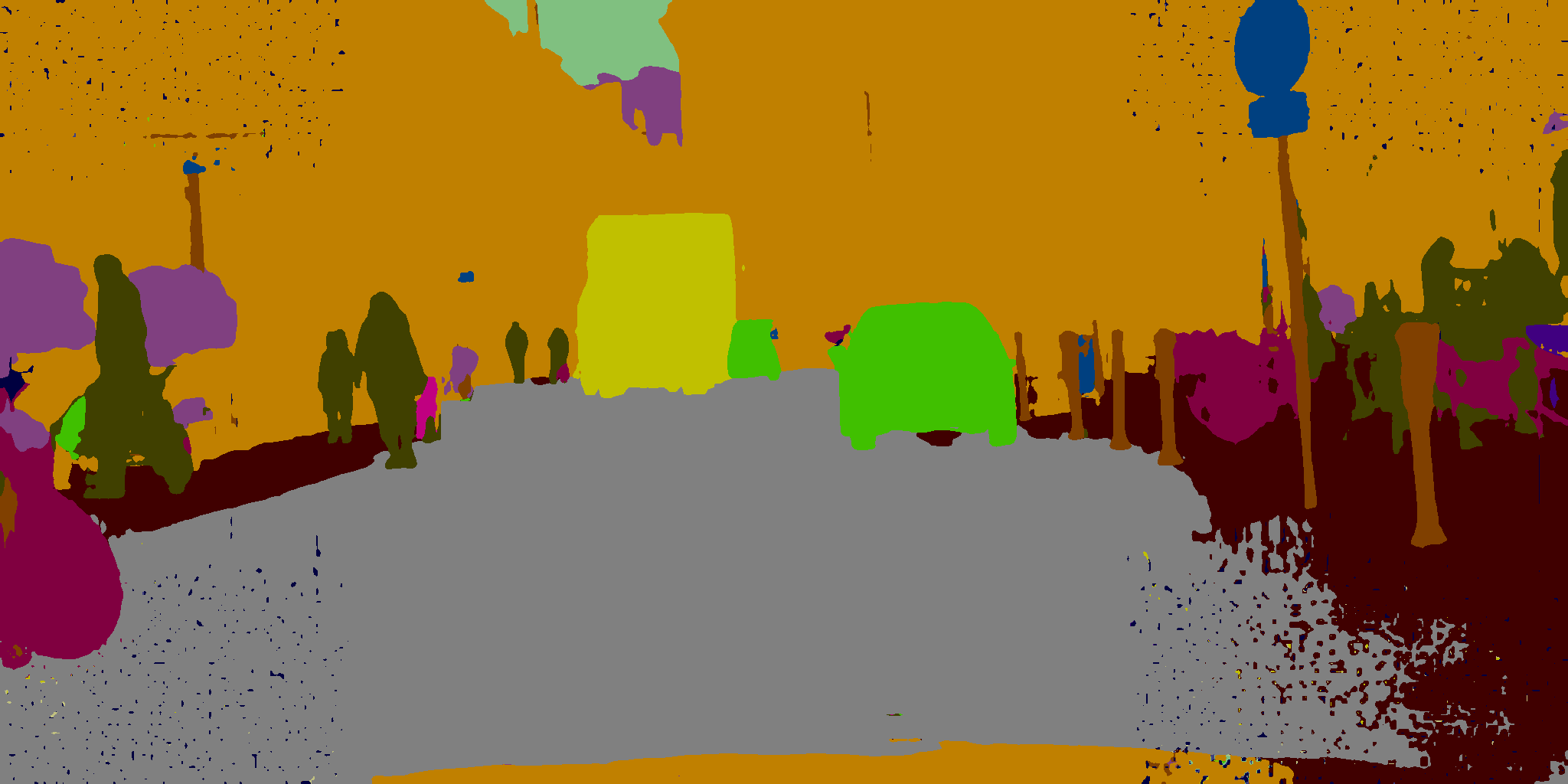}
	\centerline{\tiny (b) HRNetV2-W48}
	\end{minipage}
	}
	\subfigure{
	\begin{minipage}[b]{0.2\linewidth}
    \includegraphics[width=1\linewidth]{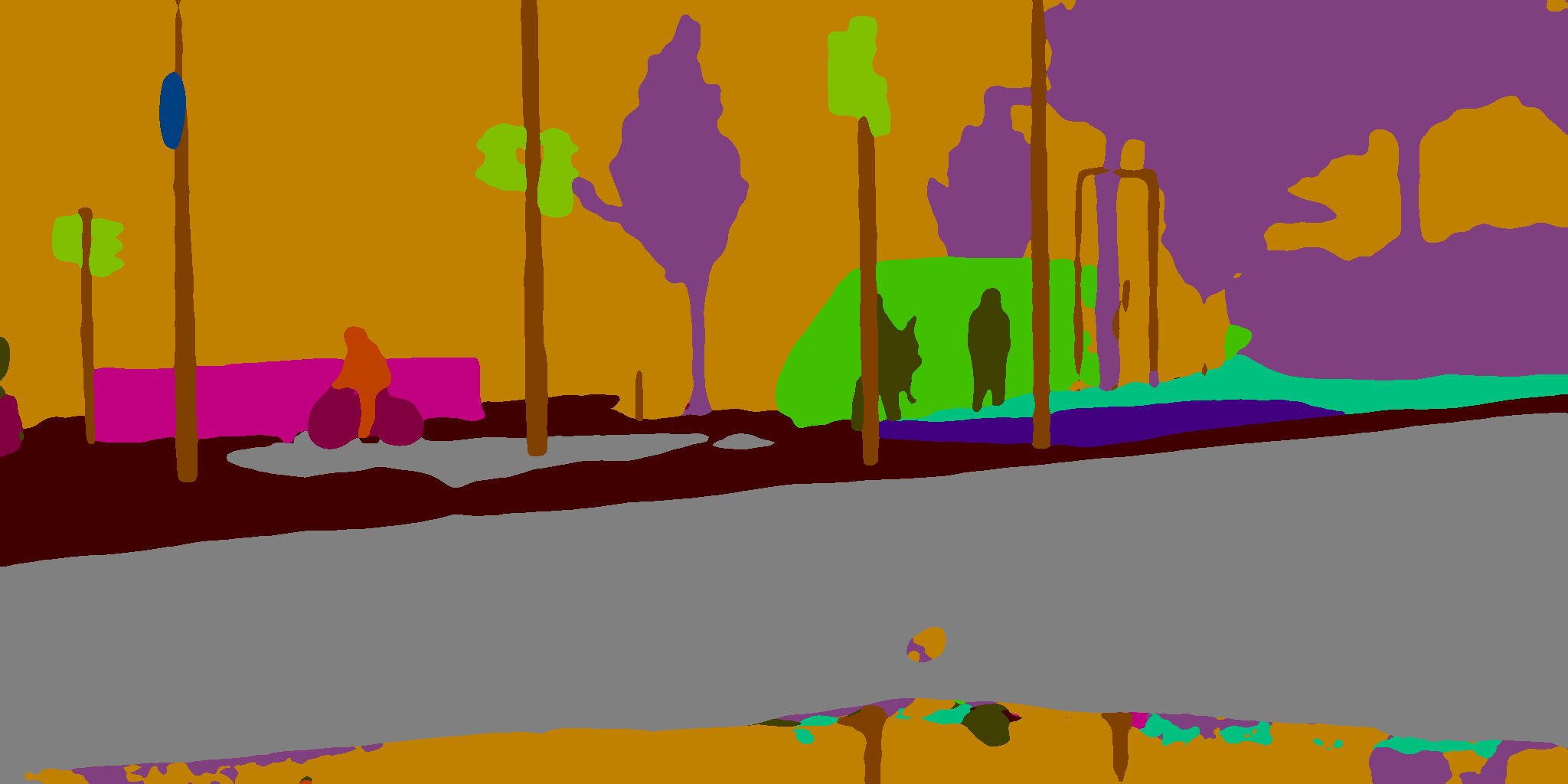}
  	\includegraphics[width=1\linewidth]{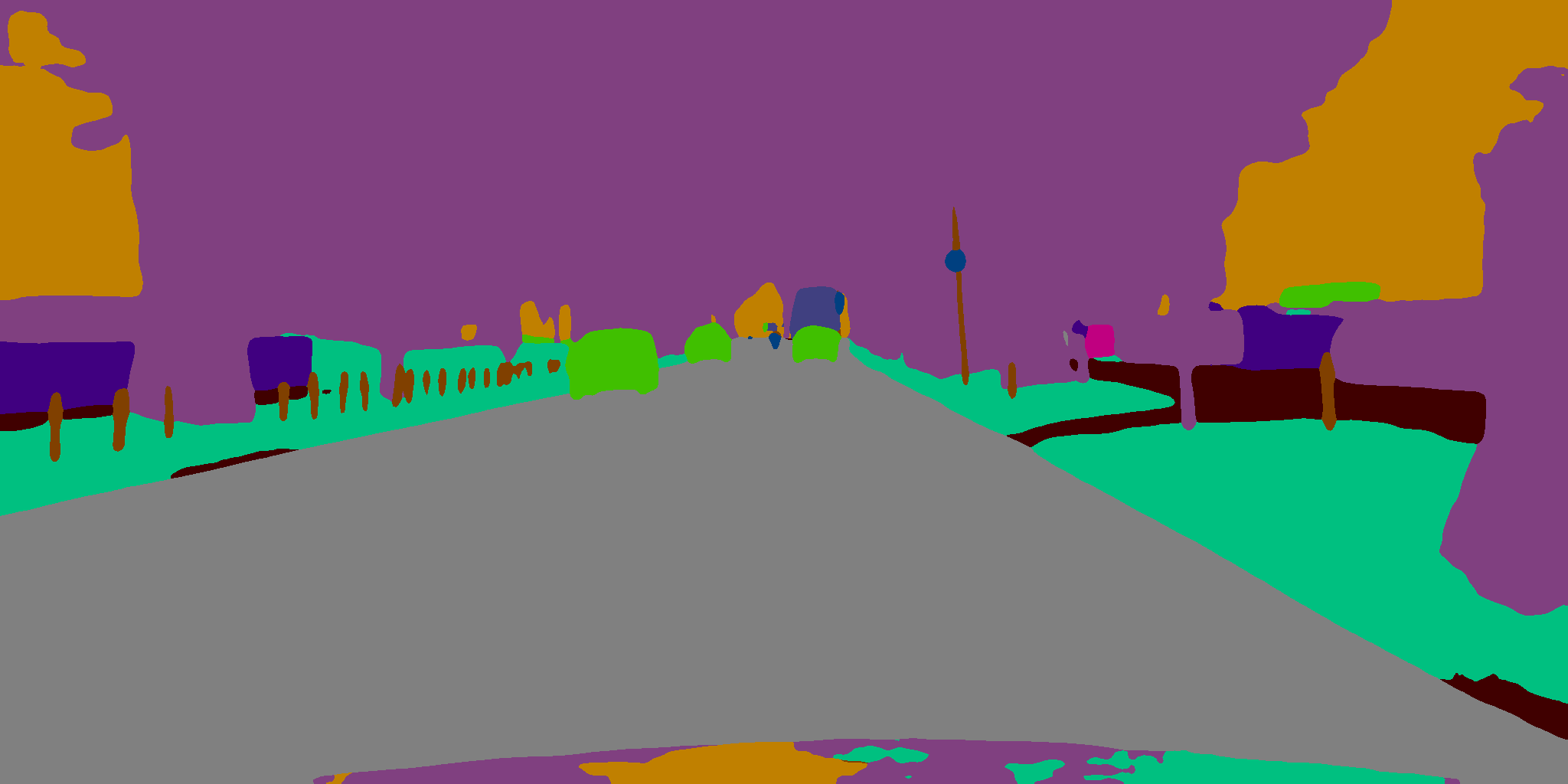}
  	\includegraphics[width=1\linewidth]{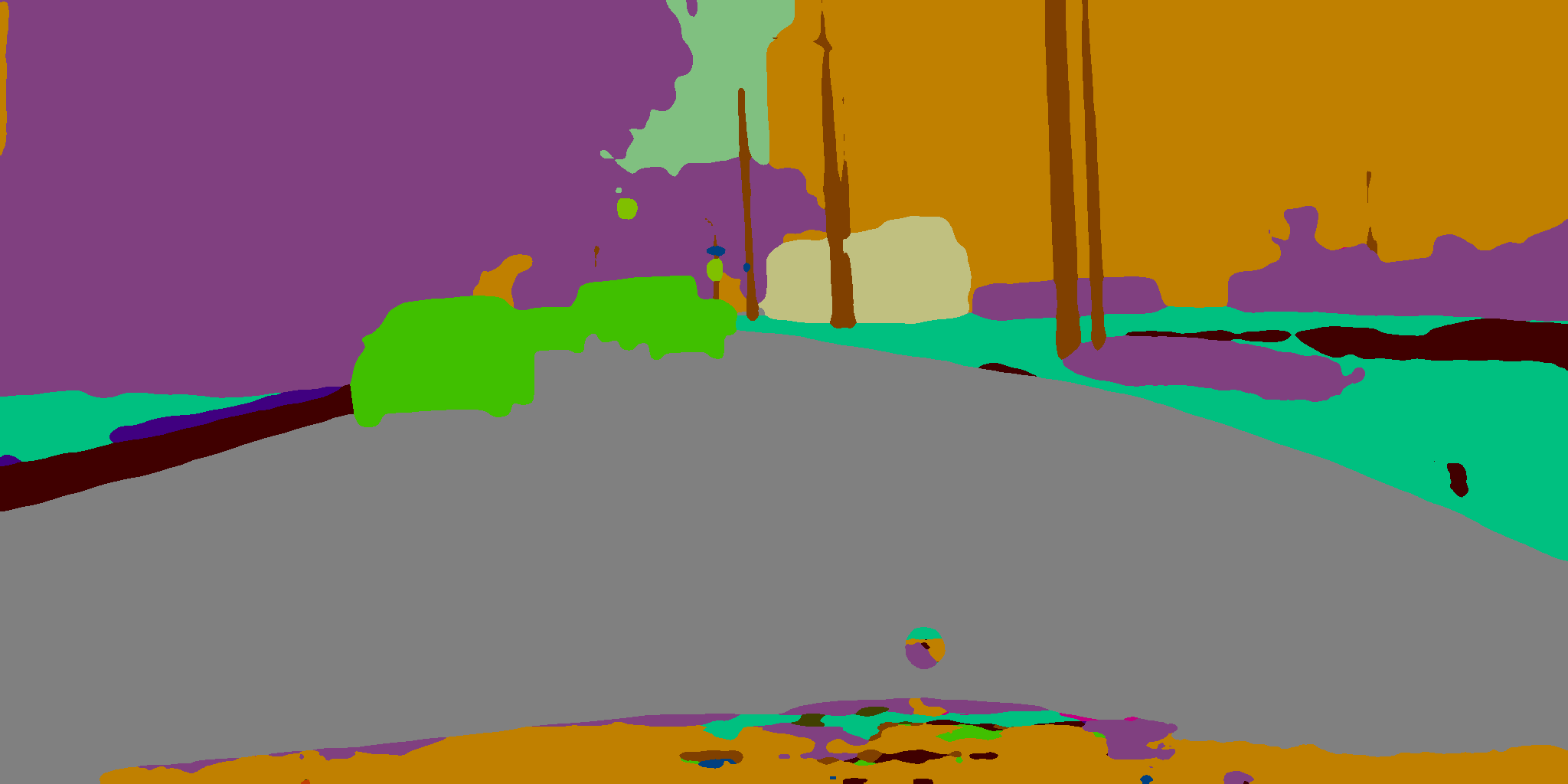}
  	\includegraphics[width=1\linewidth]{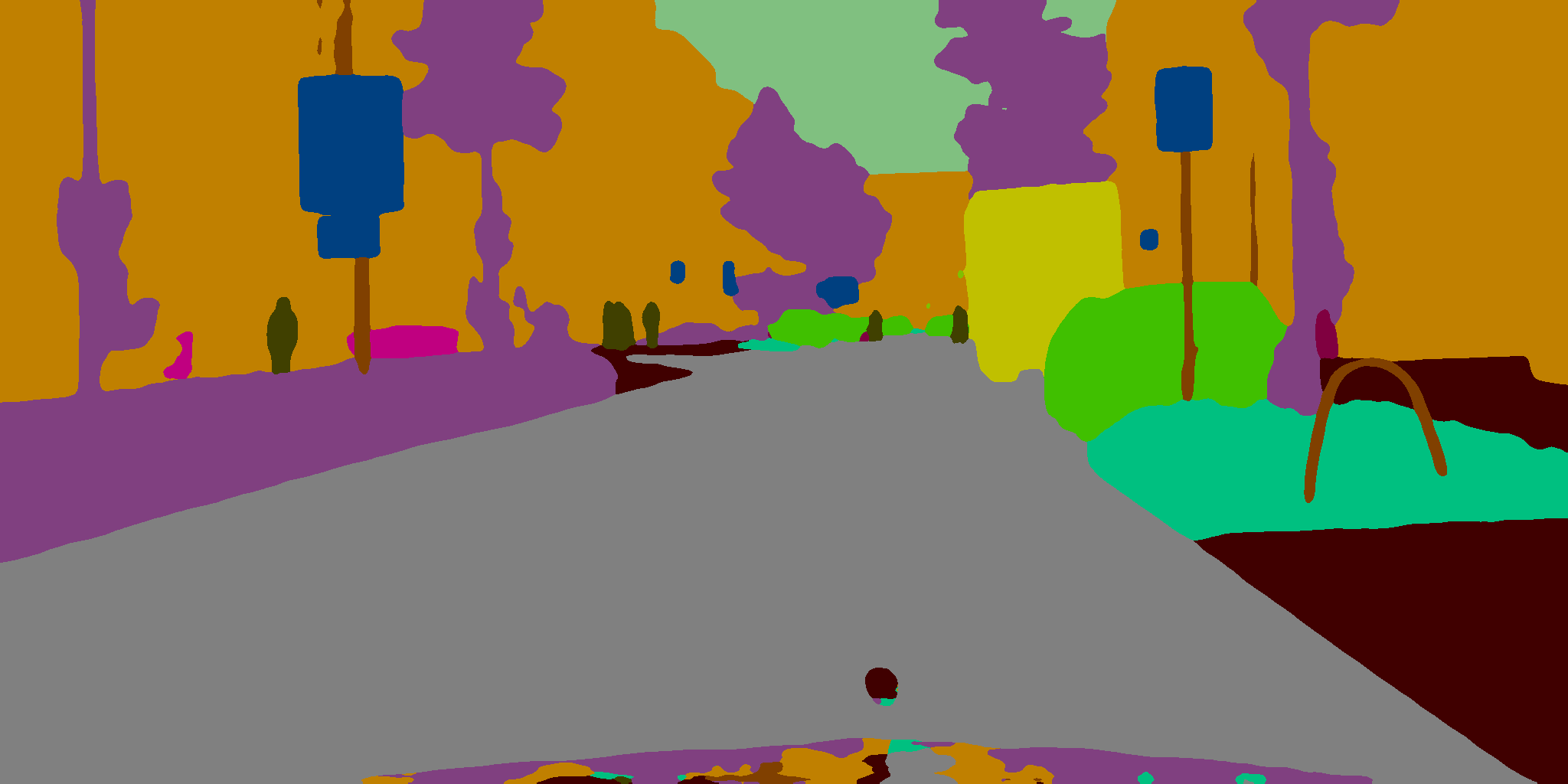}
    \includegraphics[width=1\linewidth]{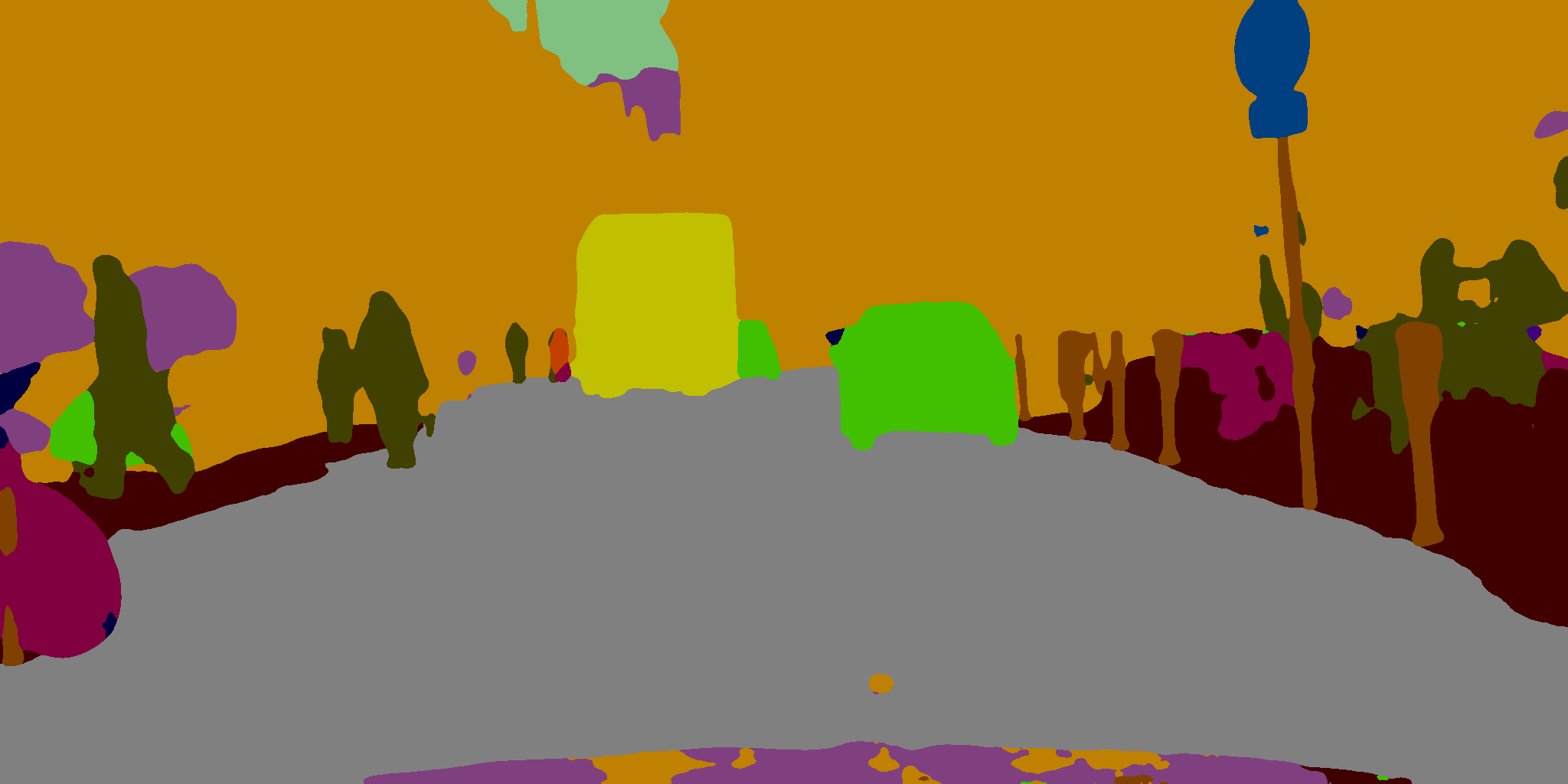}
	\centerline{\tiny (c) HRNetV2-W48 with diffusion IELs}
	\end{minipage}
	}
	\subfigure{
	\begin{minipage}[b]{0.2\linewidth}
    \includegraphics[width=1\linewidth]{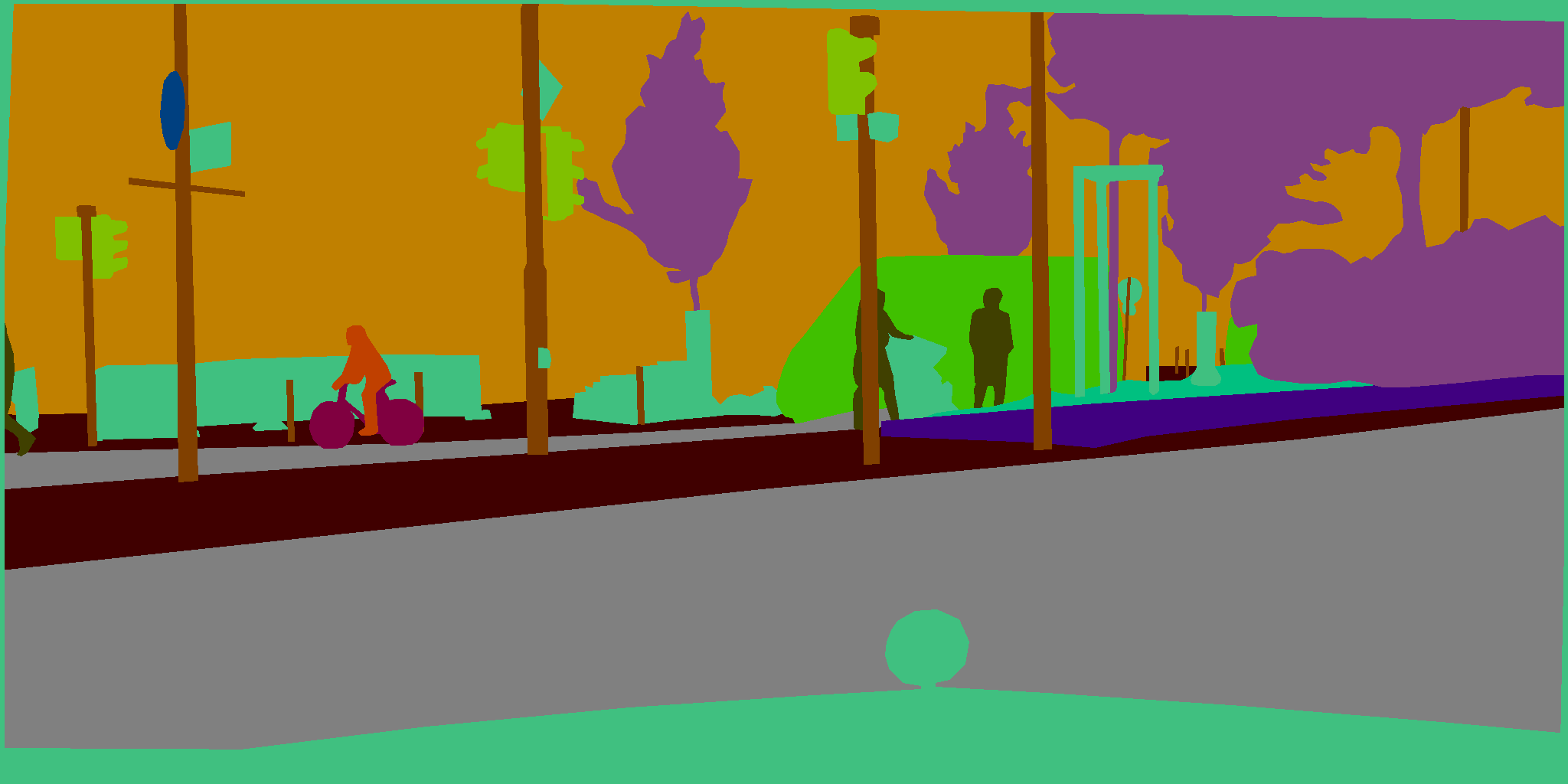}
  	\includegraphics[width=1\linewidth]{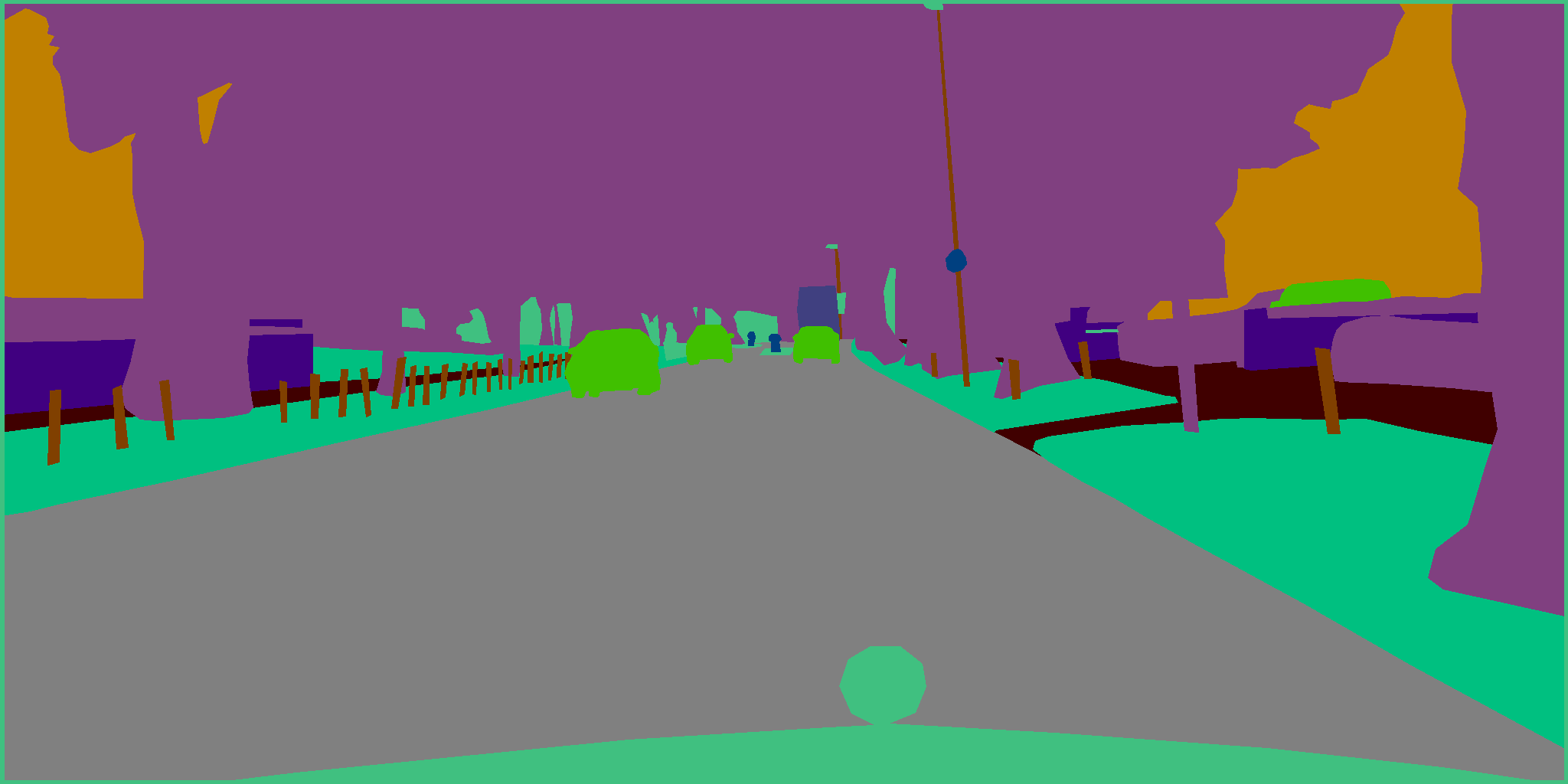}
  	\includegraphics[width=1\linewidth]{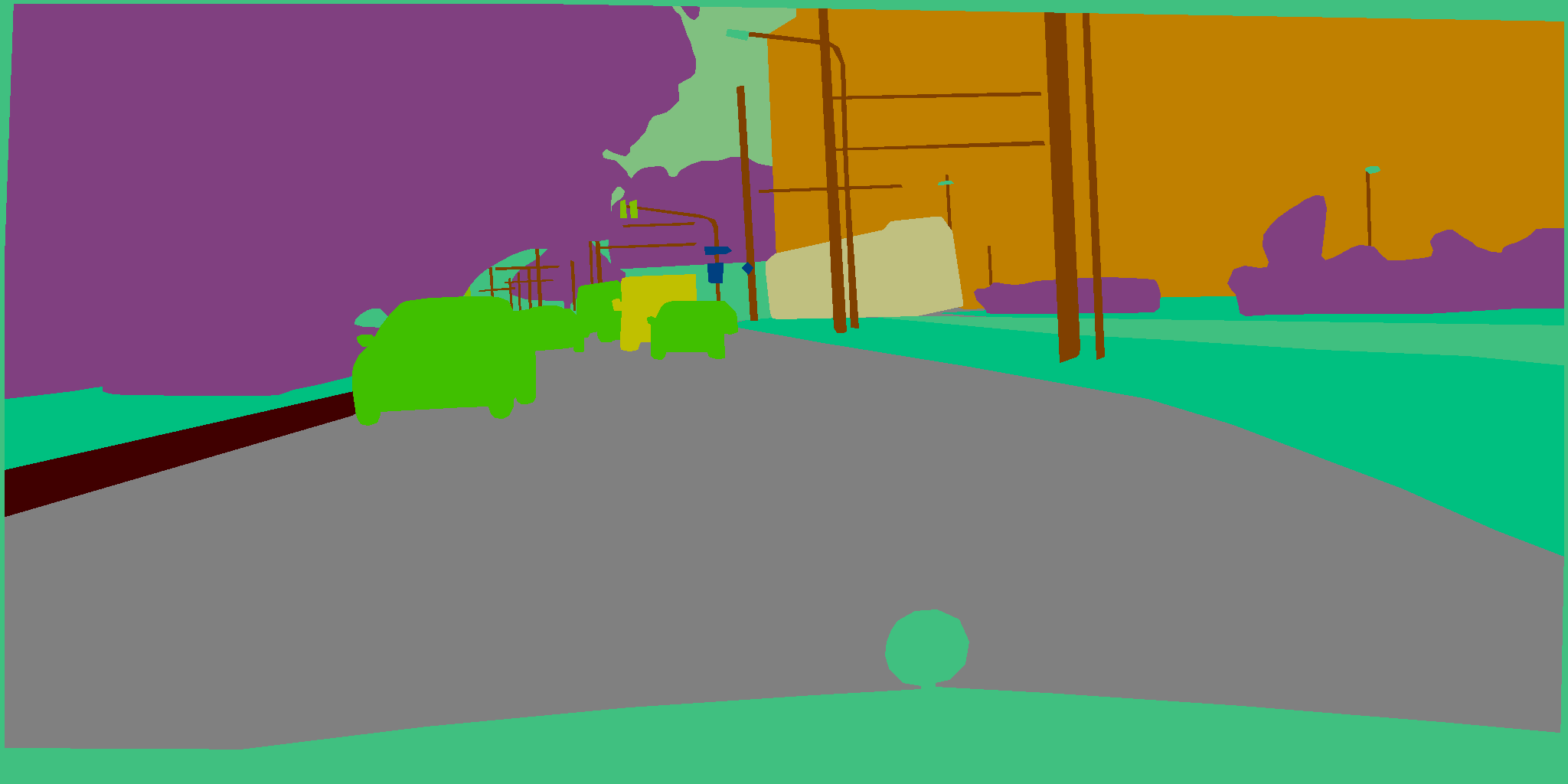}
  	\includegraphics[width=1\linewidth]{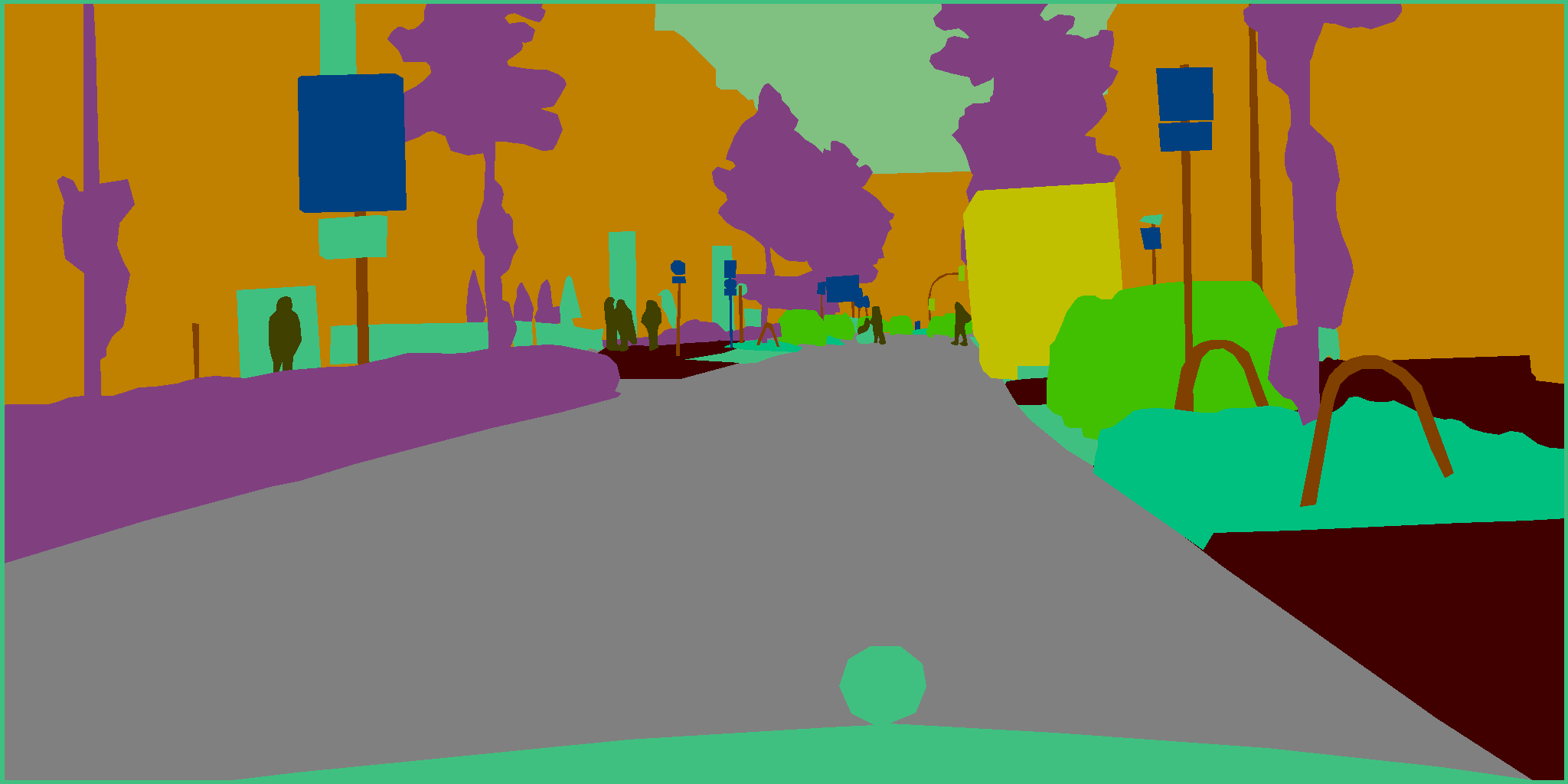}
    \includegraphics[width=1\linewidth]{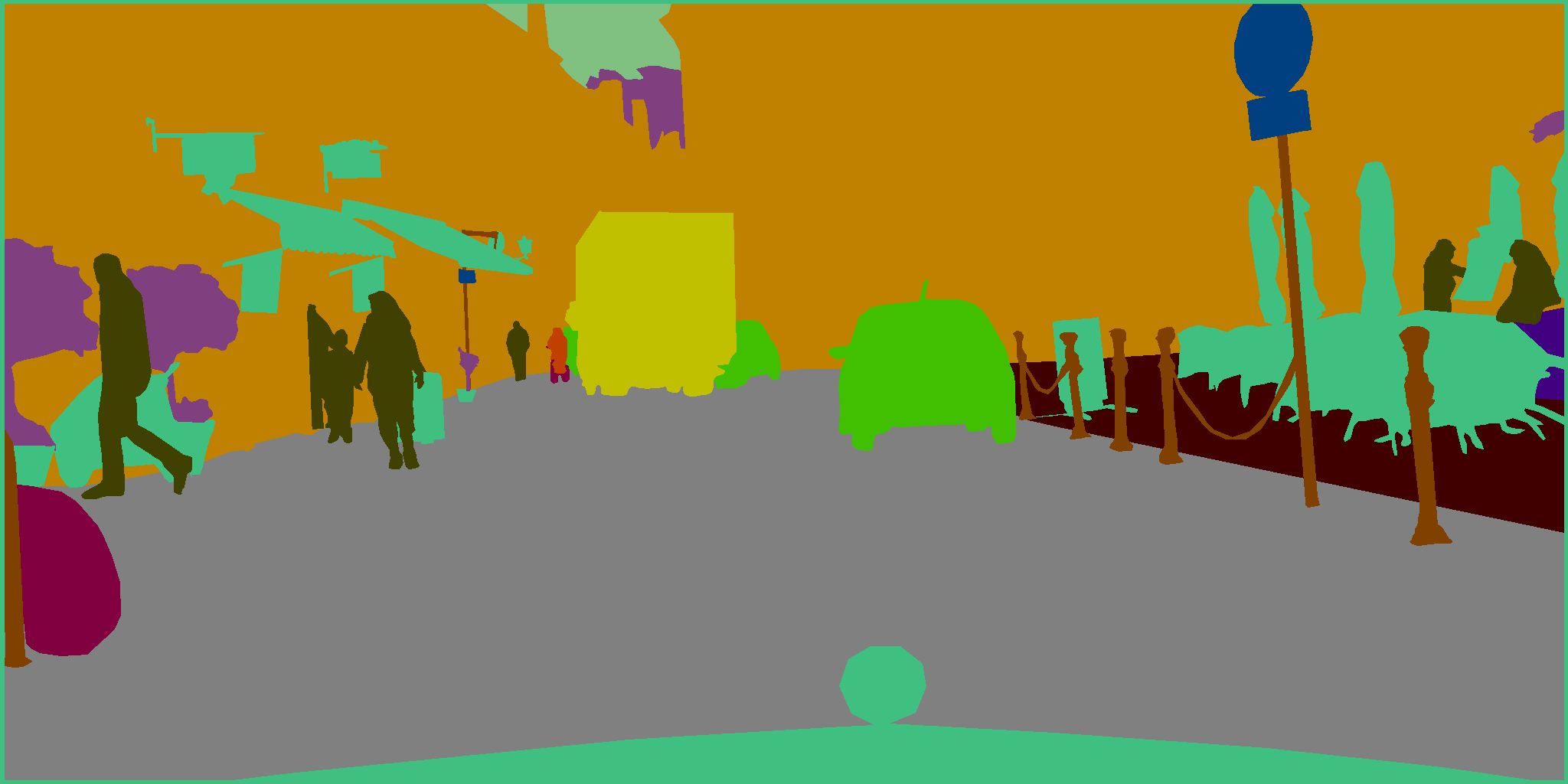}
	\centerline{\tiny (d) Ground truth}
	\end{minipage}
	}
	 \caption{Comparison between HRNetV2-W48 \cite{sun2019high} and HRNetV2-W48 with heat-diffusion IELs on Cityscapes dataset with noisy labels. (a) and (d) are the original images and the corresponding ground truth, respectively. (b) and (c) exhibit the segmentation results of the original HRNetV2-W48 and HRNetV2-W48 with heat-diffusion IELs, respectively.} \label{hrnet_fig}
\end{figure*}
\subsubsection{Unet with Heat-diffusion IELs on DRIVE dataset}
In addition to addressing the overfitting issue caused by artificially introduced synthetic noise, we also apply heat-diffusion IELs to tackle overfitting on datasets with clean labels. We utilize the Digital Retinal Images for Vessel Extraction (DRIVE) dataset \cite{1282003} for testing, which contains 20 images for training and 20 images for test. In our experiment, we employ an augmented training dataset of 80 images for training and the test images for validation. First, we test the original Unet on the DRIVE dataset. The dice scores on the validation set are recorded in Figure \ref{drive_dice}. And several segmentation results are displayed in Figure \ref{drive_unet}, which reflects the original Unet overfits the dense, small branches of blood vessels in the eyeball. Next, we test the Unet with heat-diffusion IELs of varying numbers of layers. The dice scores on the validation set are recorded in Figure \ref{drive_dice} and several corresponding segmentation results are showed in Figure \ref{drive_unet_iels}. Figure \ref{drive_unet_iels} demonstrates that as the number of IELs increases, overfitting on the dense small branches of blood vessels is alleviated. The best dice score is achieved with 10 IELs, as indicated in Figure \ref{drive_dice}. In our experiment, we only test the heat-diffusion IELs, which can affect all blood vessels. Designing other IELs that specifically focus on small branches could potentially lead to even better results for improving the dice scores.

\begin{figure}[t!]
  \centering
  \includegraphics[width=8cm]{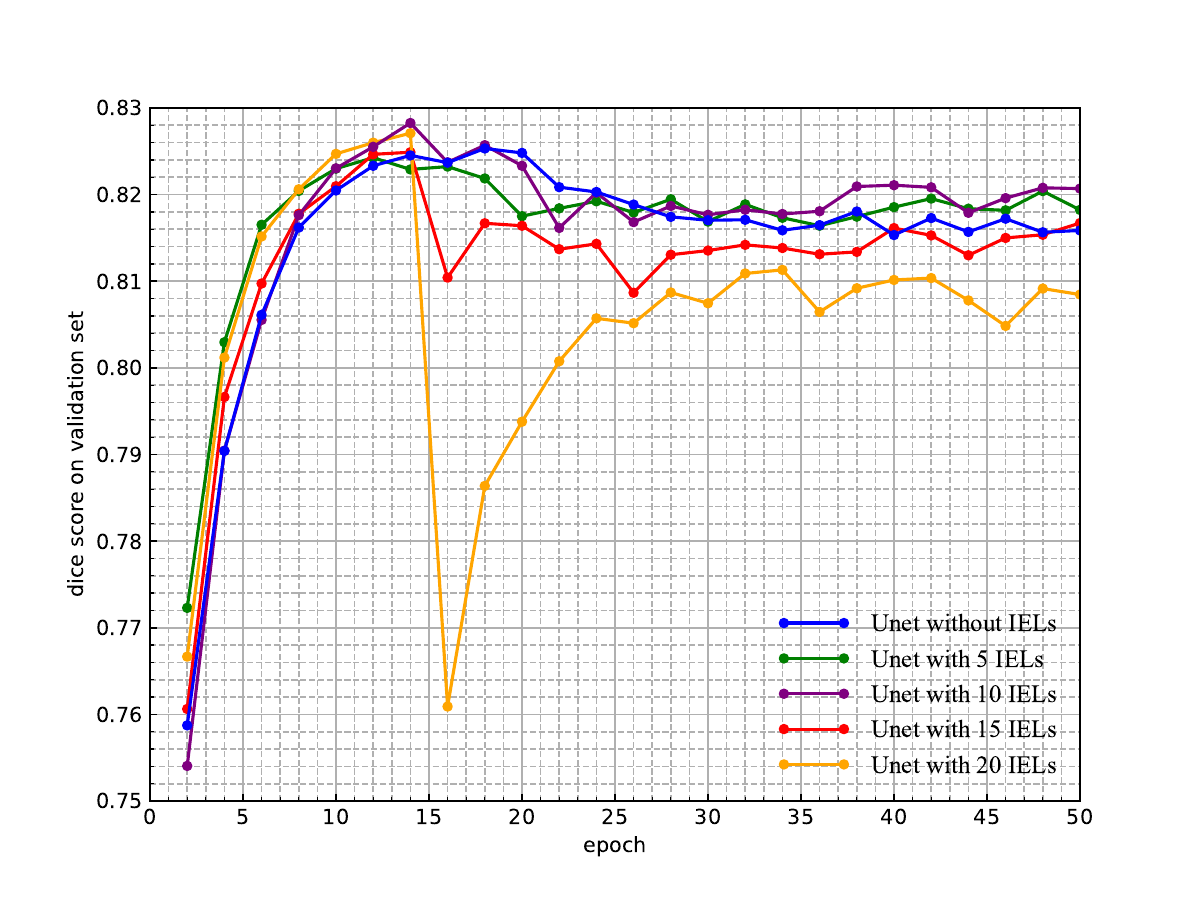}
 \caption{Dice scores on the validation set of DRIVE dataset based on Unet and Unet with heat-diffusion IELs of different numbers. The results of them are highlighted by lines of different colors.} \label{drive_dice}
\end{figure}

\begin{figure*}
  \label{drive_unet}
	\centering
	\subfigure{
	\begin{minipage}[b]{0.18\linewidth}
  \includegraphics[width=1.1\linewidth]{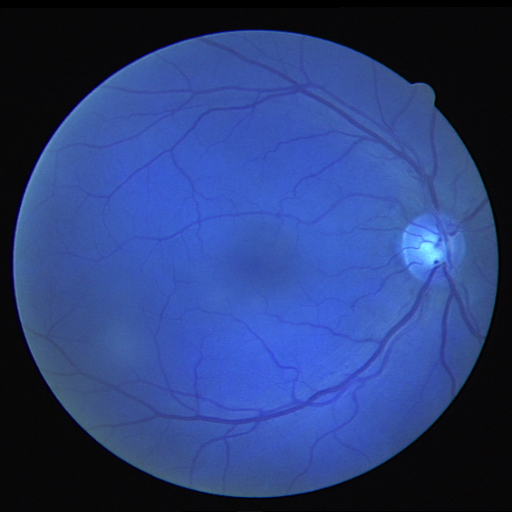}
	\includegraphics[width=1.1\linewidth]{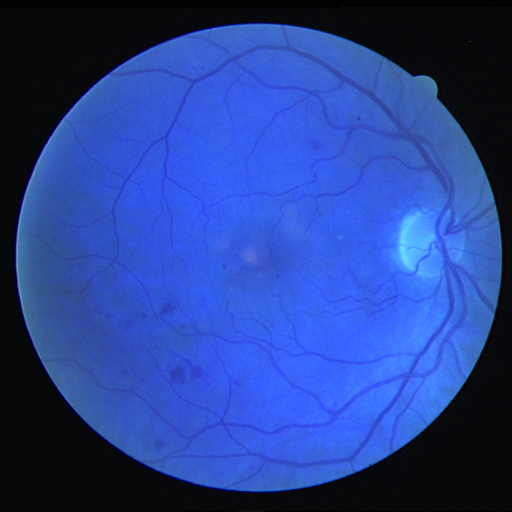}
	\includegraphics[width=1.1\linewidth]{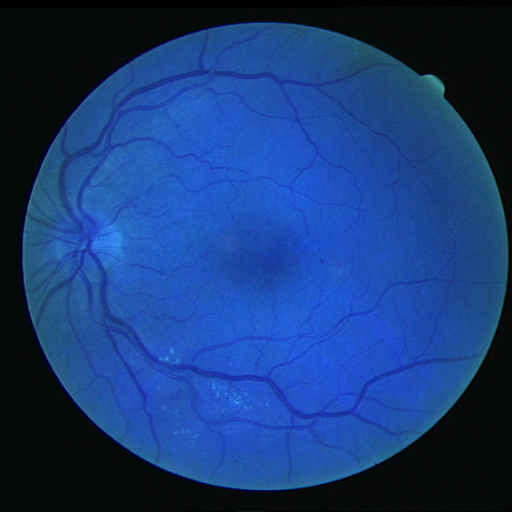}
	\centerline{\tiny (a) Images}
	\end{minipage}
	}
	\subfigure{
	\begin{minipage}[b]{0.18\linewidth}
    \includegraphics[width=1.1\linewidth]{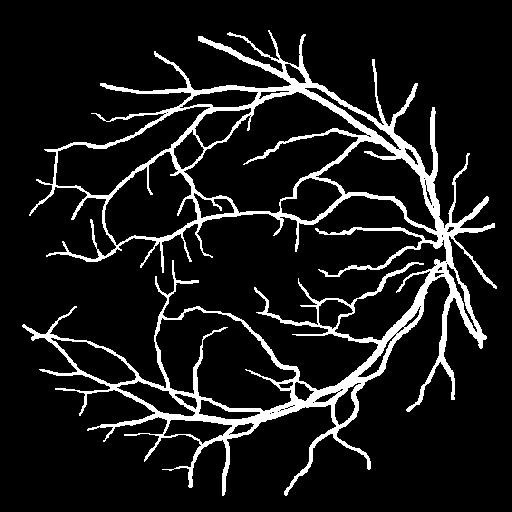}
  	\includegraphics[width=1.1\linewidth]{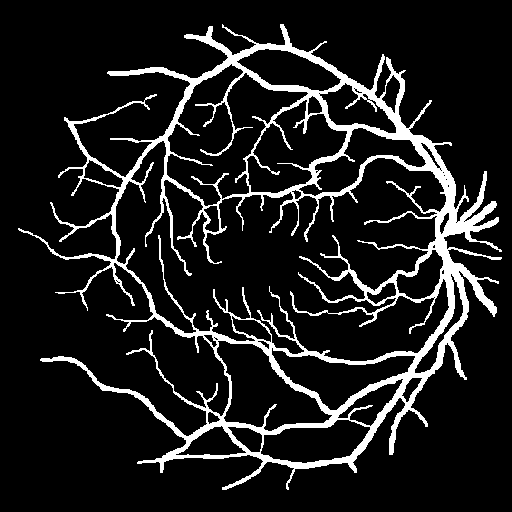}
  	\includegraphics[width=1.1\linewidth]{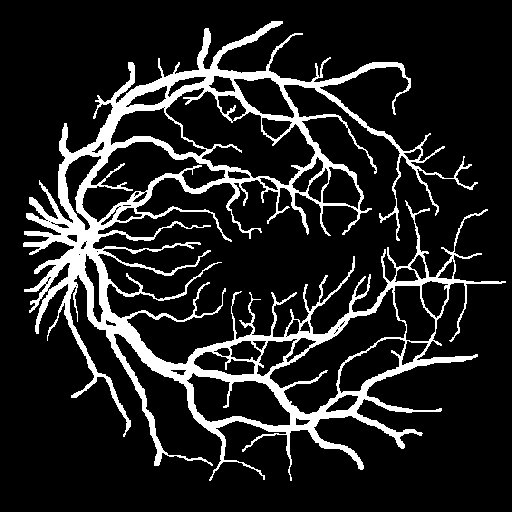}
	\centerline{\tiny (b) Ground truth}
	\end{minipage}
	}
	\subfigure{
	\begin{minipage}[b]{0.18\linewidth}
    \includegraphics[width=1.1\linewidth]{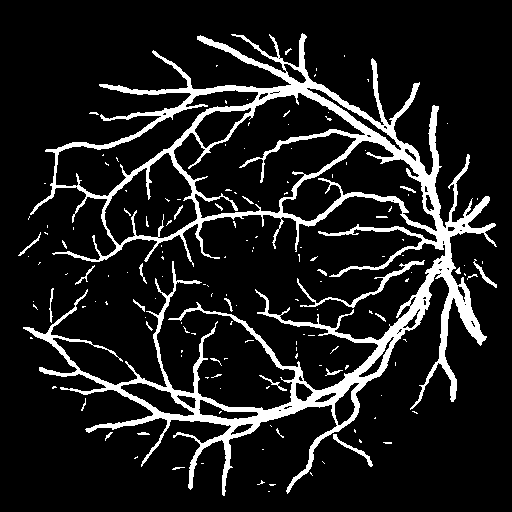}
  	\includegraphics[width=1.1\linewidth]{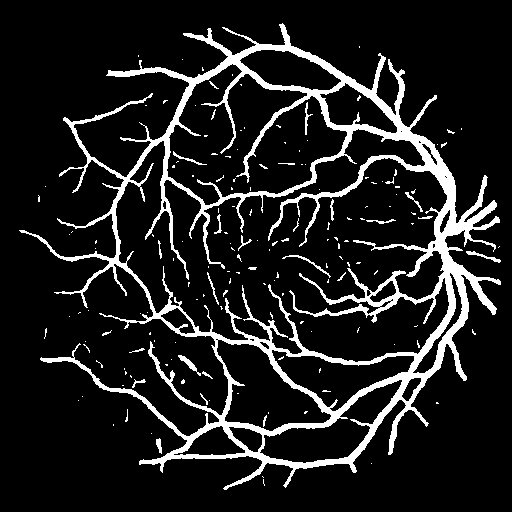}
  	\includegraphics[width=1.1\linewidth]{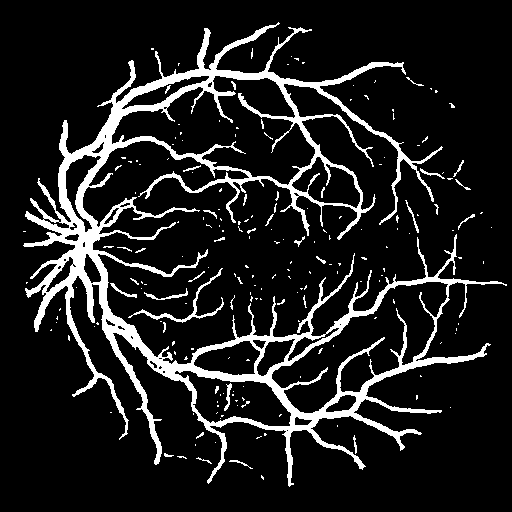}
	\centerline{\tiny (c) Unet}
	\end{minipage}
	}
	 \caption{Segmentation results for DRIVE dataset based on the originial Unet.}
\end{figure*}

\begin{figure*}
  \label{drive_unet_iels}
	\centering
	\subfigure{
	\begin{minipage}[b]{0.15\linewidth}
    \includegraphics[width=1.1\linewidth]{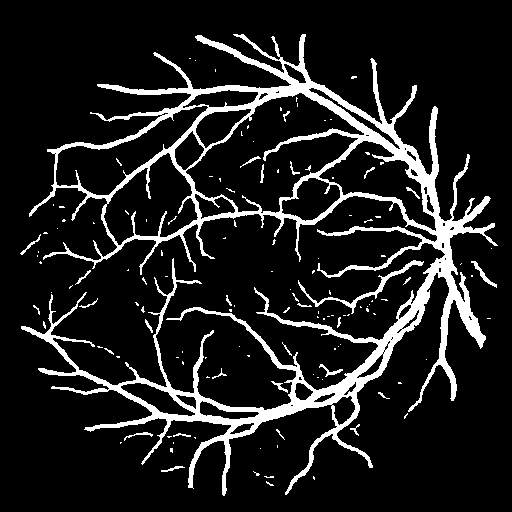}
  	\includegraphics[width=1.1\linewidth]{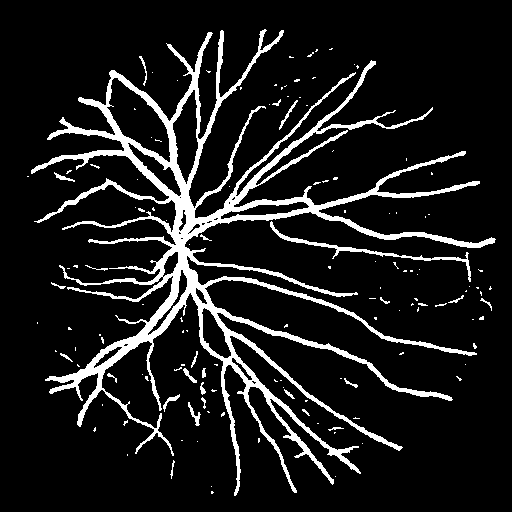}
  	\includegraphics[width=1.1\linewidth]{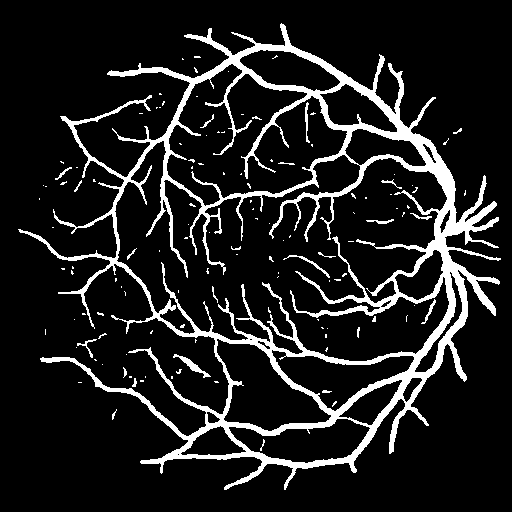}
  	\includegraphics[width=1.1\linewidth]{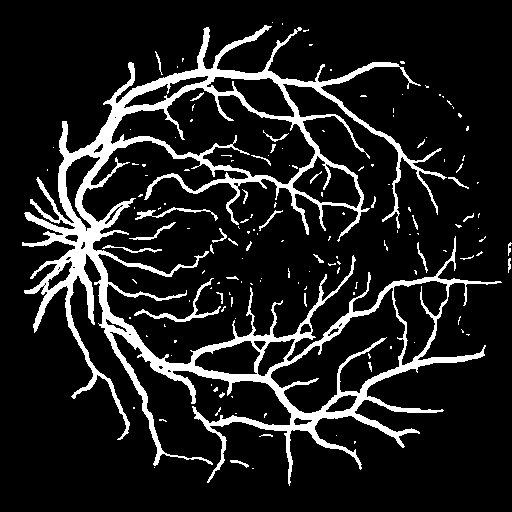}
	\centerline{ 5 IELs }
	\end{minipage}
	}
	\subfigure{
	\begin{minipage}[b]{0.15\linewidth}
    \includegraphics[width=1.1\linewidth]{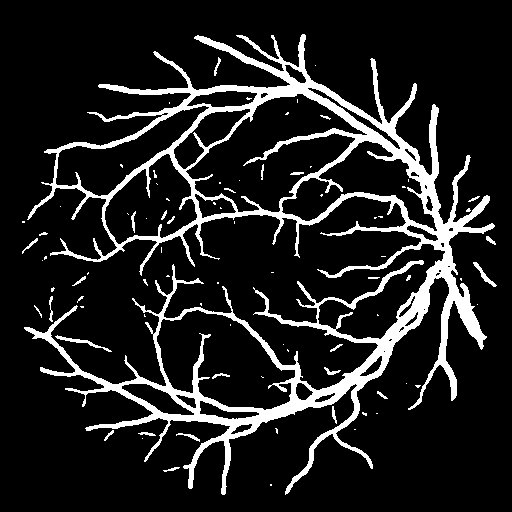}
  	\includegraphics[width=1.1\linewidth]{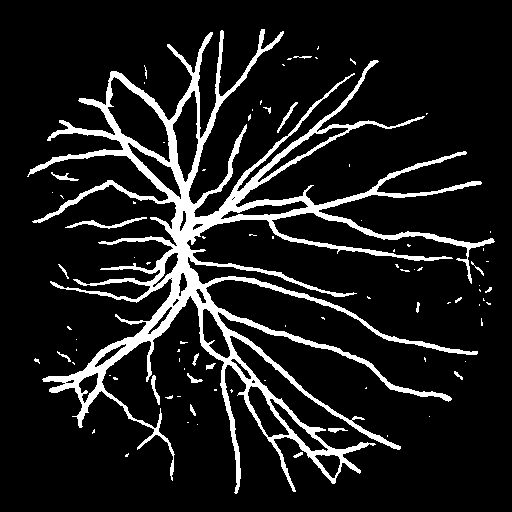}
  	\includegraphics[width=1.1\linewidth]{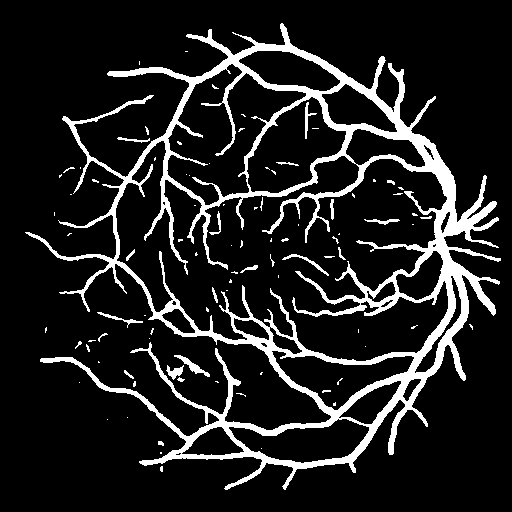}
  	\includegraphics[width=1.1\linewidth]{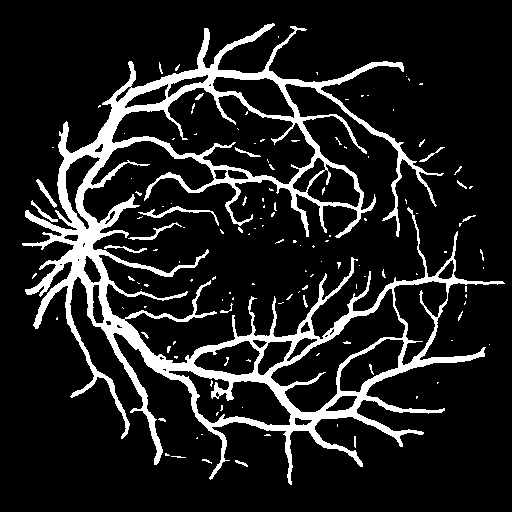}
	\centerline{ 10 IELs }
	\end{minipage}
	}
	\subfigure{
	\begin{minipage}[b]{0.15\linewidth}
    \includegraphics[width=1.1\linewidth]{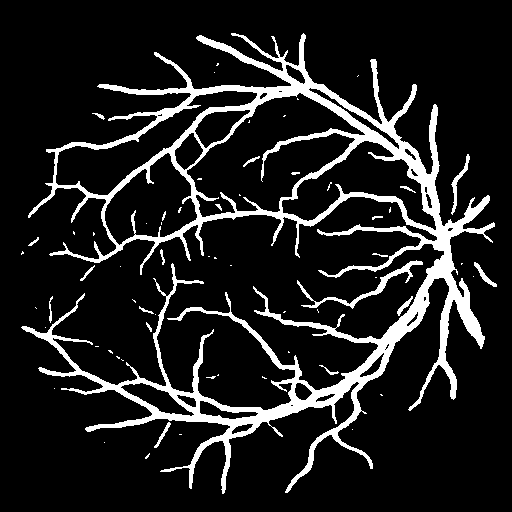}
  	\includegraphics[width=1.1\linewidth]{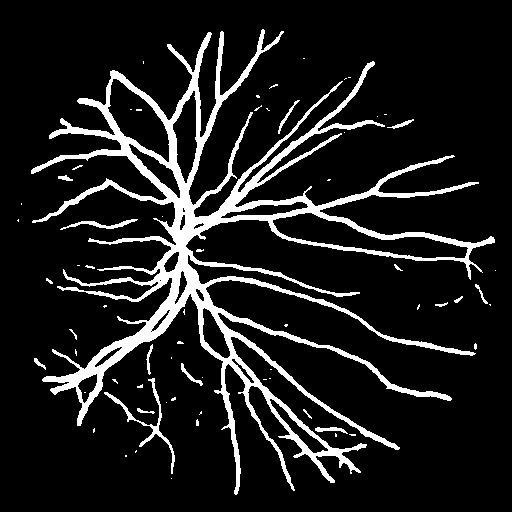}
  	\includegraphics[width=1.1\linewidth]{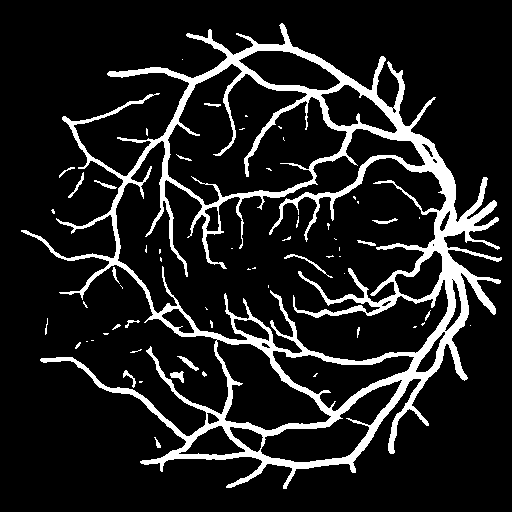}
  	\includegraphics[width=1.1\linewidth]{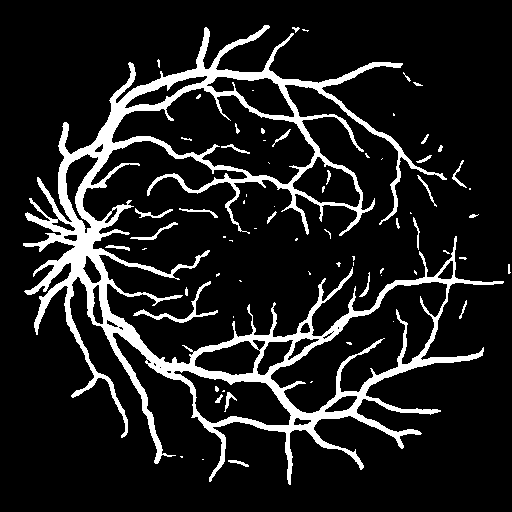}
	\centerline{ 15 IELs }
	\end{minipage}
	}
	\subfigure{
	\begin{minipage}[b]{0.15\linewidth}
    \includegraphics[width=1.1\linewidth]{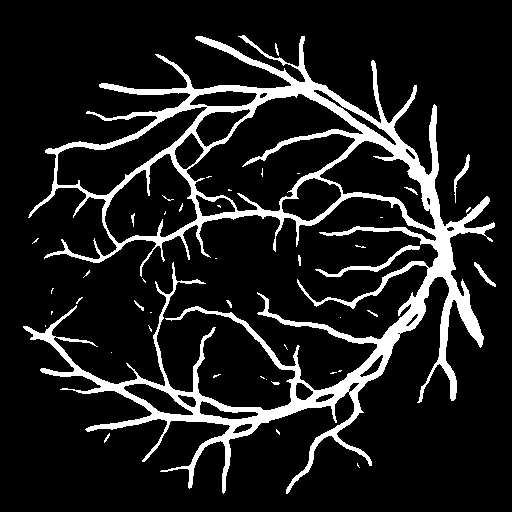}
  	\includegraphics[width=1.1\linewidth]{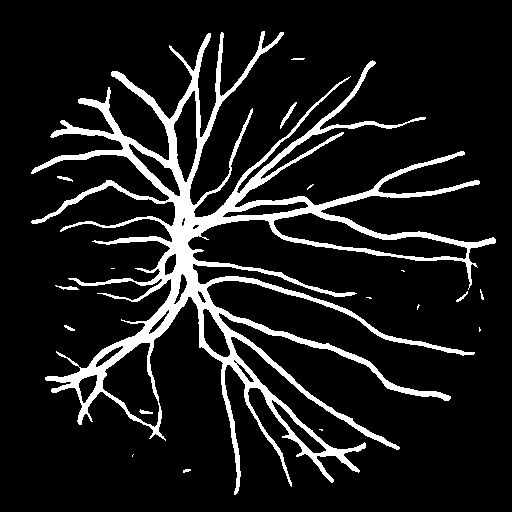}
  	\includegraphics[width=1.1\linewidth]{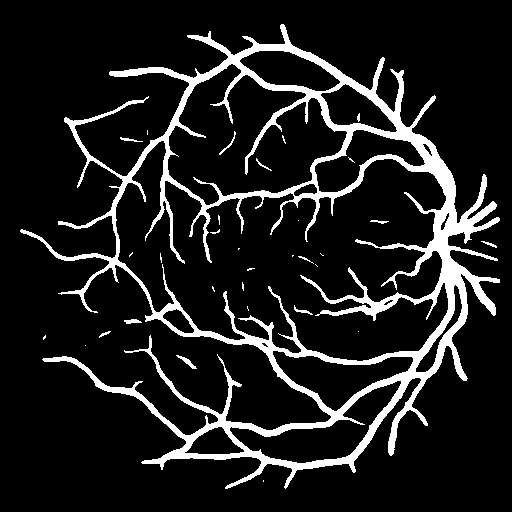}
  	\includegraphics[width=1.1\linewidth]{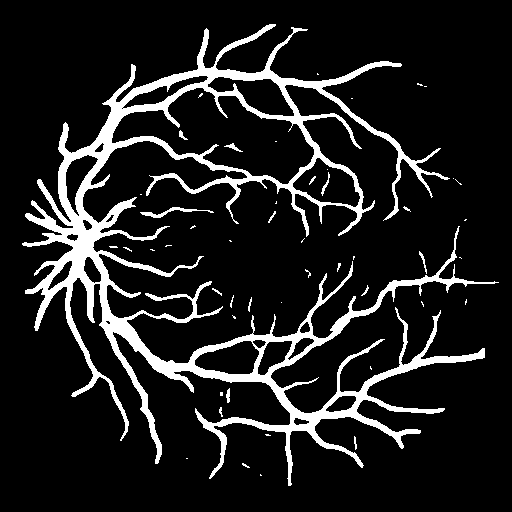}
	\centerline{ 20 IELs }
	\end{minipage}
	}
	 \caption{Segmentation results for DRIVE dataset based on the heat-diffusion IELs with different number of layers.}
\end{figure*}

{\subsubsection{Comparison between IELs and the pre- and post- processing}
In this part, we compare the performance of heat-diffusion IELs with pre- and post-processing techniques that are applied to labels and predictions using forward evolution. The results are presented in Figure \ref{fig: process dice}. It can be observed that preprocess on labels can only improve slight performance and the postprocessing on predictions even decreases the accuracy, while IELs can achieve much better results. Some preprocessed labels are shown in Figure \ref{fig: label preprocessing} and some predictions after postprocess are shown in Figure \ref{fig: label postprocessing}. It is evident that both pre- and post-processing methods cannot effectively eliminate noise while preserve the integrity of the original correct components, which can greatly influence the neural network's performance. 

\begin{figure}
    \centering
    \includegraphics[width=8cm]{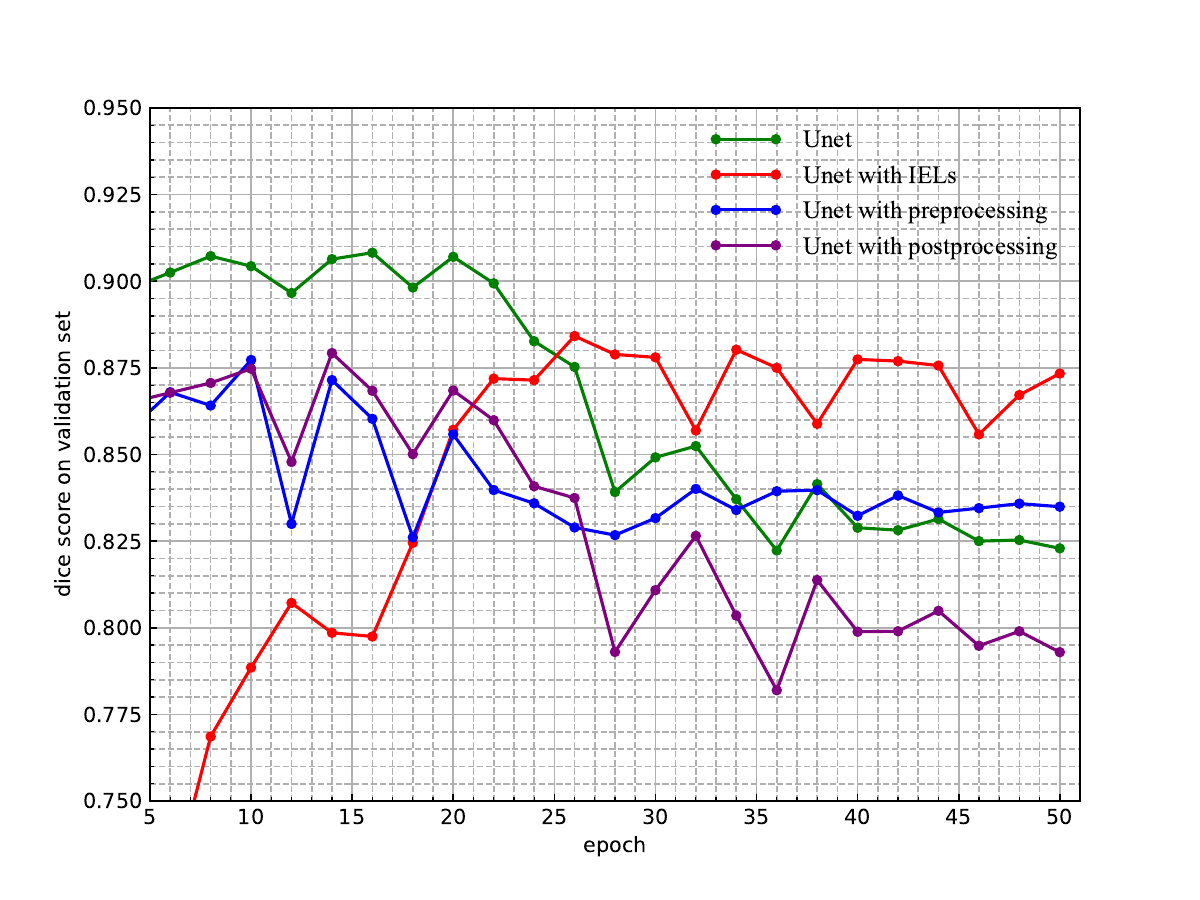}
   \caption{Dice scores of preprocessing, postprocessing and heat-diffusion IELs on the validation set of 2018 Data Science Bowl dataset.} \label{fig: process dice}
\end{figure}

\begin{figure*}
	\centering
	\subfigure{
	\begin{minipage}[b]{0.12\linewidth}
	\includegraphics[width=1\linewidth]{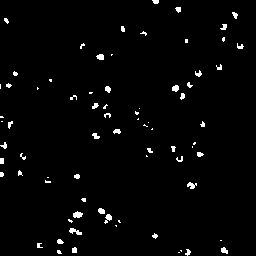}
	\includegraphics[width=1\linewidth]{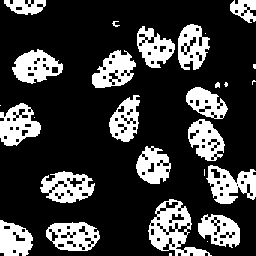}
	\includegraphics[width=1\linewidth]{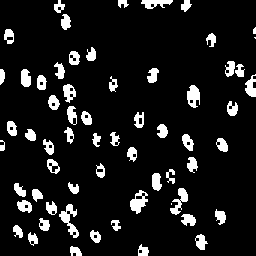}
	\centerline{\tiny (a) Noisy labels}
	\end{minipage}
	}
	\subfigure{
	\begin{minipage}[b]{0.12\linewidth}
	\includegraphics[width=1\linewidth]{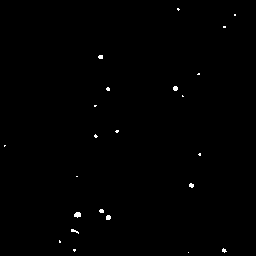}
	\includegraphics[width=1\linewidth]{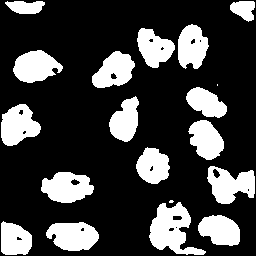}
	\includegraphics[width=1\linewidth]{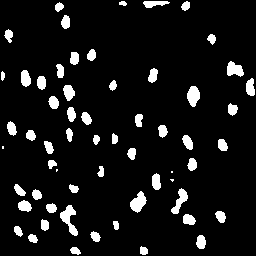}
	\centerline{\tiny (b) Preprocessed labels}
	\end{minipage}
	}
	\subfigure{
	\begin{minipage}[b]{0.12\linewidth}
	\includegraphics[width=1\linewidth]{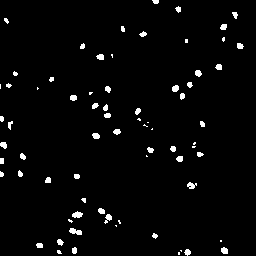}
	\includegraphics[width=1\linewidth]{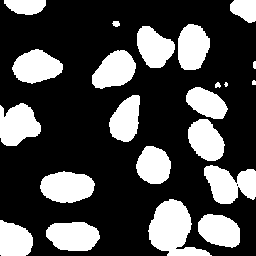}
	\includegraphics[width=1\linewidth]{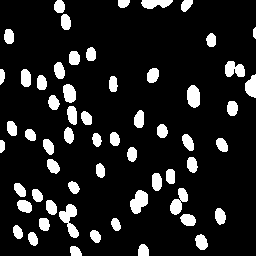}
	\centerline{\tiny (c) Ground truth}
	\end{minipage}
	}
	 \caption{Results of preprocessed labels on 2018 Data Science Bowl dataset.} \label{fig: label preprocessing}
\end{figure*}

\begin{figure*}
	\centering
	\subfigure{
	\begin{minipage}[b]{0.12\linewidth}
	\includegraphics[width=1\linewidth]{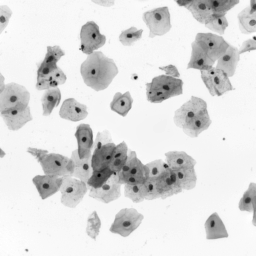}
	\includegraphics[width=1\linewidth]{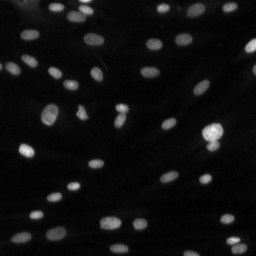}
	\includegraphics[width=1\linewidth]{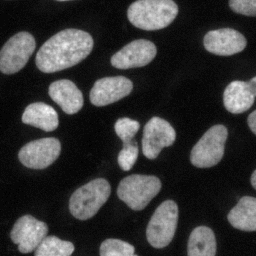}
	\centerline{\tiny (a) Images}
	\end{minipage}
	}
	\subfigure{
	\begin{minipage}[b]{0.12\linewidth}
	\includegraphics[width=1\linewidth]{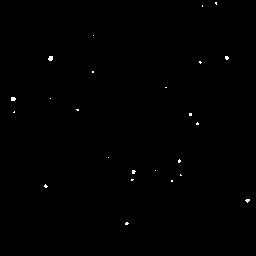}
	\includegraphics[width=1\linewidth]{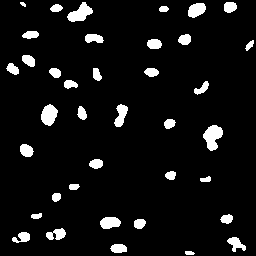}
	\includegraphics[width=1\linewidth]{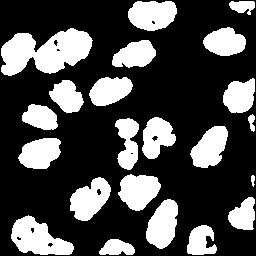}
	\centerline{\tiny (b) Postprocessed results}
	\end{minipage}
	}
	\subfigure{
	\begin{minipage}[b]{0.12\linewidth}
	\includegraphics[width=1\linewidth]{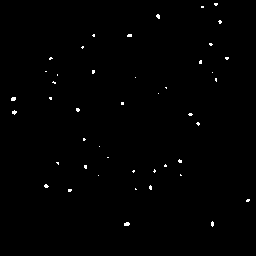}
	\includegraphics[width=1\linewidth]{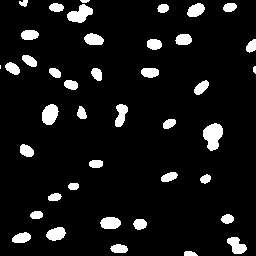}
	\includegraphics[width=1\linewidth]{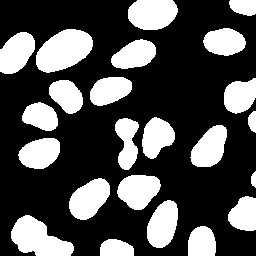}
	\centerline{\tiny (c) IELs}
	\end{minipage}
	}
	\subfigure{
	\begin{minipage}[b]{0.12\linewidth}
        \includegraphics[width=1\linewidth]{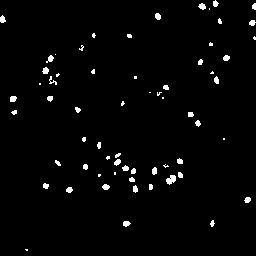}
        \includegraphics[width=1\linewidth]{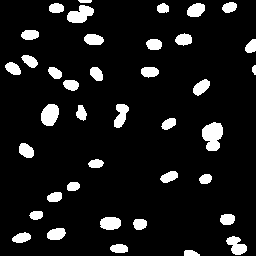}
        \includegraphics[width=1\linewidth]{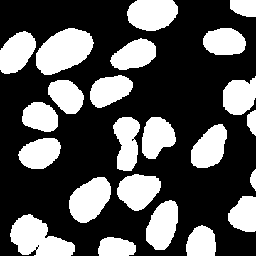}
	\centerline{\tiny (a) Ground truth}
	\end{minipage}
	}    
	 \caption{Comparison between postprocessing and heat-diffusion IELs on 2018 Data Science Bowl dataset with noisy labels.} \label{fig: label postprocessing}
\end{figure*}

\subsubsection{Comparison between IELs and other regularization methods}
In this part, we compare the performance of heat-diffusion IELs and other regularization methods including replacing IELs with forward evolution layers, loss-inserting method and $L^2$ norm regularization for weights in neural networks.
The comparison results clearly demonstrate the superiority of IELs in preserving the accuracy of neural networks while effectively managing noisy labels.

The definition of forward evolution layers (FELs) are similar to the IELs. Given an input $U$, the output of a forward evolution layer is
\begin{equation}
    L(U) = U + F_h(U)\Delta t.
  \end{equation}
For FELs, they are activated during both the training and prediction processes. The tests are conducted on a U-Net model using the DSB2018 dataset with noisy labels. The size of noise windows are $3\times 3$, and the noise percentage is 20\%. The results are presented in Figure \ref{fig: fels_loss} and Figure \ref{fig: fels}. It can be observed that noise remains apparent in the outputs for noisy labels, indicating that FELs are ineffective in enhancing the neural network's robustness to noise. 

\begin{figure}
    \centering
    \includegraphics[width=8cm]{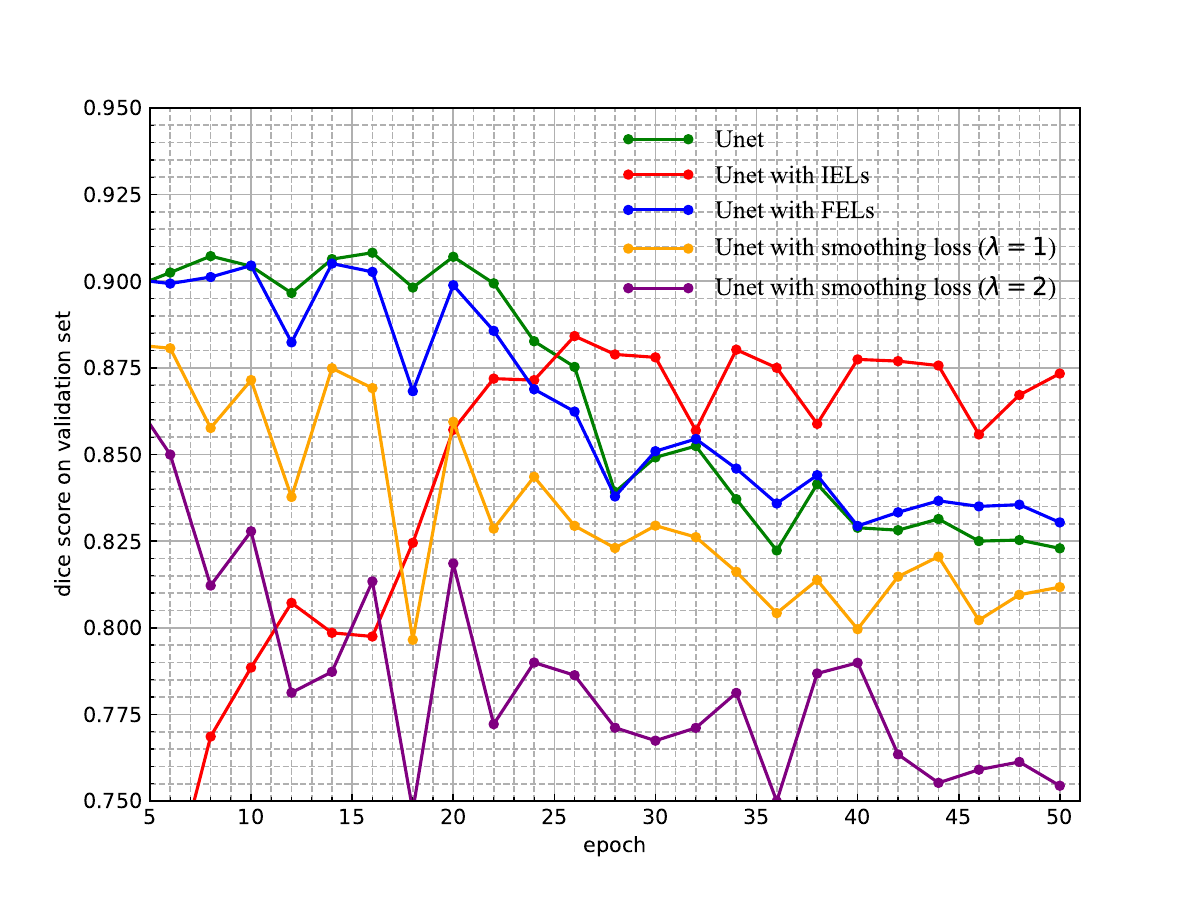}
   \caption{Dice scores of smoothing loss, heat-diffusion FELs and heat-diffusion IELs on the validation set of 2018 Data Science Bowl dataset.} \label{fig: fels_loss}
\end{figure}

\begin{figure*}
	\centering
	\subfigure{
	\begin{minipage}[b]{0.12\linewidth}
	\includegraphics[width=1\linewidth]{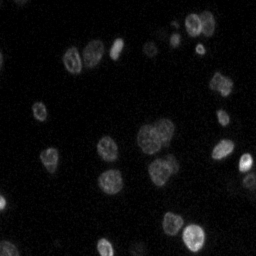}
	\includegraphics[width=1\linewidth]{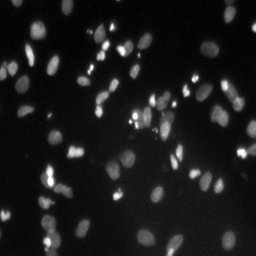}
	\includegraphics[width=1\linewidth]{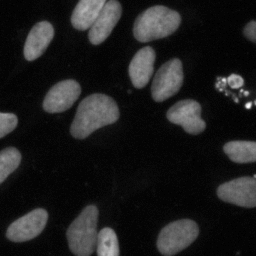}
    \includegraphics[width=1\linewidth]{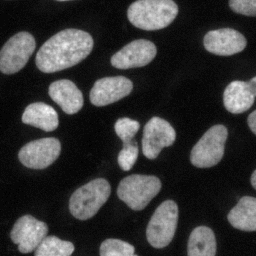}
	\centerline{\tiny (a) Images}
	\end{minipage}
	} 
	\subfigure{
	\begin{minipage}[b]{0.12\linewidth}
	\includegraphics[width=1\linewidth]{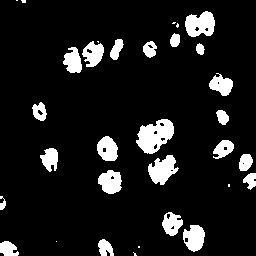}
	\includegraphics[width=1\linewidth]{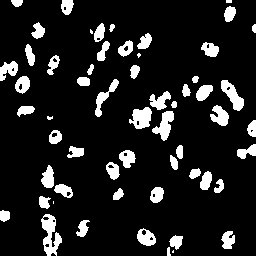}
	\includegraphics[width=1\linewidth]{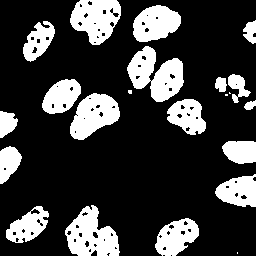}
    \includegraphics[width=1\linewidth]{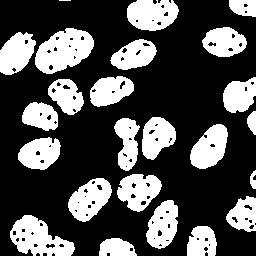}
	\centerline{\tiny (b) FELs}
	\end{minipage}
	}
	\subfigure{
	\begin{minipage}[b]{0.12\linewidth}
	\includegraphics[width=1\linewidth]{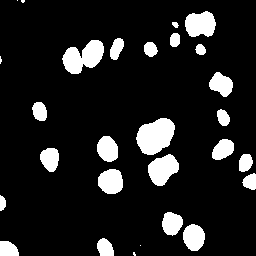}
	\includegraphics[width=1\linewidth]{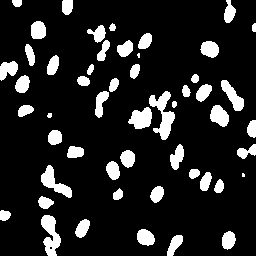}
	\includegraphics[width=1\linewidth]{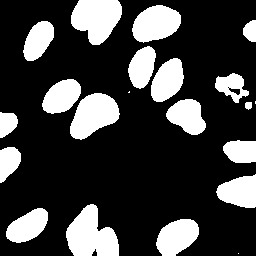}
    \includegraphics[width=1\linewidth]{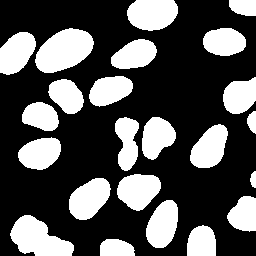}
	\centerline{\tiny (c) IELs}
	\end{minipage}
	}
	\subfigure{
	\begin{minipage}[b]{0.12\linewidth}
	\includegraphics[width=1\linewidth]{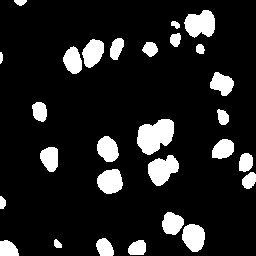}
	\includegraphics[width=1\linewidth]{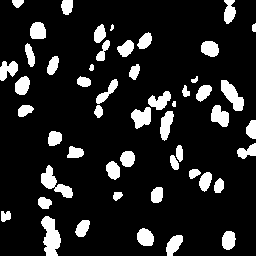}
	\includegraphics[width=1\linewidth]{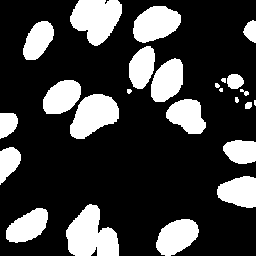}
    \includegraphics[width=1\linewidth]{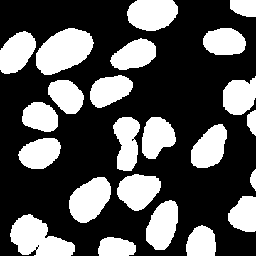}
	\centerline{\tiny (d) Ground truth}
	\end{minipage}
	}
	 \caption{Comparison between heat-diffusion FELs and heat-diffusion IELs on 2018 Data Science Bowl dataset with noisy labels.} \label{fig: fels}
\end{figure*}

The compared loss-inserting regularization method is based on the $L^2$ norm of the gradients of outputs, where the original loss is modified as follows 
\begin{equation}
    \label{loss_regularization}
    \sum_{i=1}^{N} (Loss^*(U_i,M_i) + \lambda\int_\Omega |\nabla U_i(x)|^2 dx),
\end{equation}
where $Loss^*(\cdot,\cdot)$ denotes the original loss function, and $U$'s and $M$'s represent the outputs of neural networks and labels, respectively.
For this regularization method, we set $\lambda=1, 2$ and test it on Unet for DSB2018 dataset with noisy labels. The size of noise windows are $3\times 3$, and the noise percentage is 20\%. The results are displayed in Figure \ref{fig: fels_loss} and Figure \ref{fig: loss regularization}. It can be observed that this regularization method has smoothing effect on neural network's outputs but noise can still be found in some images. Moreover, the term $|\nabla U_i|^2$ can also lead to over-smoothing of the correct parts of the output, resulting in lower accuracy compared to our IELs approach. Increasing the value of $\lambda$ further decreases accuracy significantly.

\begin{figure*}
	\centering
	\subfigure{
	\begin{minipage}[b]{0.12\linewidth}
	\includegraphics[width=1\linewidth]{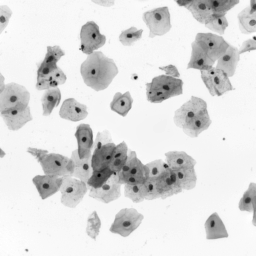}
	\includegraphics[width=1\linewidth]{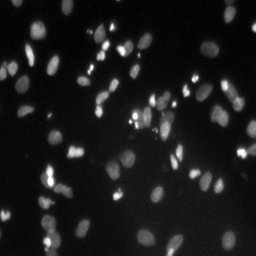}
	\includegraphics[width=1\linewidth]{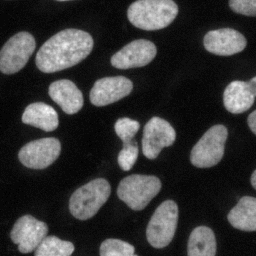}
    \includegraphics[width=1\linewidth]{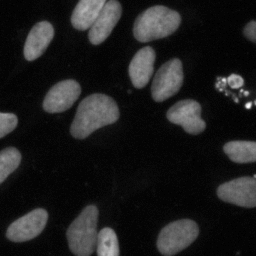}
	\centerline{\tiny (a) Images}
	\end{minipage}
	}
	\subfigure{
	\begin{minipage}[b]{0.12\linewidth}
	\includegraphics[width=1\linewidth]{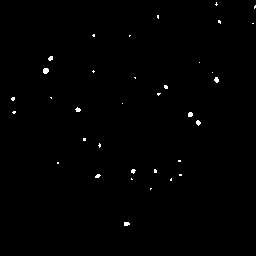}
	\includegraphics[width=1\linewidth]{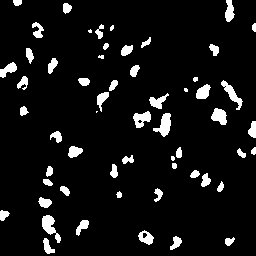}
	\includegraphics[width=1\linewidth]{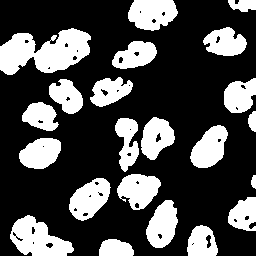}
    \includegraphics[width=1\linewidth]{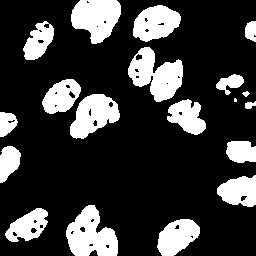}
	\centerline{\tiny (b) $\lambda=1$}
	\end{minipage}
	}
	\subfigure{
	\begin{minipage}[b]{0.12\linewidth}
	\includegraphics[width=1\linewidth]{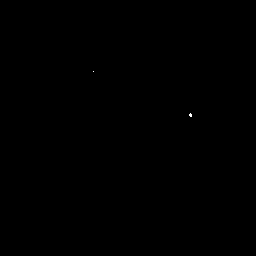}
	\includegraphics[width=1\linewidth]{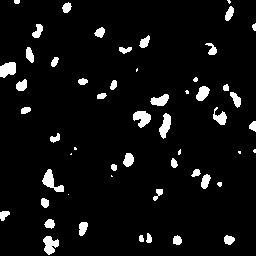}
	\includegraphics[width=1\linewidth]{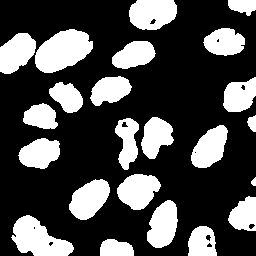}
    \includegraphics[width=1\linewidth]{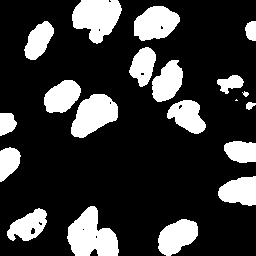}
	\centerline{\tiny (c) $\lambda=2$}
	\end{minipage}
	}    
	\subfigure{
	\begin{minipage}[b]{0.12\linewidth}
	\includegraphics[width=1\linewidth]{postprocess_iels1.png}
	\includegraphics[width=1\linewidth]{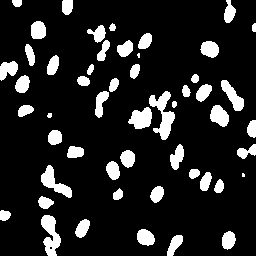}
	\includegraphics[width=1\linewidth]{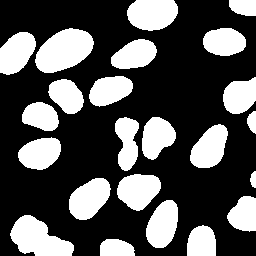}
    \includegraphics[width=1\linewidth]{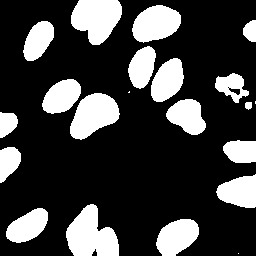}
	\centerline{\tiny (d) IELs}
	\end{minipage}
	}
	\subfigure{
	\begin{minipage}[b]{0.12\linewidth}
	\includegraphics[width=1\linewidth]{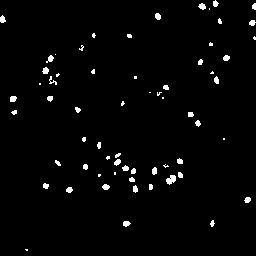}
	\includegraphics[width=1\linewidth]{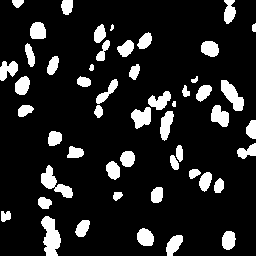}
	\includegraphics[width=1\linewidth]{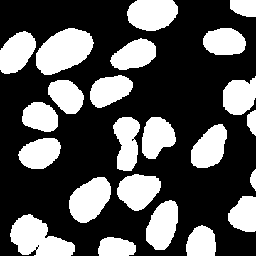}
    \includegraphics[width=1\linewidth]{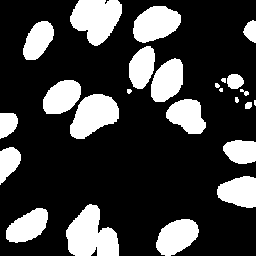}
	\centerline{\tiny (e) Ground truth}
	\end{minipage}
	}
	 \caption{Comparison between loss-inserting method and heat-diffusion IELs on 2018 Data Science Bowl dataset with noisy labels. (a) and (e) are the original images and the corresponding ground truth, respectively. (b) and (c) exhibit the segmentation results of the loss regularization with $\lambda=1$ and $\lambda=2$, respectively. (d) shows the results of our heat-diffusion IELs.} \label{fig: loss regularization}
\end{figure*}

$L^2$ norm regularization for weights modifies the original loss as follows:
\begin{equation}
    \sum_{i=1}^{N} Loss^*(U_i,M_i) + \sum_{j}\lambda |\omega_j|^2,
\end{equation}
where $\omega_j$'s are all the learnable parameters in the neural network. For this regularization method, we set $\lambda=0.1, 1$ and test it on Unet for DRIVE dataset with noisy labels. The size of noise windows are $1\times 1$, and the noise percentage is 50\%. The results are exhibited in Figure \ref{fig: l2 dice} and Figure \ref{fig: l2 weights}. It can be observed that our IELs method outperforms this regularization method in handling noisy labels. In particular, strong $L^2$ norm regularization for weights tends to significantly compromise accuracy.

\begin{figure}
    \centering
    \includegraphics[width=8cm]{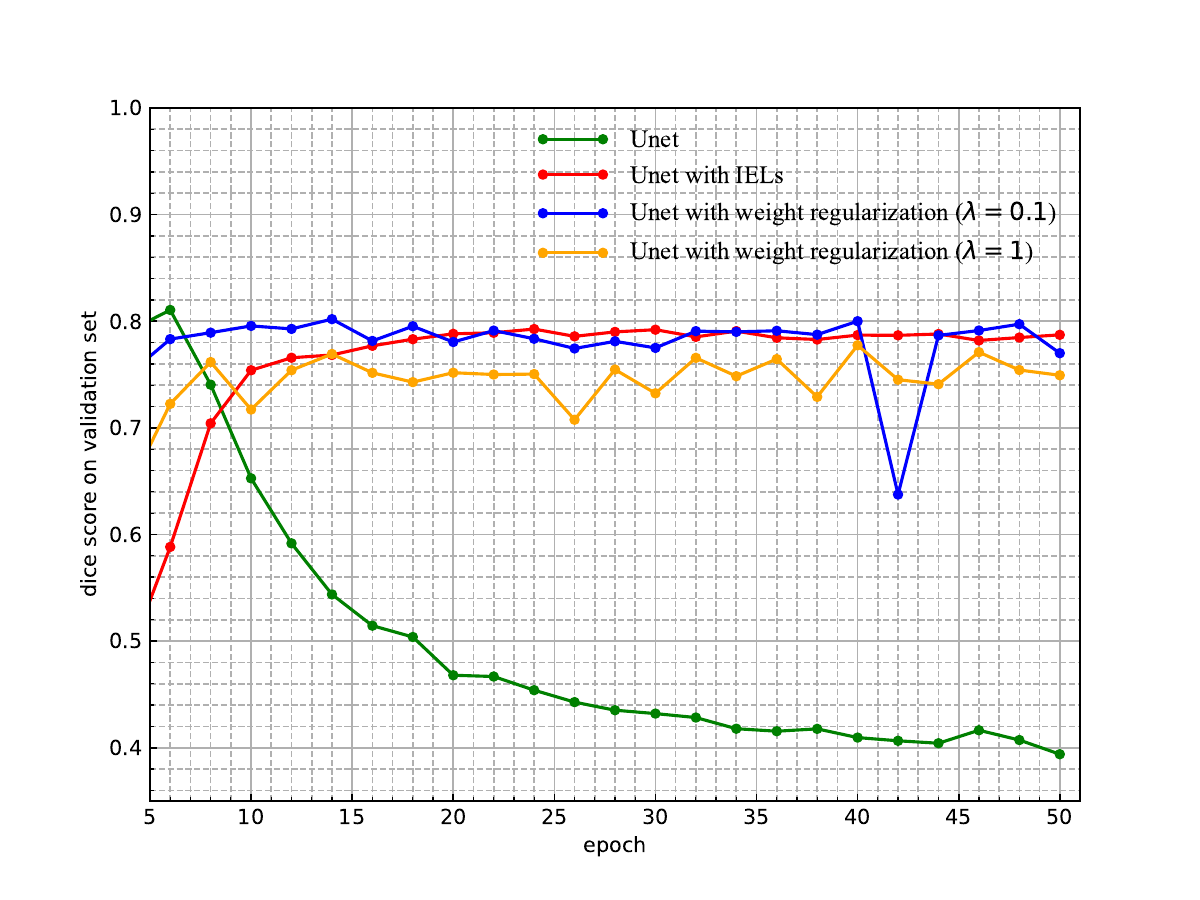}
   \caption{Dice scores of weight regularization and heat-diffusion IELs on the validation set of 2018 Data Science Bowl dataset.} \label{fig: l2 dice}
\end{figure}

\begin{figure*}
	\centering
	\subfigure{
	\begin{minipage}[b]{0.12\linewidth}
	\includegraphics[width=1\linewidth]{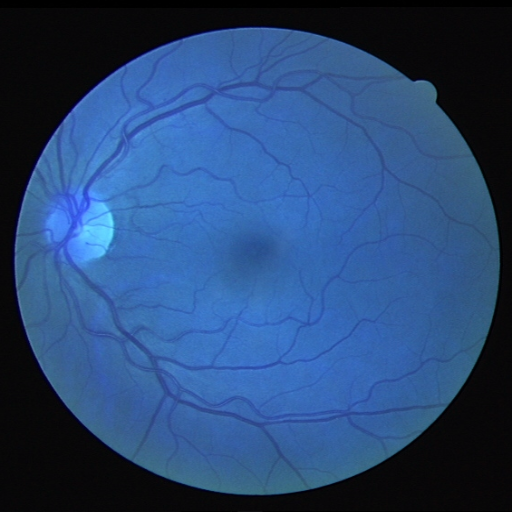}
	\includegraphics[width=1\linewidth]{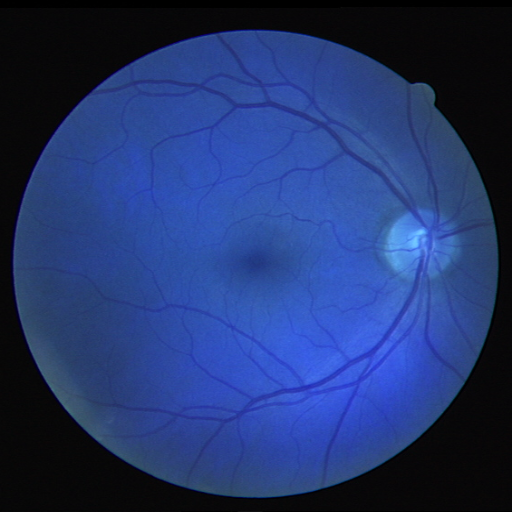}
    \includegraphics[width=1\linewidth]{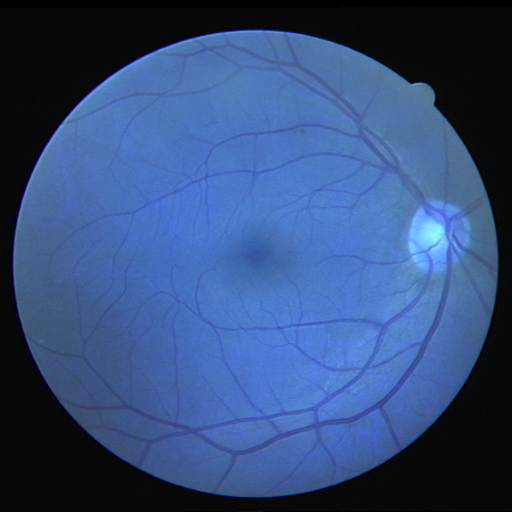}
	\centerline{\tiny (a) Images}
	\end{minipage}
}
	\subfigure{
	\begin{minipage}[b]{0.12\linewidth}
	\includegraphics[width=1\linewidth]{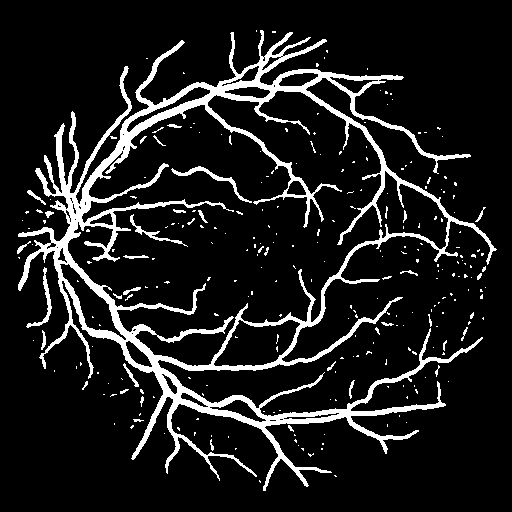}
	\includegraphics[width=1\linewidth]{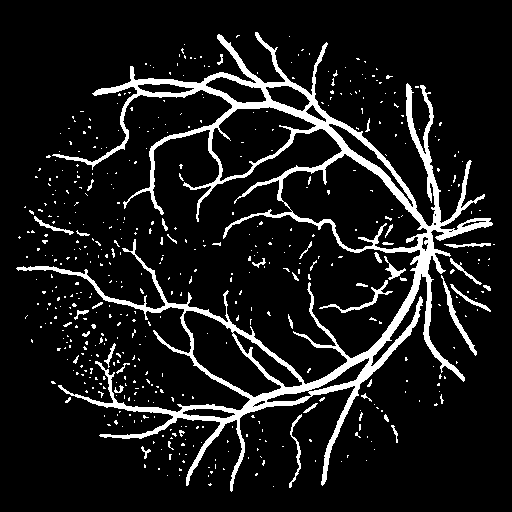}
    \includegraphics[width=1\linewidth]{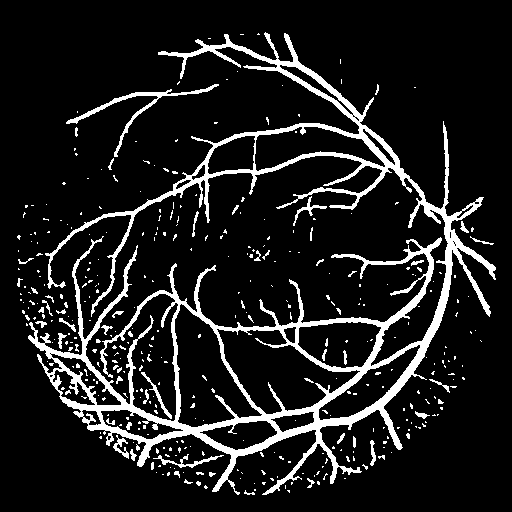}
	\centerline{\tiny (b) $\lambda=0.1$}
	\end{minipage}
	}
	\subfigure{
	\begin{minipage}[b]{0.12\linewidth}
	\includegraphics[width=1\linewidth]{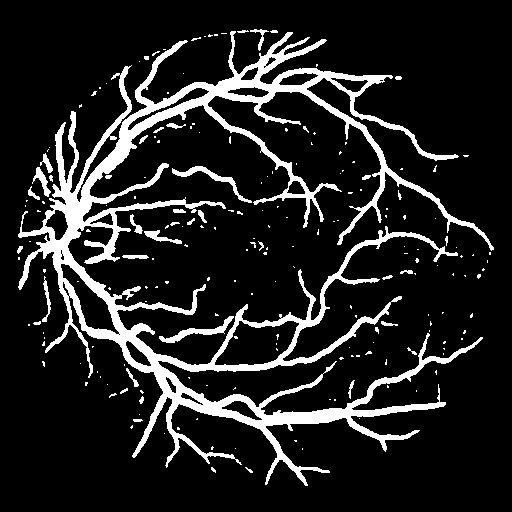}
	\includegraphics[width=1\linewidth]{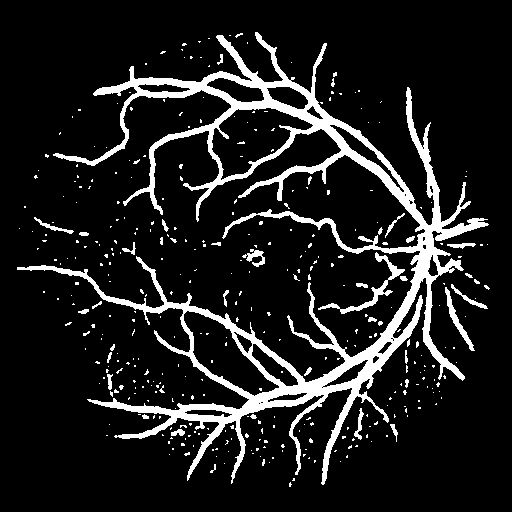}
    \includegraphics[width=1\linewidth]{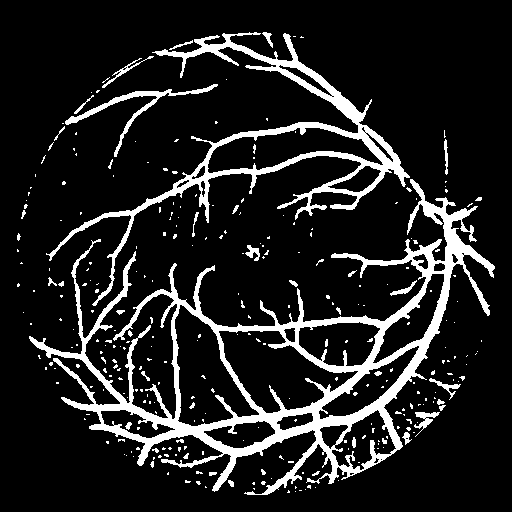}
	\centerline{\tiny (c) $\lambda=1$}
	\end{minipage}
	}
	\subfigure{
	\begin{minipage}[b]{0.12\linewidth}
	\includegraphics[width=1\linewidth]{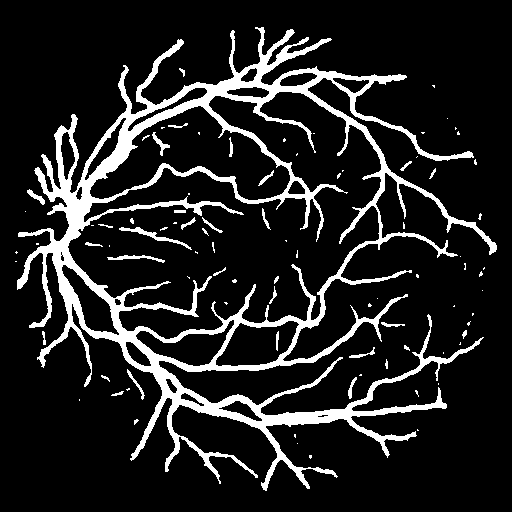}
	\includegraphics[width=1\linewidth]{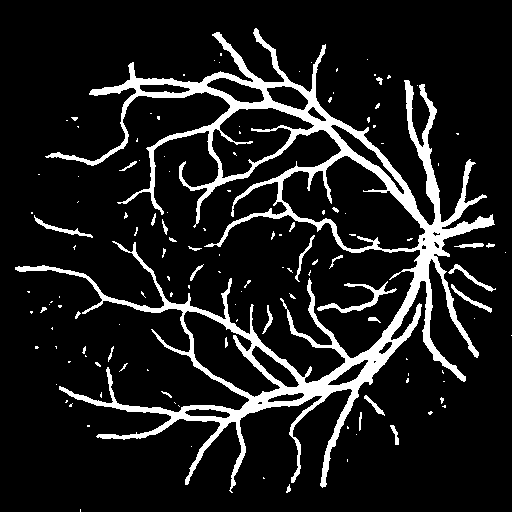}
    \includegraphics[width=1\linewidth]{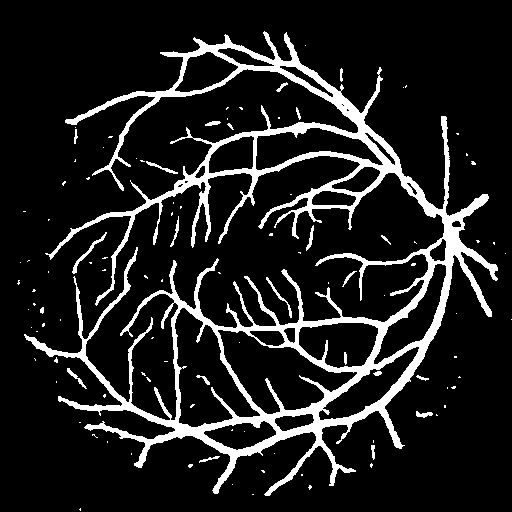}
	\centerline{\tiny (d) IELs}
	\end{minipage}
	}
	\subfigure{
	\begin{minipage}[b]{0.12\linewidth}
	\includegraphics[width=1\linewidth]{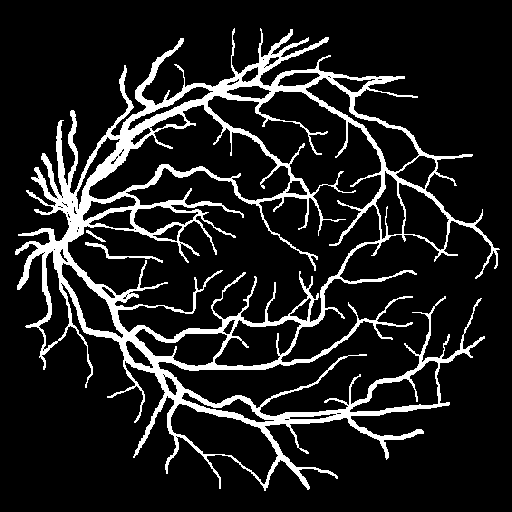}
	\includegraphics[width=1\linewidth]{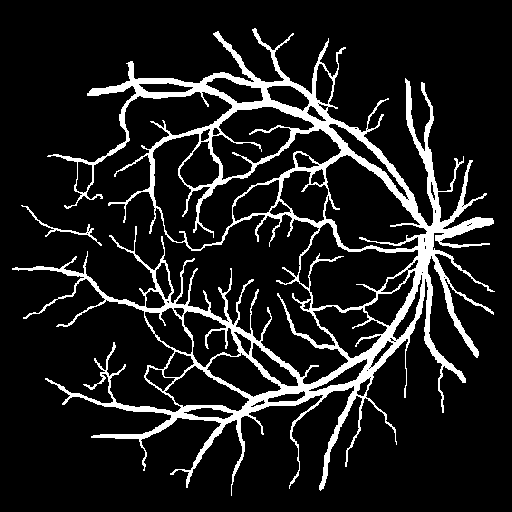}
    \includegraphics[width=1\linewidth]{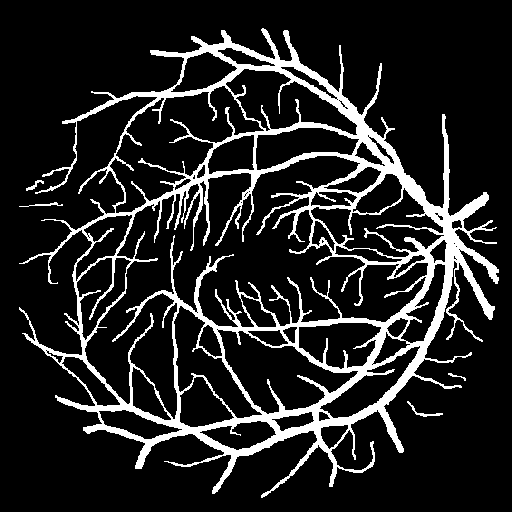}
	\centerline{\tiny (e) Ground truth}
	\end{minipage}
	}    
	 \caption{Comparison between the weight regularization and heat-diffusion IELs on DRIVE dataset with noisy labels. (a) and (e) are the original images and the corresponding ground truth, respectively. (b) and (c) exhibit the segmentation results of the $L^2$ regularization method on weights with $\lambda=0.1$ and $\lambda=1$, respectively. (d) shows the results of our heat-diffusion IELs.} \label{fig: l2 weights}
\end{figure*}
}

\section{Curve-motion IELs}
\label{curve-motion IELs}
In this section, we design IELs based on level set-based curve-motion equations, which are typically non-energy PDE models. These developed IELs can be adapted to possess the capability of regularizing the outputs of neural networks \modified{by imposing convexity on the segmentation}.
\subsection{Derivation of the Curve-motion IELs}
A level-set based curve-motion equation can be formulated as follows:
\begin{equation}
  \label{level-set}
  \left\{
  \begin{array}{ll}
  \phi_t = V(x, t)|\nabla \phi|, &  t\in[0,T]\\
    \phi(x, 0) = \phi_0, & \forall x\in\Omega\\
    \frac{\partial \phi}{\partial x}(x,t) = 0, & \forall x \in \partial\Omega,\quad t\in[0,T]
\end{array}
\right.
\end{equation}
Here, the curve at time $t$ is \modified{implicitly} represented by $\{x: \phi(x,t)=0\}$, which we denote as $\Gamma_t$. And the object of interest is represented by the region $\{x: \phi(x,t)\ge 0\}$. The function $V(x, t)$ gives the speed of $\Gamma_t$ in its outward normal direction.

Based on equation \eqref{level-set}, we can design the curve-motion IELs as follows
\begin{equation}
  L(U^n) = U^n+ \Delta t V^n|\nabla_h U^n|,
\end{equation}
where $\nabla_h$ is the finite difference approximation of gradient operator and the value $|\nabla_h U|$ at pixel $(ih,jh)$
\begin{equation}
  \label{F_nabla}
  \begin{split}
      (|\nabla_h U|)_{i,j} &= \frac{\sqrt{(U_{i+1,j}-U_{i-1,j})^2+(U_{i,j+1}-U_{i,j-1})^2}}{2h},\\
      & i=1,2,\cdots,M_1, j=1,2,\cdots,M_2.
  \end{split}
\end{equation}

For some specific segmentation tasks, the shapes of segmentation maps should be convex. Curve-motion IELs enable us to design evolutions that penalize the concave boundaries of neural network outputs by selecting an appropriate $V(x, t)$. A straightforward approach to achieving this is by setting
\begin{equation}
  \label{speed function}
  V(x, t)  = \begin{cases}
    -1, & x \in R_d^{C}\\
    0, & \text{else}
  \end{cases},
\end{equation}
where $C$ is the set of all concave boundaries of the outputs and $R_d^{C}$ is defined as $\{x : \text{distance}(x, C) \le d \}$.

When dealing with a segmentation map, allowing the inward concave boundaries to evolve outward results in the map becoming less concave. Conversely, the speed function defined in \eqref{speed function} prompts concave boundaries to evolve inward, thereby amplifying the error caused by concavity.

To derive the exact expression of $V(x, t)$, it is necessary to identify the concave curves. To accomplish this, we introduce the convex shape condition \cite{luo2020convex,liu2020convex}. Given an object $\mathbb{D}\subset \Omega\subset \mathbb{R}^2$, let $f(x)$ be the indicative function of $\mathbb{D}$, i.e.
\begin{equation}
  f(x) =
  \begin{cases}
  1, & x \in \mathbb{D} \\
  0, & \text{else.}
\end{cases}
\end{equation}
Denote $g_r(x)$ be a kernel function with support being a sphere $\mathbb{B}_r = \{x: ||x||_2 \le r \} \in \Omega$, namely,
\begin{equation}
  g_r(x)=
  \begin{cases}
\frac{1}{|\mathbb{B}_r|}, &  x \in \mathbb{B}_r \\
  0,                      & \text{else.}
\end{cases}
\end{equation}
Then the connected components of $\mathbb{D}$ are all convex if
\begin{equation}
  \label{convex condition}
  (1 - f(x))(g_r * (1 - 2f)(x)) \ge 0, \quad \forall \mathbb{B}_r \in \Omega, \forall x \in \Omega,
\end{equation}
where $*$ denotes the convolution operator.

Locating all the concave curves is a non-trivial task in discrete setting. However, by leveraging the convex shape condition, we understand that \eqref{convex condition} should hold for all  $r \ge 0$ with $\mathbb{B}_r \in \Omega$ if $\mathbb{D}$ is convex. In other words, for any given set $K = \{r_0, r_1, \cdots, r_k\}$, if a point fails to satisfy \eqref{convex condition} for any $r \in K$, it implies the presence of concave curves in the vicinity. Consequently, we can identify all the points that do not satisfy \eqref{convex condition} for all $r \in K$ to approximate the set $C$.
Based on this, we employ curve-motion IELs to impose convexity on the segmentation outputs of neural networks. The workflow of the designed curve-motion is displayed in Algorithm \ref{curve-motion iel}.

% \begin{algorithm}[H]
%   \caption{Curve-motion IELs for Convex Shape Regularization}
% 	\label{curve-motion iel}
% \begin{algorithmic}[1]
%   \REQUIRE{$U^0, \Delta t, N, d, K = \{r_0, r_1, \cdots, r_n\}$}
%   \For{$n = 0, 1, \cdots, N-1$}
%       \STATE{Set
%       $$
%       f(x) =
%         \begin{cases}
%         1, & \text{if } U^n(x) > 0 \\
%         0, & \text{else.}
%         \end{cases}
%       $$
%         }
%       \STATE{Find the approximate set of concave boundaries
%           $$
%     				C := \union_{r\in K} \{x: (1 - f(x))(g_r * (1 - 2f)(x)) < 0 \}.
%     			$$}
%     	\STATE{Compute
%       $$
%       V^n  = \begin{cases}
%         -1, & x \in R_d^{C}\\
%         0, & \text{else,}
%       \end{cases}
%       $$
%       with $R_d^{C} := \{x : \text{distance}(x, C) \le d \}$
%       }
%     	\STATE{Update
%             $$
%             U^{n+1} = U^n+ \Delta t V_n|\nabla_h U^n|
%             $$
%           }
% 	\EndFor
% \ENSURE{Results after passing through IELs $U^{N-1}$}
% \end{algorithmic}
% \end{algorithm}

\begin{algorithm}[H]
  \caption{Curve-motion IELs for Convex Shape Regularization}
	\label{curve-motion iel}
  \KwIn{$\Delta t, N, d, K = \{r_0, r_1, \cdots, r_k\}$}
  \textbf{Initialize $U^0$ as the output of the neural network}

  \For{$n = 0, 1, \cdots, N-1$}
{
      Set
      $$
      f(x) =
        \begin{cases}
        1, & \text{if } U^n(x) > 0 \\
        0, & \text{else.}
        \end{cases}
      $$

    Find the approximate set of concave boundaries
          $$
    				C := \bigcup_{r\in K} \{x: (1 - f(x))(g_r * (1 - 2f)(x)) < 0 \}.
    			$$

    Define
      $$
      V^n  = \begin{cases}
        -1, & x \in R_d^{C}\\
        0, & \text{else,}
      \end{cases}
      $$
      with $R_d^{C} := \{x : \text{distance}(x, C) \le d \}$\;

    Update
            $$
            U^{n+1} = U^n+ \Delta t V^n|\nabla_h U^n|
            $$
}
\KwOut{$U^{N}$}
\end{algorithm}

\subsection{Experiments for the curve-motion IELs}
In this part, we evaluate our curve-motion IELs on Unet and the Retinal Fundus Glaucoma Challenge (REFUGE) dataset \cite{orlando2020refuge}, which provides 1200 fundus images with ground truth segmentations of Optic Disc and Cup. The dataset is divided into training set, validation set and test set and each set contains 400 images. The image size is  $2056 \times 2124$. To expedite computational processes, we resize all the images to $512 \times 512$.

The segmentation maps of Optic Disc should be convex but the blood vessels near it can affect segmentation results and lead to concave maps. In our experiment, we will amplify the concavity of the segments for Optic Disc through curve-motion IELs during training. $K, d, \Delta t$ and the number of IELs are chosen as $\{5, 10, 15\}$, 3, 0.1 and 20, respectively.

The neural network is trained on the training dataset, and its performance is evaluated on the validation set. We conducted training for both Unet and Unet with IELs for 50 epochs, with a learning rate of 0.0002. The training loss and the dice score on the validation set are illustrated in Figure \ref{refuge_loss} and Figure \ref{refuge_dice}, respectively. To demonstrate the effectiveness of the curve-motion IELs, we provide visuals of some inputs and the corresponding outputs of IELs during the training process in Figure \ref{refuge_fig1}, which clearly show that our curve-motion IELs exacerbate the concavity of segmentation boundaries. Furthermore, the segmentation results of the original Unet and Unet with IELs after 50 epochs are presented in Figure \ref{refuge_fig2}. It is noteworthy that the utilization of our curve-motion IELs in Unet results in a notable tendency for the Optic Disc segments to exhibit greater convexity. This observation demonstrates the capacity of our curve-motion IELs to influence the shape of segmentation maps, promoting convexity in this specific context.

\begin{figure}
  \centering
  \includegraphics[width=6cm,height=4cm]{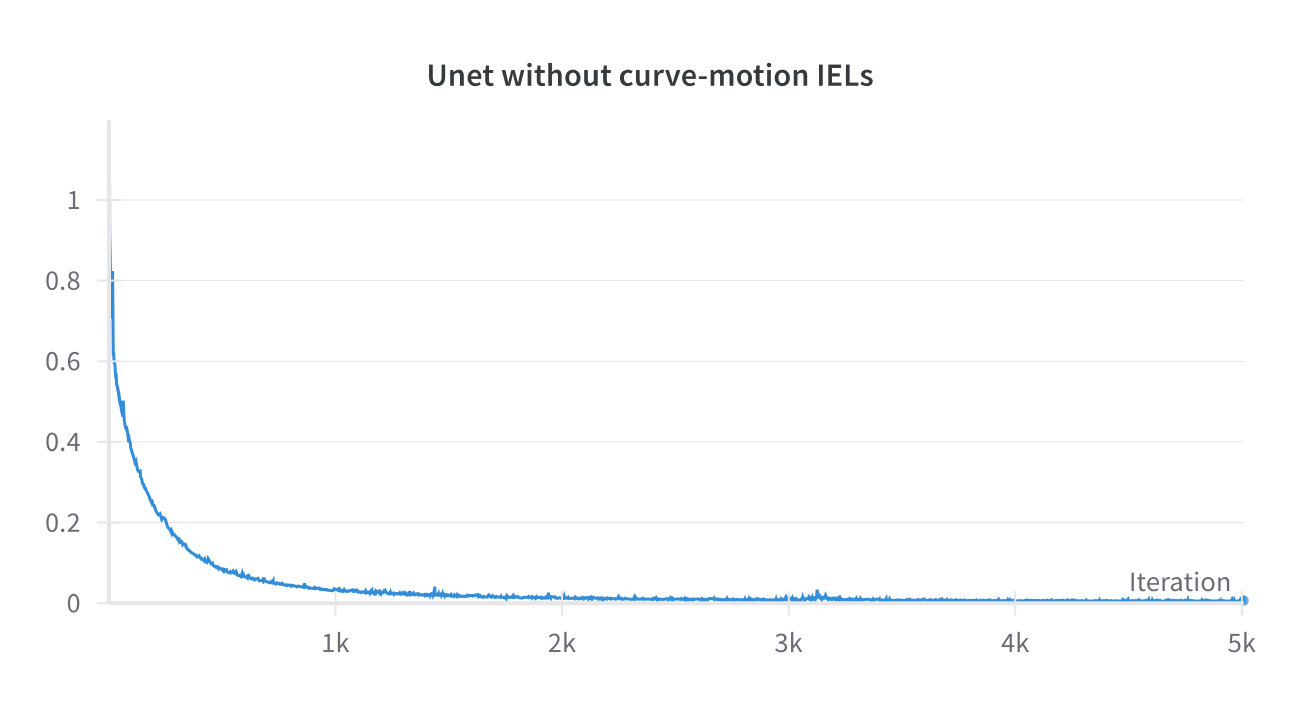} \hspace{1cm}
  \includegraphics[width=6cm,height=4cm]{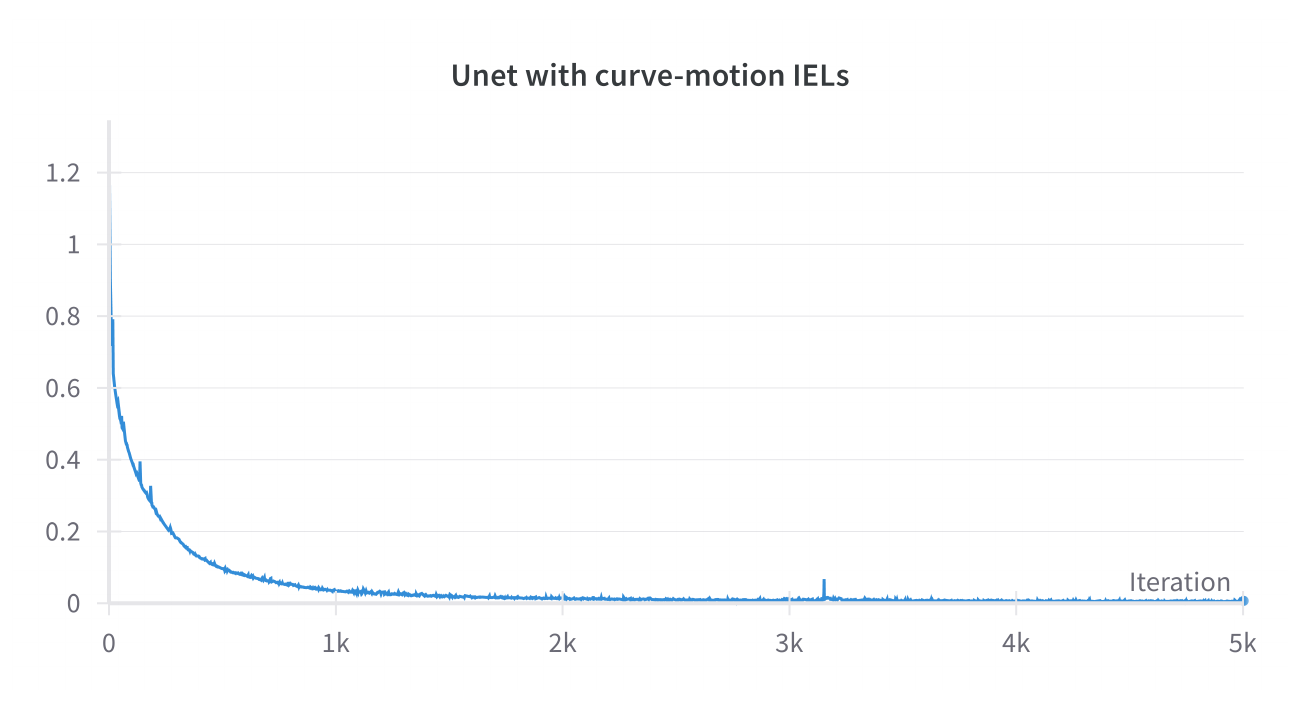}
 \caption{Training loss of Unet \cite{ronneberger2015u} and Unet with curve-motion IELs.} \label{refuge_loss}
\end{figure}

\begin{figure}
  \centering
  \includegraphics[width=8cm]{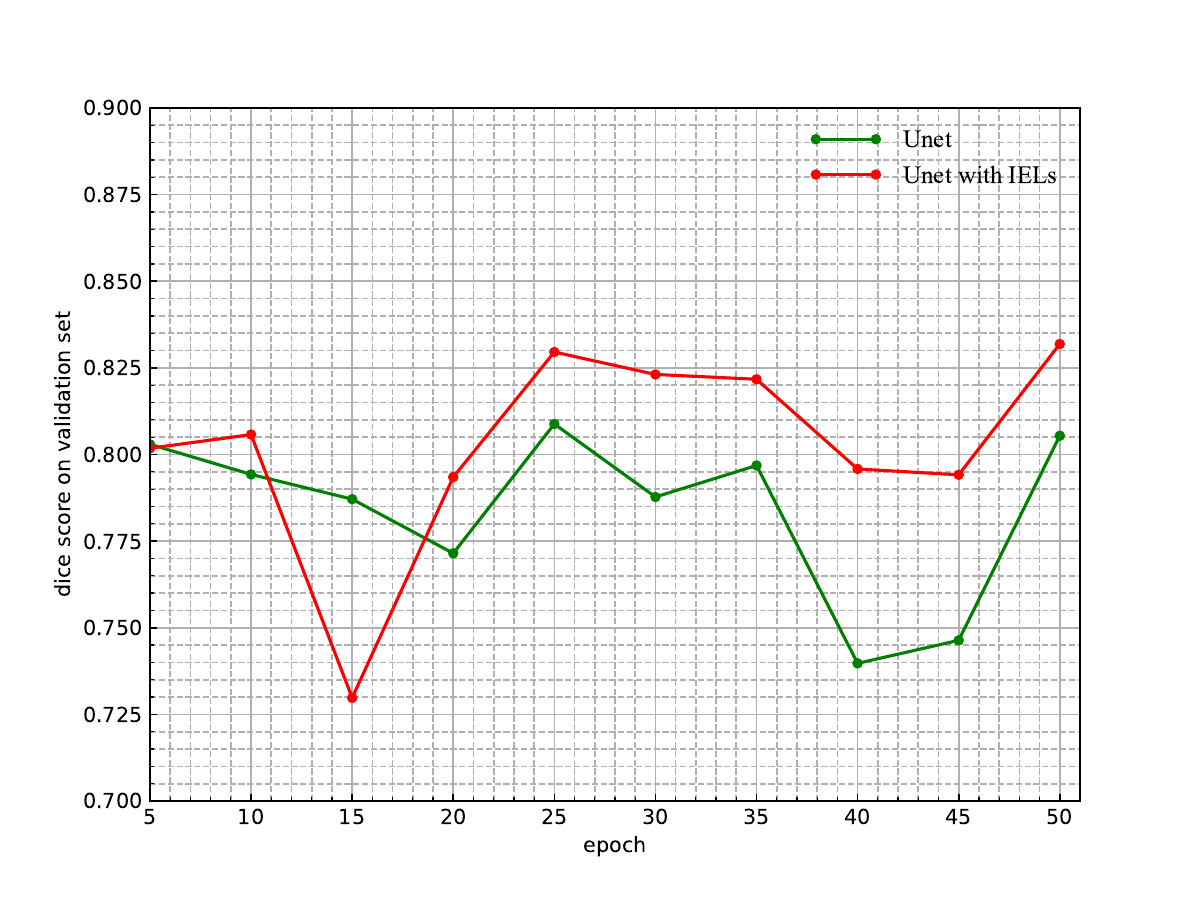}
 \caption{Dice score on the validation set of REFUGE dataset based on Unet \cite{ronneberger2015u} and Unet with curve-motion IELs. The results of Unet and Unet with heat-diffusion IELs are highlighted by green lines and red lines, respectively.} \label{refuge_dice}
\end{figure}

\begin{figure*}
	\centering
	\subfigure{
	\begin{minipage}[b]{0.12\linewidth}
    \includegraphics[width=1\linewidth]{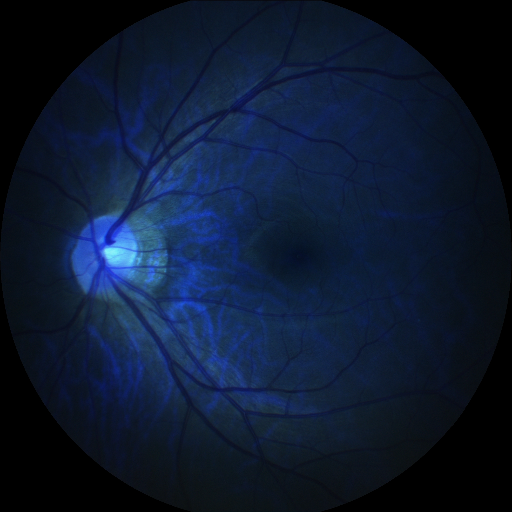}
  	\includegraphics[width=1\linewidth]{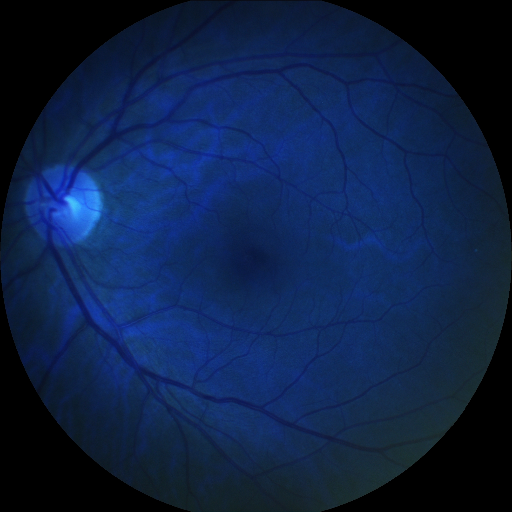}
  	\includegraphics[width=1\linewidth]{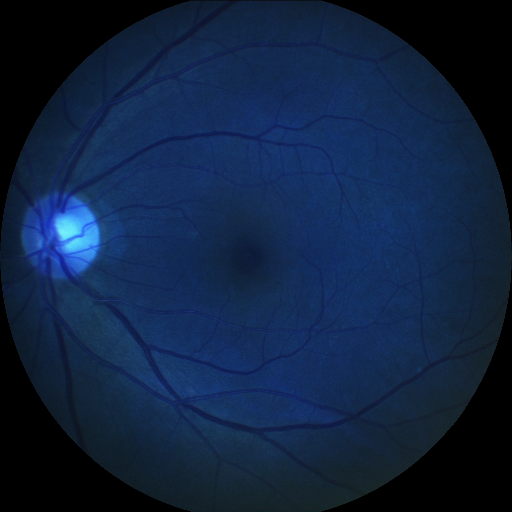}
	\centerline{\tiny (a) Images}
	\end{minipage}
	}
	\subfigure{
	\begin{minipage}[b]{0.12\linewidth}
    \includegraphics[width=1\linewidth]{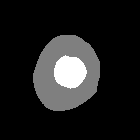}
  	\includegraphics[width=1\linewidth]{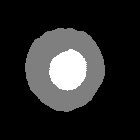}
  	\includegraphics[width=1\linewidth]{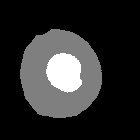}
	\centerline{\tiny (b) Inputs}
	\end{minipage}
	}
	\subfigure{
	\begin{minipage}[b]{0.12\linewidth}
    \includegraphics[width=1\linewidth]{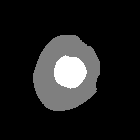}
  	\includegraphics[width=1\linewidth]{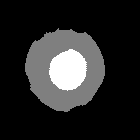}
  	\includegraphics[width=1\linewidth]{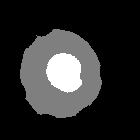}
	\centerline{\tiny (c) Outputs}
	\end{minipage}
	}
	\subfigure{
	\begin{minipage}[b]{0.12\linewidth}
    \includegraphics[width=1\linewidth]{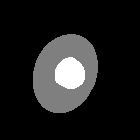}
  	\includegraphics[width=1\linewidth]{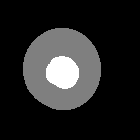}
  	\includegraphics[width=1\linewidth]{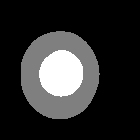}
	\centerline{\tiny (d) Ground truth}
	\end{minipage}
	}
	 \caption{Comparison between inputs and outputs of the curve-motion IELs during training. (a) and (d) are the original images and the corresponding ground truth, respectively. (b) and (c) indicate the inputs and outputs of the curve-motion IELs, respectively.} \label{refuge_fig1}
\end{figure*}

\begin{figure*}
	\centering
	\subfigure{
	\begin{minipage}[b]{0.12\linewidth}
    \includegraphics[width=1\linewidth]{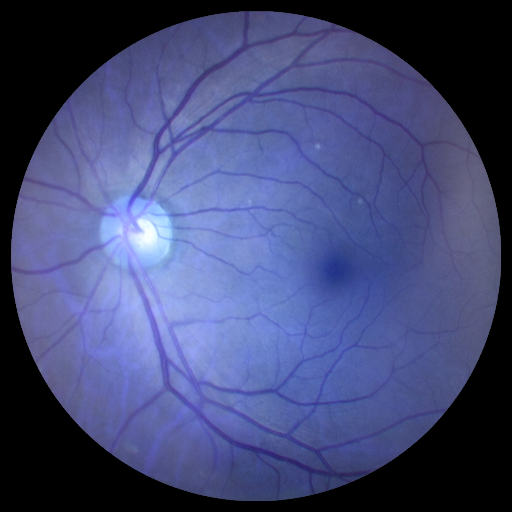}
  	\includegraphics[width=1\linewidth]{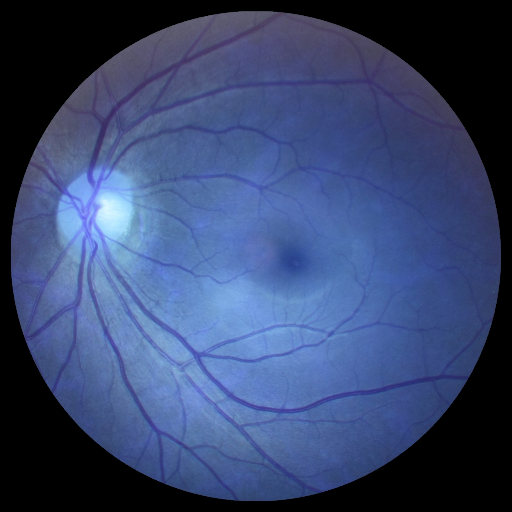}
  	\includegraphics[width=1\linewidth]{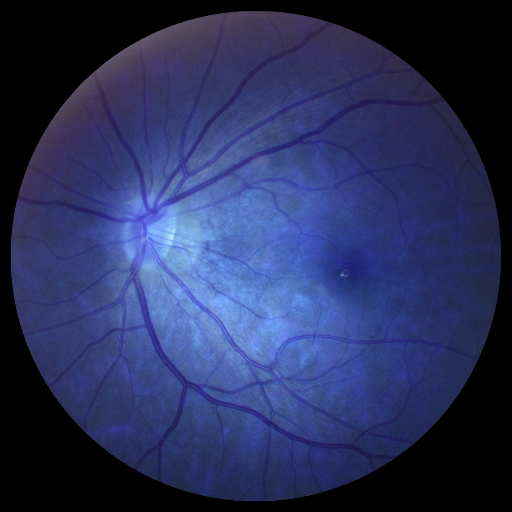}
  	\includegraphics[width=1\linewidth]{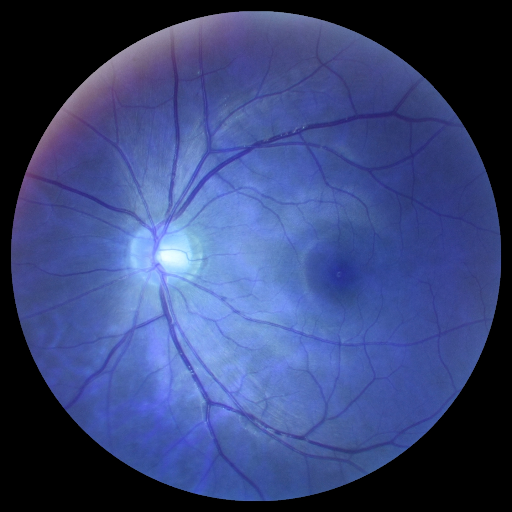}
  	\includegraphics[width=1\linewidth]{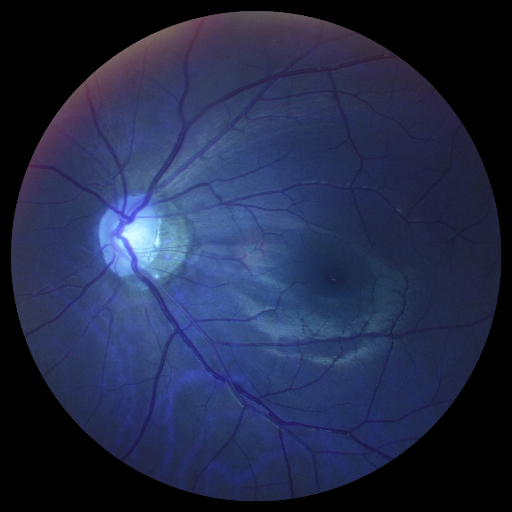}
	\centerline{\tiny (a) Images}
	\end{minipage}
	}
	\subfigure{
	\begin{minipage}[b]{0.12\linewidth}
    \includegraphics[width=1\linewidth]{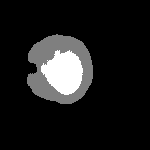}
  	\includegraphics[width=1\linewidth]{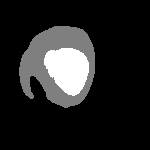}
  	\includegraphics[width=1\linewidth]{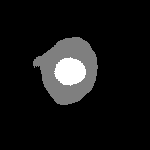}
  	\includegraphics[width=1\linewidth]{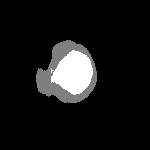}
  	\includegraphics[width=1\linewidth]{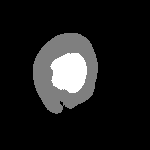}
	\centerline{\tiny (b) Unet}
	\end{minipage}
	}
	\subfigure{
	\begin{minipage}[b]{0.12\linewidth}
    \includegraphics[width=1\linewidth]{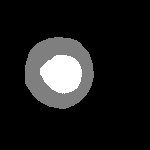}
  	\includegraphics[width=1\linewidth]{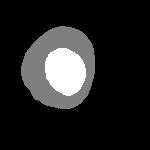}
  	\includegraphics[width=1\linewidth]{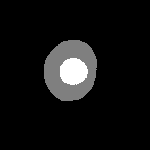}
  	\includegraphics[width=1\linewidth]{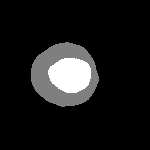}
  	\includegraphics[width=1\linewidth]{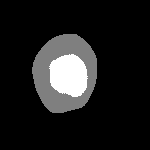}
	\centerline{\tiny (c) Unet with IELs}
	\end{minipage}
	}
	\subfigure{
	\begin{minipage}[b]{0.12\linewidth}
    \includegraphics[width=1\linewidth]{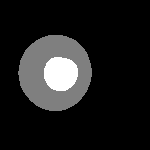}
  	\includegraphics[width=1\linewidth]{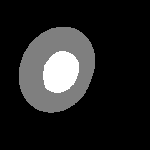}
  	\includegraphics[width=1\linewidth]{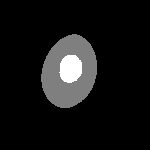}
  	\includegraphics[width=1\linewidth]{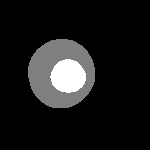}
  	\includegraphics[width=1\linewidth]{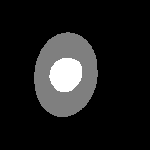}
	\centerline{\tiny (d) Ground truth}
	\end{minipage}
	}
	 \caption{Comparison between Unet \cite{ronneberger2015u} and Unet with curve-motion IELs on REFUGE dataset. (a) and (d) are the original images and the corresponding ground truth, respectively. (b) and (c) exhibit the segmentation results of the original Unet and Unet with curve-motion IELs, respectively.} \label{refuge_fig2}
\end{figure*}

\section{Theoretical Considerations for IELs}
\label{theoretical results}
In the following section, we elucidate the underlying principles governing the functionality of the IELs. We provide rigorous proof of the heat-diffusion IELs' effectiveness as regularizers, establishing their equivalence to a specific type of loss function. This demonstration offers a deeper understanding of their impact within the training process.
In addition, we also prove that training a neural network with appropriately designed IELs on data affected by label corruption is equivalent to training the neural network on clean labels under reasonable assumptions.

\subsection{Theoretical Results for the IELs as Regularizers}
In fact, the heat diffusion equation is often used in PDE-based models involving image denoising. The principle behind this is that heat diffusion equation is a gradient flow for minimizing the following energy
\begin{equation}
  \label{smooth_energy}
  E(u)=\int_\Omega |\nabla u|^2 \mbox{d}x
\end{equation}
More mathematically, we have the following energy decaying property for the solution $u$ of the heat diffusion equation,
\begin{equation}
  \int_\Omega |\nabla u(t+\Delta t,x)|^2 \mbox{d}x \le \int_\Omega |\nabla u(t,x)|^2 \mbox{d}x, \forall \Delta t>0.
\end{equation}
As discussed previously, the property will lead the solution $u$ to become smoother and smoother. As for our heat-diffusion IELs, we can also prove that it has a similar energy decaying property between its input and output. The theorem is organized as follows.
\begin{theorem}
\label{energy_decay_iels}
Define a discrete domain $\Omega_h: (h,2h,\cdots,M_1h)\times(h,2h,\cdots,M_2h)$. Let $U_{in}:\Omega_h\to \mathbb{R}^{M_1\times M_2}$ and $U_{out}:\Omega_h\to \mathbb{R}^{M_1\times M_2}$ be the input and the output of the heat-diffusion IELs. Define a discrete energy
\begin{equation}
  \label{discrete energy}
  E_h(U) := \sum_{i,j} |(\nabla_h U)_{i,j}|^2,
\end{equation}
where $(\nabla_h U)_{i,j}:= (\frac{U_{i+1,j}-U_{i,j}}{h}, \frac{U_{i,j+1}-U_{i,j}}{h}).$ Then we have the following discrete energy decaying property for $U_{in}$ and $U_{out}$:
\begin{equation}
  \label{discrete_energy}
  E_h(U^{in})  \le E_h(U^{out}), \quad \forall h, \Delta t>0.
\end{equation}
\end{theorem}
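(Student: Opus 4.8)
The plan is to recast the statement as a spectral inequality for the discrete Laplacian operator $F_h^\Delta$, exploiting that it is symmetric and negative semidefinite under the Neumann boundary convention \eqref{Neumann bc}. Writing the IEL compactly as $U_{out} = (I - \Delta t\, F_h^\Delta)\,U_{in}$ and introducing the discrete $L^2$ inner product $\langle U, V\rangle := \sum_{i,j} U_{i,j}V_{i,j}$, the whole argument reduces to comparing two quadratic forms built from a single symmetric operator.

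First I would establish a discrete Green's identity (summation by parts): under the boundary convention \eqref{Neumann bc}, the forward-difference energy satisfies
\begin{equation*}
  E_h(U) = \sum_{i,j}|(\nabla_h U)_{i,j}|^2 = -\langle U, F_h^\Delta U\rangle .
\end{equation*}
This follows from the matrix representation $F_h^\Delta(U) = \frac{1}{h^2}(U D_{M_2} + D_{M_1} U)$ together with the fact that each $D_{M_k}$ in \eqref{d_i} is symmetric and satisfies $-V^\top D_{M_k} V = \sum_{\ell}(V_{\ell+1}-V_\ell)^2 \ge 0$; summing the two directional contributions reproduces exactly the forward-difference energy, the $-1$ corner entries ensuring the boundary differences vanish. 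In particular $F_h^\Delta$ is symmetric and negative semidefinite, so $B := -F_h^\Delta$ is symmetric positive semidefinite and $E_h(U) = \langle U, B U\rangle$.

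Next I would diagonalize $B$. Since $B$ is symmetric and positive semidefinite, write $B = Q\Lambda Q^\top$ with orthonormal eigenvectors $q_k$ and eigenvalues $\lambda_k \ge 0$, and expand $U_{in} = \sum_k c_k q_k$. Then $U_{out} = (I + \Delta t\, B)U_{in} = \sum_k (1 + \Delta t\,\lambda_k)\, c_k\, q_k$, whence
\begin{equation*}
  E_h(U_{out}) - E_h(U_{in}) = \sum_k \lambda_k\big[(1+\Delta t\,\lambda_k)^2 - 1\big] c_k^2 = \sum_k \Delta t\,\lambda_k^2\,(2 + \Delta t\,\lambda_k)\, c_k^2 .
\end{equation*}
Because $\lambda_k \ge 0$ and $\Delta t > 0$, every summand is nonnegative, giving $E_h(U_{in}) \le E_h(U_{out})$ for all $h, \Delta t > 0$, as claimed. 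Equivalently, without diagonalizing, one may write the difference as $2\Delta t\,\langle U_{in}, B^2 U_{in}\rangle + \Delta t^2\,\langle U_{in}, B^3 U_{in}\rangle$ and observe that $B^2$ and $B^3$ are positive semidefinite.

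The main obstacle is the first step: proving the summation-by-parts identity cleanly while correctly handling the Neumann boundary rows of $D_{M_k}$ (the $-1$ entries in the corners of \eqref{d_i}), so that no boundary terms survive and the forward differences at the terminal indices are consistently zero. Once that identity and the negative semidefiniteness of $F_h^\Delta$ are secured, the remainder is the elementary eigenvalue computation above together with the sign inequality $\lambda_k^2(2+\Delta t\,\lambda_k)\ge 0$.
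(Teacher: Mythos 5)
Your proposal is correct and takes essentially the same route as the paper: your summation-by-parts identity $E_h(U) = -\langle U, F_h^\Delta U\rangle$ is exactly the paper's factorization $-D_{M_k} = \Lambda_{M_k}^T\Lambda_{M_k}$ (so that $B = -F_h^\Delta = \frac{1}{h^2}(A_x^TA_x + A_y^TA_y)$ and $U_{out} = (I+\Delta t\,B)U_{in}$), and your closed-form difference $2\Delta t\,\langle U_{in}, B^2 U_{in}\rangle + \Delta t^2\,\langle U_{in}, B^3 U_{in}\rangle$ coincides, via $V_{n+1}-V_n = \Delta t\,BV_n$, with the paper's expression $\frac{2}{\Delta t}\|V_{n+1}-V_n\|_2^2 + (V_{n+1}-V_n)^T B\,(V_{n+1}-V_n)$. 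The spectral diagonalization is merely an optional repackaging of the same positive-semidefiniteness argument, so no changes are needed.
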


\begin{proof}
It is sufficient to prove that $E_h(U^{n}) \le E_h(U^{n+1})$, where $U^{n+1} = U^{n} - F_h^{\Delta}(U^{n})\Delta t$.
Reshape both $U^{n}$ and $U^{n+1}$ in column-wise ordering to two vectors and denote the corresponding $Vec(U^{n})$ and $Vec(U^{n+1})$ as $V_n$ and $V_{n+1}$, respectively. Then we have
\begin{equation}
  \begin{aligned}
    E_h(U^{n}) &= \frac{1}{h^2}(V_n^TA_x^TA_xV_n + V_n^TA_y^TA_yV_n),\\
    E_h(U^{n+1}) &= \frac{1}{h^2}(V_{n+1}^TA_x^TA_xV_{n+1} + V_{n+1}^TA_x^TA_xV_{n+1}),
  \end{aligned}
\end{equation}
Here
$$
\begin{aligned}
  &A_x = \Lambda_{M_2}\otimes I_{M_1},\\
  &A_y = I_{M_2}\otimes\Lambda_{M_1},
\end{aligned}
$$
where $\otimes$ represents the Kronecker product. $I_{M_k}$ is an $M_k\times M_k$ identity matrix and
$$
\Lambda_{ M_k } = \left[ \begin{array} { c c c c c c } - 1 & 1 & & & & \\  & - 1 & 1 & & & \\ & & & \ddots & \ddots & \\ & & &  & - 1 & 1 \\  & & & & & 0 \end{array} \right]_{ M_k \times M_k }, \quad k=1, 2.
$$
Moreover, we have
\begin{equation}
  \label{V_diff}
  \begin{aligned}
    V_{n+1} & = Vec(U^n - F_h^{\Delta}(U^{n})\Delta t)\\
            &= V_n - Vec(F_h^{\Delta}(U^{n})\Delta t)\\
            & = V_n - Vec(\frac{\Delta t}{h^2}(UD_{M_2} +D_{M_1}U)) \quad(D_{M_i} \text{ is defined in } (\ref{d_i}))\\
            & = V_n - \frac{\Delta t}{h^2}(D_{M_2}\otimes I_{M_1} + I_{M_2}\otimes D_{M_1})V_n\\
            & = V_n + \frac{\Delta t}{h^2}((\Lambda_{ M_2 }^T\Lambda_{ M_2 })\otimes I_{M_1}^2 + I_{M_2}^2\otimes(\Lambda_{M_1}^T\Lambda_{M_1}))V_n\\
            & = V_n + \frac{\Delta t}{h^2}(A_x^TA_x+A_y^TA_y)V_n\\
  \end{aligned}
\end{equation}

By subtracting $V_n$ from both sides of equation \eqref{V_diff} and subsequently multiplying both sides by $(V_{n+1}-V_n)^T$, we obtain
\begin{equation}
  \label{V^2}
  (V_{n+1} - V_n)^T(V_{n+1} - V_n) = \frac{\Delta t}{h^2}(V_{n+1}-V_n)^T(A_x^TA_x+A_y^TA_y)V_n
\end{equation}
Note that
\begin{equation}
  \label{E_diff}
  \begin{aligned}
      E(U^{n+1}) - E(U^{n})&= \frac{1}{h^2}V_{n+1}^{T}(A_x^TA_x+A_y^TA_y)V_{n+1} - \frac{1}{h^2}V_n^{T}(A_x^TA_x+A_y^TA_y)V_n\\&=\frac{2}{h^2}(V_{n+1}-V_n)^T(A_x^TA_x+A_y^TA_y)V_n \\&+ \frac{1}{h^2}(V_{n+1}-V_n)^{T}(A_x^TA_x+A_y^TA_y)(V_{n+1}-V_n).
  \end{aligned}
\end{equation}
Combining \eqref{V^2} and \eqref{E_diff} , we have
\begin{equation}
  E(U^{n+1}) - E(U^{n}) = \frac{2}{\Delta t}(V_{n+1} - V_n)^T(V_{n+1} - V_n) + (V_{n+1}-V_n)^{T}(A_x^TA_x+A_y^TA_y)(V_{n+1}-V_n).
\end{equation}
Since $A_x^TA_x+A_y^TA_y$ is positive semi-definite. We conclude that $E(U^{n+1}) - E(U^{n}) \ge 0$
\end{proof}
Theorem \ref{energy_decay_iels} shows that the input and output of the heat-diffusion IELs indeed have a relationship on the energy of the original PDE. The energy defined in \eqref{discrete energy} can be viewed as a smoothness loss. During the training process, the IELs magnify this energy loss to compel neural networks to generate outputs of reduced energy loss. Since the inputs of the IELs are used for prediction, neural networks trained with the corresponding IELs will give prediction results with less energy loss. From this perspective, the neural network trained with the corresponding IELs can be viewed as an optimization result under both PDE-based energy loss and originial loss function, thus enjoys desired physical or mathematical properties.

% \begin{remark}
%   Theorem \ref{energy_decay_iels} shows that the heat-diffusion IELs indeed have energy decaying property corresponding to the original PDE and can act as an bad property amplifier as they exacerbate the unsmooth of the input. The inputs of IELs will be smoother than the outputs which are fitted results of labels under certain loss function. Since the inputs of IELs are used for prediction, the neural networks with IELs can be regarded as trained under extra smooth regularization. In fact, the energy defined in \ref{discrete_energy} can be viewed as a smoothness loss. According to Theorem \ref{energy_decay_iels}, we can expect the input of IELs will have a smaller smoothness loss than the output. In addition to the heat diffusion model, many PDE models such as variational models are based on the minimization of a certain type of energy to obtain good segmentation. In this sense, the neural networks trained with the corresponding IELs can be viewed as a fitted results under both PDE models and originial loss function, thus enjoy desired physical or mathematical properties.
% \end{remark}

\begin{remark}
  \label{implicit}
While our heat-diffusion IELs are derived from the explicit Euler method, which is known to have numerical stability issues, it is worth noting that the input of the heat-diffusion IELs can be viewed as the solution obtained through the implicit Euler method with output being the initial value. This ensures that the input possesses the desired physical properties under the evolutions.
\end{remark}

\subsection{Theoretical Results for Solving Noisy Label Issues}
\revised{In this part, we demonstrate that, under reasonable assumptions, achieving convergence of the loss function between the neural network's outputs after applying IELs and noisy labels implies convergence between the neural network's outputs and clean labels.}
\begin{theorem}
  \label{equivalence}
Let $N$ and $N_{epochs}$ be the numbers of training samples and epochs, respectively. Denote the true masks and noisy masks as $\{M_i\}_{i=1}^N$ and $\{\hat{M_i}\}_{i=1}^N$. Let $\{U_i^{in}\}_{i=1}^N$ and $\{U_i^{out}\}_{i=1}^N$ be the inputs and outputs of the inverse evolution layers. Assume that
\begin{itemize}
  \item there exists an evolution process $\mathcal{F}$ such that $M_i = \mathcal{F}(\hat{M_i}), i=1,2,\cdots,N$ and $\mathcal{F}$ is Lipschitz continuous, i.e. there exists a constant $K\ge 0$, such that
  \begin{equation}
    ||\mathcal{F}(x)-\mathcal{F}(y)||_2\le K||x-y||_2
  \end{equation}
  \item given any $U_i^{in}$, $||U_i^{in}-\mathcal{F}(L(U_i^{in}))||_2\to 0$ as $\Delta t \to 0$.
  \item $\sum_{i=1}^N||U_i^{out}-\hat{M_i}||_2\to 0$ as $N_{epochs} \to \infty$.
\end{itemize}
Then we have
\begin{equation}
  \sum_{i=1}^N||U_i^{in}-M_i||_2\to 0
\end{equation}
as $\Delta t \to 0$ and $N_{epochs} \to \infty$.
\end{theorem}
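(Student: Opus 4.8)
The plan is to reduce the statement to a single triangle-inequality estimate, using the noisy-to-clean relation $M_i = \mathcal{F}(\hat{M_i})$ from the first assumption together with the identification $U_i^{out} = L(U_i^{in})$. The guiding idea is to compare $U_i^{in}$ with $M_i = \mathcal{F}(\hat{M_i})$ by routing through the intermediate point $\mathcal{F}(L(U_i^{in})) = \mathcal{F}(U_i^{out})$, which is precisely the quantity that the second and third assumptions are designed to control.

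First I would fix an index $i$ and write, using $M_i = \mathcal{F}(\hat{M_i})$ and the triangle inequality,
\begin{equation*}
  \|U_i^{in} - M_i\|_2 \le \|U_i^{in} - \mathcal{F}(L(U_i^{in}))\|_2 + \|\mathcal{F}(L(U_i^{in})) - \mathcal{F}(\hat{M_i})\|_2.
\end{equation*}
The first term on the right is exactly the quantity appearing in the second assumption. For the second term, since $L(U_i^{in}) = U_i^{out}$, I would invoke the Lipschitz continuity of $\mathcal{F}$ to obtain $\|\mathcal{F}(U_i^{out}) - \mathcal{F}(\hat{M_i})\|_2 \le K\|U_i^{out} - \hat{M_i}\|_2$.

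Next I would sum over $i = 1, \dots, N$, yielding
\begin{equation*}
  \sum_{i=1}^N \|U_i^{in} - M_i\|_2 \le \sum_{i=1}^N \|U_i^{in} - \mathcal{F}(L(U_i^{in}))\|_2 + K\sum_{i=1}^N \|U_i^{out} - \hat{M_i}\|_2.
\end{equation*}
Because $N$ is fixed and finite, the first sum is a finite sum of terms each tending to $0$ as $\Delta t \to 0$ by the second assumption, hence the whole sum vanishes in that limit; the second sum vanishes as $N_{epochs}\to\infty$ by the third assumption, while $K$ remains a fixed constant. Taking both limits then gives the claim.

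The step I expect to require the most care is the joint limit: the two controlling assumptions act on different parameters ($\Delta t$ and $N_{epochs}$), and both $U_i^{in}$ and $U_i^{out}$ themselves depend on both parameters through the training dynamics. The clean argument works because the bound splits additively into one piece controlled purely by $\Delta t$ and one piece controlled purely by $N_{epochs}$, with a uniform constant $K$. I would make explicit that the finiteness of $N$ is what lets me pass from the termwise convergence in the second assumption to convergence of the sum, and I would emphasize that the displayed estimate holds for \emph{every} pair $(\Delta t, N_{epochs})$, so that the conclusion follows by taking the two limits without any interchange-of-limits subtlety.
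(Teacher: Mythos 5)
Your proposal is correct and is essentially identical to the paper's own proof: both route the triangle inequality through the intermediate point $\mathcal{F}(L(U_i^{in})) = \mathcal{F}(U_i^{out})$, apply the Lipschitz bound to get $K\|U_i^{out}-\hat{M_i}\|_2$, and conclude from the second and third assumptions. Your explicit remark that the estimate holds for every pair $(\Delta t, N_{epochs})$, so no interchange-of-limits issue arises, is a small clarification the paper leaves implicit, but the argument is the same.
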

\begin{proof}
\begin{equation}
  \begin{aligned}
    \sum_{i=1}^N||U_i^{in}-M_i||_2 &\le \sum_{i=1}^N(||U_i^{in}-\mathcal{F}(U_i^{out})||_2 + ||\mathcal{F}(U_i^{out})-M_i||_2)\\
    &=\sum_{i=1}^N(||U_i^{in}-\mathcal{F}(L(U_i^{in}))||_2 + ||\mathcal{F}(U_i^{out})-\mathcal{F}(\hat{M_i})||_2)\\
    &\le \sum_{i=1}^N(||U_i^{in}-\mathcal{F}(L(U_i^{in}))||_2 +  K||U_i^{out}-\hat{M_i}||_2).
  \end{aligned}
\end{equation}
From the second and third assumptions, we deduce that both $\sum_{i=1}^N||U_i^{in}-\mathcal{F}(L(U_i^{in}))||_2$ and $\sum_{i=1}^N ||U_i^{out}-\hat{M_i}||_2$ tend to 0 as $\Delta t \to 0$ and $N_{epochs} \to \infty$, which completes the proof.
\end{proof}

\begin{remark}
  The assumptions in this theorem are reasonable. For the first assumption, the mapping from $\hat{M_i}$ and $M_i$ should be Lipschitz continuous since two similar corrupted labels should come from two similar clean labels. The second assumption can be satisfied if the reverse process of the IELs can accurately approximate $\mathcal{F}$, which is the case for the heat-diffusion IELs as discussed in remark \ref{implicit}. The last assumption is grounded in the fact that neural networks are capable of fitting noisy labels well as the number of training epochs, $N_{epochs}$, approaches infinity.
\end{remark}

\begin{remark}
Indeed, once these assumptions are met, Theorem \ref{equivalence} suggests that if the outputs of IELs closely resemble the corrupted labels, then their inputs will closely resemble the uncorrupted labels. This implies that training on the corrupted labels, to some extent, is analogous to training on the clean labels, making the IELs a valuable tool for mitigating label corruption effects.
\end{remark}

\section{Conclusion and Future Work}
\label{conclusion}
This paper proposes inverse evolution layers (IELs) as a regularization technique for neural networks. The aim of the IELs is to guide neural networks to produce outputs with specific partial differential equation (PDE) priors, thereby endowing them with appropriate properties. IELs can be easily incorporated into various neural networks without introducing new learnable parameters. Moreover, the integration of IELs does not compromise the learning ability of neural networks. We demonstrate the effectiveness of the IELs through two applications: the heat-diffusion IELs to address the issue of noisy labels and the curve-motion IELs to incorporate convex shape regularization into neural networks' outputs. Our experiments have demonstrated the efficacy of these two types of IELs. In addition to the experiments, we theoretically develop our IELs approach by providing useful theorems to clarify the regularization capabilities of IELs and their effectiveness on corrupted labels. Overall, our proposed approach represents a remarkable contribution to the field of neural network regularization, offering a promising means of integrating PDE-based mathematical models into neural networks in a transparent, interpretable, and effective manner.

Regarding future work, we envisage that, apart from heat-diffusion and curve-motion IELs, a variety of other IELs could be formulated to equip neural networks with diverse mathematical or physical properties to address specific image processing tasks such as image inpainting, image generation and others. Additionally, extending IELs to other research domains such as natural language processing and generative models presents a promising area for further investigation.

\bibliographystyle{plain}
\bibliography{iels}
% \printbibliography
\end{document}